\documentclass{article}
\usepackage{max-style}

\usepackage[top=1in, bottom=1in, left=1in, right=1in]{geometry}

\usepackage{tikz}
\usepackage{pgfplots}
\usetikzlibrary{arrows.meta}
\pgfplotsset{compat=1.17}

\PassOptionsToPackage{numbers, compress}{natbib}

\usepackage{lipsum}

\usepackage[utf8]{inputenc}
\usepackage[T1]{fontenc}

\usepackage{longtable,tabularx}
\usepackage{microtype}
\usepackage{multicol}
\usepackage{multirow}
\usepackage{caption}
\usepackage{savesym}
\usepackage{algorithm}
\usepackage{algorithmic}
\usepackage{subfigure}

\usepackage{url}
\usepackage{mathrsfs}
\usepackage{nicefrac,bbm}
\usepackage{hyperref}
\hypersetup{
  colorlinks=true,
  linkcolor=green!30!black,
  linktoc=all,
  citecolor=blue,
  bookmarksnumbered=true,
  pdfproducer={},
  pdfinfo={CreationDate={2025},ModDate={2025}}, 
  pdfauthor={},
  pdfkeywords={},
  pdfsubject={},
  pdfcreator={},
  pdflang={}
}
\usepackage{cleveref}
\usepackage{mathtools,pifont,enumitem}
\usepackage[framemethod=TikZ]{mdframed}
\usepackage{color,fancybox,graphicx,subfigure}

\savesymbol{st}
\usepackage{soul}
\restoresymbol{SOUL}{st}
\usepackage[normalem]{ulem}

\usepackage{xurl}
\usepackage{scalerel}
\usetikzlibrary{math}
\usepackage{bbding}
\usepackage{array}
\usepackage{dsfont}

\usepackage{empheq}
\usepackage{stackengine}

\newtheorem{assumption}{Assumption}
\newtheorem{theorem}{Theorem}[section]
\newtheorem{lemma}[theorem]{Lemma}
\newtheorem{definition}[theorem]{Definition}

\newtheorem{corollary}[theorem]{Corollary}

\newtheorem{proposition}[theorem]{Proposition}

\newtheorem{remark}[theorem]{Remark}

\crefname{section}{Section}{Sections}
\crefname{theorem}{Theorem}{Theorems}
\crefname{assumption}{Assumption}{Assumptions}
\crefname{lemma}{Lemma}{Lemmas}
\crefname{definition}{Definition}{Definitions}
\crefname{conjecture}{Conjecture}{Conjectures}
\crefname{corollary}{Corollary}{Corollaries}
\crefname{construction}{Construction}{Constructions}
\crefname{claim}{Proposition}{Propositions}
\crefname{observation}{Observation}{Observations}
\crefname{proposition}{Proposition}{Propositions}
\crefname{fact}{Fact}{Facts}
\crefname{question}{Question}{Questions}
\crefname{problem}{Problem}{Problems}
\crefname{remark}{Remark}{Remarks}
\crefname{example}{Example}{Examples}
\crefname{equation}{Equation}{Equations}
\crefname{appendix}{Appendix}{Appendices}
\crefname{algorithm}{Algorithm}{Algorithms}
\crefname{model}{Model}{Models}
\crefname{figure}{Figure}{Figures}

\AfterEndEnvironment{definition}{\noindent\ignorespaces}
\AfterEndEnvironment{assumption}{\noindent\ignorespaces}
\AfterEndEnvironment{lemma}{\noindent\ignorespaces}
\AfterEndEnvironment{theorem}{\noindent\ignorespaces}
\AfterEndEnvironment{proposition}{\noindent\ignorespaces}
\AfterEndEnvironment{fact}{\noindent\ignorespaces}
\AfterEndEnvironment{question}{\noindent\ignorespaces}

\hypersetup{
  colorlinks=true,
  linkcolor=green!30!black,
  linktoc=all,
  citecolor=blue,
  bookmarksnumbered=true,
  pdfproducer={},
  pdfinfo={CreationDate={2025},ModDate={2025}}, 
  pdfauthor={},
  pdfkeywords={},
  pdfsubject={},
  pdfcreator={},
  pdflang={}
}

\newcommand{\mbf}[1]{\mathbf{#1}}

\DeclareMathOperator*{\E}{\mathbb{E}}
\DeclareMathOperator{\KL}{D_{KL}}
\DeclareMathOperator{\supp}{supp}

\savesymbol{Bbbk}

\newcommand{\R}{\mathbb{R}}

\newcommand{\cD}{\mathcal{D}}

\newcommand{\cX}{\mathcal{X}}
\newcommand{\cY}{\mathcal{Y}}

\renewcommand{\epsilon}{\varepsilon}

\renewcommand{\E}{\operatornamewithlimits{\mathbb{E}}}

\AfterEndEnvironment{definition}{\noindent\ignorespaces}
\AfterEndEnvironment{assumption}{\noindent\ignorespaces}
\AfterEndEnvironment{lemma}{\noindent\ignorespaces}
\AfterEndEnvironment{theorem}{\noindent\ignorespaces}
\AfterEndEnvironment{proposition}{\noindent\ignorespaces}
\AfterEndEnvironment{fact}{\noindent\ignorespaces}
\AfterEndEnvironment{question}{\noindent\ignorespaces}

\usepackage{color-edits}

\usepackage{xspace}

\addauthor[Aleksandra]{ak}{orange}
\addauthor[Hayoung]{hy}{olive}

\addauthor{jc}{purple}

\usepackage{xfrac}

\usepackage[most]{tcolorbox}
\usepackage{xcolor}

\newtcolorbox{blackbannerbox}[1][]{
  colback=gray!5,        
  colframe=black,        
  coltitle=white,        
  fonttitle=\bfseries,   
  colbacktitle=black,    
  enhanced,
  #1
}

\title{The Geometry of Alignment Collapse: When Fine-Tuning Breaks Safety}

\author{
    Max Springer\textsuperscript{1,*}, 
    Chung Peng Lee\textsuperscript{1}, 
    Blossom Metevier\textsuperscript{1}, 
    Jane Castleman\textsuperscript{1}, \\
    Bohdan Turbal\textsuperscript{1},
    Hayoung Jung\textsuperscript{1},
    Zeyu Shen\textsuperscript{1},
    Aleksandra Korolova\textsuperscript{1}
}

\affiliations{
    \textsuperscript{1}Princeton University \\
    \textsuperscript{*}Corresponding Author: \texttt{maxspringer@princeton.edu}
}

\paperabstract{

Fine-tuning aligned language models on entirely benign tasks (e.g. math tutoring) unpredictably degrades safety guardrails, even when training data contains no harmful content and developers have no adversarial intent. 
We show that the prevailing explanation, that fine-tuning updates should be orthogonal to safety-critical directions in high-dimensional parameter space, offers false reassurance: we show that this orthogonality is structurally unstable and collapses under the very dynamics of gradient descent. 
We then resolve this through a novel geometric analysis, proving that alignment concentrates in low-dimensional subspaces with sharp curvature, creating a brittle structure that first-order methods cannot detect or defend. 
While initial fine-tuning updates may indeed avoid these subspaces, the curvature of the fine-tuning loss generates second-order acceleration that systematically steers trajectories into alignment-sensitive regions—an effect invisible to all existing defenses. 
We formalize this mechanism through the \textbf{\underline{A}lignment \underline{I}nstability \underline{C}ondition (AIC)}, three geometric properties that, when jointly satisfied, lead to safety degradation. 
Our main result establishes a quartic scaling law: alignment loss grows with the fourth power of training time, governed by the sharpness of alignment geometry and the strength of curvature coupling between the fine-tuning task and safety-critical parameters.
These results expose a structural blind spot in the current safety paradigm. The dominant approaches to safe fine-tuning of null-space projections, gradient filtering, and first-order constraints address only the initial snapshot of a fundamentally dynamic problem. 
Alignment fragility is not a bug to be patched; it is an intrinsic geometric property of gradient descent on curved manifolds. Our results motivate the development of curvature-aware methods, and we hope will further enable a shift in alignment safety analysis from reactive red-teaming to predictive diagnostics for open-weight model deployment.}

\date{\today}

\begin{document}

\maketitle




\section{Introduction} \label{sec:intro}

Fine-tuning a language model on a dataset of math problems breaks its safety guardrails~\cite{cobbe2021training}.
Training on creative writing examples erodes medical advice safeguards~\cite{alkaeed2025open, yang2025adversarial}.
Adapting for code generation compromises refusal mechanisms~\cite{betley2025emergent}.
These are not hypothetical risks\textemdash{}they are documented, reproducible phenomena that occur even when fine-tuning datasets contain no harmful content and developers have no adversarial intent
~\cite{betley2025emergent,guan2025benign,he2024what,qi2023fine}.
The central puzzle of alignment preservation is thus:
\begin{center}
    \emph{Why does adapting a model for one capability seemingly unpredictably degrade unrelated safety properties?}
\end{center}
The question is urgent because fine-tuning has become the dominant paradigm for deploying language models.
In domains where technical expertise, compute resources, and specialized data are limited, fine-tuning open weight models represents the primary path to leverage AI capabilities for practitioners.
%
The ecosystem of open-weight models has expanded rapidly~\cite{bhandari2025forecasting,burton2024large,choksi2025brief,kapoor2024societal,osborne2024ai}, enabling innovation but also creating vulnerability.
Current defenses are largely reactive, targeting specific attack patterns~\cite{dong2025safeguarding} or imposing first-order gradient constraints~\cite{das2025alignguard,niu2025mitigating,zhang2025guardrail} that easily break even without adversarial intent~\cite{hsiung2025your,yang2025alleviating}.
Without understanding the underlying mechanisms of alignment degradation, we cannot build robust safeguards.


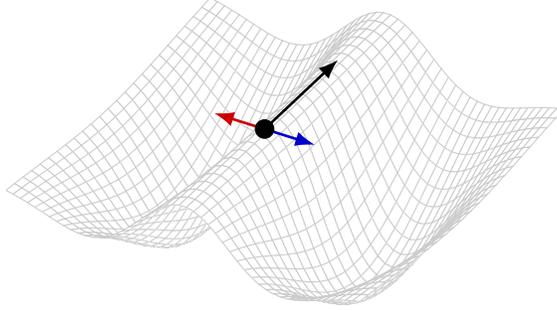
\begin{figure}[t]
    \centering
    \begin{tikzpicture}
        \begin{axis}[
            view={30}{65},
            hide axis,
            axis lines=none,
            width=9cm, height=7cm,
            z buffer=sort,
            xmin=-3.5, xmax=3.5,
            ymin=-1, ymax=4,
            zmin=-2, zmax=1,
            colormap={bw}{gray(0cm)=(1); gray(1cm)=(1)},
            mesh/interior colormap name=bw
        ]

        %
        \addplot3 [
            surf,
            shader=faceted interp,
            fill=white,
            draw=black,
            thin,
            domain=-3.5:3.5,
            domain y=-1:4,
            samples=50,    
            samples y=25,  
        ] 
        { -1 + (1 - 3.5 * x^2 * exp(-0.2*(y-1.5)^2)) * exp(-0.35*x^2) };

        \addplot3 [
            only marks,
            mark=*,
            mark size=3.5pt,
            mark options={fill=black}
        ] coordinates {(0, 1, 0)};
        
        \draw[->, >=Latex, line width=1pt, black] 
            (axis cs: 0, 1, 0) -- (axis cs: 0, 2.8, 0);

        \draw[->, >=Latex, line width=1pt, red!80!black] 
            (axis cs: 0, 1, 0) -- (axis cs: -1.0, 1, 0);

        \draw[->, >=Latex, line width=1pt, blue!80!black] 
            (axis cs: 0, 1, 0) -- (axis cs: 1.0, 1, 0);

        \end{axis}
    \end{tikzpicture}
    \caption{Local alignment instability under fine-tuning dynamics.
    An illustrative loss landscape showing a fine-tuning trajectory (black arrow) evolving near a ridge separating basins of low alignment utility. Although the initial gradient direction is nearly tangent to the ridge, induced acceleration toward a high-curvature alignment-sensitive direction (red/blue arrow), leads to departure.}
\end{figure}

\textbf{The Paradox of High-Dimensional Orthogonality.}
Empirical evidence reveals a striking pattern: alignment appears to reside in a small subspace of the parameter space.
\cite{wei2024assessing} demonstrates that pruning just $3\%$ of specific weights triggers catastrophic safety collapse, while removing similar proportions of weights from other parts of the model leaves its capabilities intact.
We call these directions \emph{alignment-sensitive}: parameter updates along them degrade safety, while updates in orthogonal directions should preserve it.
This suggests alignment is governed by a low-dimensional set of sensitive directions. 
In a model with billions of parameters, standard geometric intuition predicts that fine-tuning updates for unrelated tasks should be nearly orthogonal to these directions, thus, safe~\cite{arturi2025shared,li2018measuring}. 
Yet degradation occurs frequently across diverse tasks and model families, exhibiting consistent patterns~\cite{betley2025emergent,he2024what,qi2023fine}.

Existing theoretical frameworks do not fully resolve 
this paradox. 
%
Approaches based on data overlap~\cite{he2024what,hsiung2025your,yang2025alleviating} cannot fully explain why superficially similar tasks have vastly different safety impacts.
First-order geometric constraints assume static sensitivity subspaces and fail to account for how fine-tuning trajectories evolve over time~\cite{arturi2025shared,rajabi2024enhancing}.
Null-space methods~\cite{niu2025mitigating,zhang2025guardrail} presuppose we can identify and avoid alignment-sensitive directions but offer no explanation for why avoidance fails in practice.

\textbf{Our Contributions.}
We resolve this paradox by developing a theory of alignment degradation grounded in the curvature of trajectories in parameter space.
Our key insight is that while initial fine-tuning updates may indeed be orthogonal to alignment-sensitive
directions, this orthogonality is unstable under the dynamics of gradient descent.
The curvature of the fine-tuning objective\textemdash{}captured by second-order terms in the loss landscape\textemdash{}systematically bends trajectories into alignment-sensitive subspaces, even when first-order updates point away from them.

We formalize this mechanism through the \textbf{\underline{A}lignment \underline{I}nstability \underline{C}ondition (AIC)}, which identifies three interacting structural properties arising from the model's geometry and the fine-tuning task.
\begin{enumerate}
    \item \textbf{Low-Rank Sensitivity:} alignment for a given skill (or capability) concentrates in a small subspace with sharp curvature, characterized by the leading eigenvectors of the Fisher information matrix.
    \item \textbf{Initial Orthogonality:} fine-tuning gradients for unrelated tasks have negligible first-order projection onto this subspace, creating the illusion of safety.
    \item \textbf{Curvature Coupling:} second-order terms in the gradient flow induce nontrivial acceleration into the alignment-sensitive subspace, forcing trajectories to drift despite initial orthogonality.
\end{enumerate}
Under the AIC, we prove that alignment degradation is not merely possible, but structurally inevitable within the trust region of the aligned model.
Our main theoretical results are:
\begin{theorem}[Informal Version of Theorem~\ref{thm:util-bound}]
    Movement in the alignment-sensitive subspace $M_i\subseteq \mathbb{R}^n$ incurs quadratic utility loss, $\Omega(\lambda\|P_i(\Delta\theta)\|^2)$ where $\lambda$ is the minimum curvature in $M_i$ and $P_i$ projects onto this subspace.
\end{theorem}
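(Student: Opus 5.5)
The plan is a second-order Taylor argument for the alignment utility around the aligned parameters $\theta^*$, combined with a spectral lower bound on the curvature restricted to $M_i$. Write $U_i(\theta)$ for the alignment utility of skill $i$ and $\theta^*$ for the aligned model. I will assume the model has been aligned, so $\theta^*$ is a local maximum of $U_i$. Then $\nabla U_i(\theta^*) = 0$, or at least its gradient is negligible in the sense used by the Low-Rank Sensitivity condition. I also take the Hessian of the alignment loss at $\theta^*$ to be identified with, or bounded below by, the Fisher information $F_i$. This identification is standard at a well-specified optimum, where the Fisher equals the expected Hessian of the negative log-likelihood.

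Expanding $U_i(\theta^*+\Delta\theta)$ to second order with Lagrange remainder gives
\[
U_i(\theta^*) - U_i(\theta^*+\Delta\theta) = \tfrac12 \Delta\theta^\top H \Delta\theta + R(\Delta\theta), \qquad |R(\Delta\theta)| \le \tfrac{L}{6}\|\Delta\theta\|^3,
\]
where $H = -\nabla^2 U_i(\theta^*)$ and $L$ is a Lipschitz constant for the Hessian on the trust region.

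Next, split $\Delta\theta = P_i\Delta\theta + (I-P_i)\Delta\theta$, where $M_i$ is spanned by the top eigenvectors of $H$ (or $F_i$) with eigenvalues at least $\lambda$. If $M_i$ is an invariant subspace of $H$, the cross terms vanish and the quadratic form splits as
\[
\Delta\theta^\top H\Delta\theta = (P_i\Delta\theta)^\top H (P_i\Delta\theta) + ((I-P_i)\Delta\theta)^\top H ((I-P_i)\Delta\theta).
\]
The first term is at least $\lambda\|P_i\Delta\theta\|^2$ by Rayleigh quotients on $M_i$. The second term is nonnegative because $H \succeq 0$ at a local maximum of $U_i$. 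If $M_i$ is only approximately invariant, I will bound the cross term by $2\|P_i H (I-P_i)\|\,\|P_i\Delta\theta\|\,\|(I-P_i)\Delta\theta\|$. I will then absorb it via Young's inequality into $\tfrac{\lambda}{2}\|P_i\Delta\theta\|^2$, at the cost of a term in the orthogonal component that is controlled by a small off-diagonal coupling assumption.

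The main obstacle is the cubic remainder. It scales with $\|\Delta\theta\|^3$, not with $\|P_i\Delta\theta\|^3$, so when most of the motion is orthogonal to $M_i$ it can swamp the $\lambda\|P_i\Delta\theta\|^2$ term. I would first try to restrict to the trust region $\|\Delta\theta\| \le r$ with $r \le \tfrac{3\lambda}{2L}\cdot \|P_i\Delta\theta\|^2/\|\Delta\theta\|^2$. That restriction is awkward, so the cleaner fallback is to use a remainder that is small relative to the quadratic term, which is the form the ``trust region of the aligned model'' phrasing suggests. A second option is to bound the third derivative along $M_i$ separately and absorb the orthogonal contributions into the nonnegative orthogonal quadratic term. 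Either way, the conclusion is
\[
U_i(\theta^*) - U_i(\theta^*+\Delta\theta) \ge c\,\lambda\|P_i\Delta\theta\|^2 = \Omega\bigl(\lambda\|P_i(\Delta\theta)\|^2\bigr)
\]
for an absolute constant $c$, for example $c = 1/4$.
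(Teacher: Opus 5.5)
Your argument is essentially the paper's. You expand $\Delta u_i$ to second order at the stationary point $\theta^*$ with the Fisher as Hessian, lower-bound $\Delta\theta^\top \mathbf{F}_i\Delta\theta \ge \lambda\|P_i\Delta\theta\|^2$ in the eigenbasis (there are no cross terms, since $M_i$ is by definition an eigenspace of $\mathbf{F}_i$), and control the cubic remainder by requiring $\|\Delta\theta\|^3$ to be at most a constant times $\lambda\|P_i\Delta\theta\|^2$. That last condition is exactly your ``awkward'' first option, and it is the paper's requirement that $\|\Delta\theta\|$ be small relative to $\delta\le\|P_i\Delta\theta\|$.

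Keep that option and drop both fallbacks. The first fallback simply assumes what needs proving. The second fails because mixed cubic terms of size $\|P_i\Delta\theta\|\,\|(I-P_i)\Delta\theta\|^2$ cannot be absorbed into an orthogonal quadratic term that AIC Condition 1 allows to vanish. For example, a Gaussian mean family can realize $\Delta u_i=\tfrac{\lambda}{2}(a-b^2/\lambda)^2$, which is zero on the curve $a=b^2/\lambda$ while $\lambda\|P_i\Delta\theta\|^2=b^4/\lambda>0$. So the coupling between the total step and the projected step is genuinely necessary.
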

\begin{theorem}[Informal Version of Theorem~\ref{thm:projection}]
    Along gradient flow trajectories, $\theta(t)$ where $\theta(0) = \theta^*$, satisfying the AIC, projection on $M_i$ grows quadratically: $\|P_i(\theta(t) - \theta^*)\| = \Omega(\gamma t^2)$, where $\gamma$ quantifies curvature coupling.
\end{theorem}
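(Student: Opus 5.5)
The plan is a second-order Taylor expansion of the gradient-flow trajectory. Write the fine-tuning flow as $\dot\theta(t) = -\nabla L_{\mathrm{ft}}(\theta(t))$ with $\theta(0)=\theta^*$, and set $\Delta\theta(t)=\theta(t)-\theta^*$. Assuming $L_{\mathrm{ft}}$ is $C^3$ near $\theta^*$, the trajectory is $C^3$, and we can expand
\[
\Delta\theta(t) = t\,\dot\theta(0) + \tfrac{t^2}{2}\,\ddot\theta(0) + R(t), \qquad \|R(t)\| \le \tfrac{t^3}{6}\sup_{s\le t}\|\dddot\theta(s)\|.
\]
Differentiating the flow gives $\dot\theta(0) = -g$ with $g=\nabla L_{\mathrm{ft}}(\theta^*)$. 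A second differentiation gives $\ddot\theta(0) = -H_{\mathrm{ft}}\dot\theta(0) = H_{\mathrm{ft}} g$, where $H_{\mathrm{ft}}=\nabla^2 L_{\mathrm{ft}}(\theta^*)$. We then apply the projection $P_i$ onto the alignment-sensitive subspace $M_i$ term by term.

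The three AIC conditions enter one per term. \emph{Initial Orthogonality} gives $\|P_i g\|\le \epsilon$ with $\epsilon$ small; in the clean version $P_i g=0$, so the linear term contributes at most $\epsilon t$. \emph{Curvature Coupling} should be exactly the statement $\|P_i H_{\mathrm{ft}} g\| \ge \gamma$ (or $2\gamma$, depending on normalization), i.e.\ the acceleration $P_i\ddot\theta(0)$ is bounded below. For the remainder we use a local bound on the third derivative of the flow, $\|\dddot\theta\|\le C$ on the trust region. This follows from bounded gradient, Hessian and third derivative of $L_{\mathrm{ft}}$ there, since $\dddot\theta = -\nabla^3L_{\mathrm{ft}}[\dot\theta,\dot\theta] - H_{\mathrm{ft}}\ddot\theta$. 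The reverse triangle inequality then yields
\[
\|P_i\Delta\theta(t)\| \;\ge\; \tfrac{t^2}{2}\|P_i H_{\mathrm{ft}} g\| - \epsilon t - \tfrac{C}{6}t^3 .
\]

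The final step chooses the time window. For $t\in[t_0, t_1]$ with $t_0 \asymp \epsilon/\gamma$ and $t_1 \asymp \gamma/C$, the quadratic term dominates both error terms, which gives $\|P_i\Delta\theta(t)\|\ge c\,\gamma t^2$ for an absolute constant $c$. This window is nonempty precisely when $\epsilon C \lesssim \gamma^2$, the quantitative form of "orthogonality is negligible relative to coupling." We also check that $\theta(t)$ remains in the trust region on this window, using $\|\Delta\theta(t)\|\le t\sup\|\nabla L_{\mathrm{ft}}\|$.

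The main obstacle is making the error control uniform and honest. The orthogonality and coupling conditions are stated at $\theta^*$, but the remainder requires derivative bounds along the whole path. We also have to ensure that the $\Omega(\gamma t^2)$ claim is stated only on the window above: for large $t$ the cubic term, or convergence of the flow, can reverse the growth. If the AIC in Theorem~\ref{thm:projection} is phrased as a coupling lower bound holding uniformly on a neighborhood, we would instead integrate twice, $P_i\Delta\theta(t)=\int_0^t\int_0^s P_i\ddot\theta(r)\,dr\,ds - tP_ig$. This avoids the third-derivative bound, but we would then need the direction of $P_i\ddot\theta(r)$ to stay roughly aligned with $P_i\ddot\theta(0)$ so that the contributions do not cancel. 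I would try the Taylor route first and fall back on this integral form if needed.
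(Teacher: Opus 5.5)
Your proposal is correct and follows essentially the same route as the paper: a second-order Taylor expansion of the gradient flow with $\ddot\theta(0)=\nabla g(\theta^*)g(\theta^*)$, then projection and the reverse triangle inequality, with the linear term controlled by initial orthogonality and an $O(t^3)$ remainder. There are two differences. First, the paper's Condition~3 (and its formal Theorem~\ref{thm:projection}) uses the Fisher-weighted map $\mbf{F}_i^{1/2}P_i$, so under the paper's AIC your unweighted bound holds with $\gamma$ replaced by $\gamma/\sqrt{\lambda_1}$, via $\|P_i v\|\ge\|\mbf{F}_i^{1/2}P_i v\|/\sqrt{\lambda_1}$. Second, your explicit time window (from order $\epsilon/\gamma$ up to order $\gamma/C$) makes precise a regime the paper only invokes informally as ``once the quadratic term dominates.''
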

The two main theorems combine to yield a quartic scaling law for alignment degradation within our utility-based framework (defined in Section~\ref{sec:model}).
\begin{corollary}[Informal Version of Corollary~\ref{cor:quartic_onset}]
    Combining geometric loss with quadratic drift yields alignment degradation scaling as $\Omega(\lambda \gamma^2 t^4)$ in early training.
\end{corollary}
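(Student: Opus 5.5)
The plan is to compose the two informal theorems directly. The natural argument substitutes the displacement bound of Theorem~\ref{thm:projection} into the utility-loss bound of Theorem~\ref{thm:util-bound}.

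\textbf{Step 1: invoke the utility bound.} Set $\Delta\theta(t) = \theta(t) - \theta^*$. We restrict to a time window $[0,T]$ on which $\theta(t)$ stays inside the trust region (neighborhood) of the aligned model where Theorem~\ref{thm:util-bound} applies. There we get
\[
U(\theta^*) - U(\theta(t)) \;\ge\; c_1 \lambda \|P_i \Delta\theta(t)\|^2 ,
\]
where $c_1>0$ is a constant. This is the step to check most carefully. The formal utility bound probably carries error terms: a first-order term, which vanishes because $\theta^*$ is a local optimum of the alignment utility, and a cubic Taylor remainder $O(\|\Delta\theta\|^3)$. The remainder must be dominated by the quadratic term, so I will need an explicit condition such as $\|\Delta\theta(t)\| \le \lambda/(2L_3)$. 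Here $L_3$ bounds the third derivative, which is what the trust-region hypothesis should supply.

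\textbf{Step 2: invoke the drift bound.} Under the AIC, Theorem~\ref{thm:projection} gives $\|P_i\Delta\theta(t)\| \ge c_2 \gamma t^2$ for $t\in[0,T']$. Mechanically, initial orthogonality makes $P_i\dot\theta(0)\approx 0$, while curvature coupling gives $\|P_i\ddot\theta(0)\| \ge \gamma$. A second-order Taylor expansion of the trajectory then yields $P_i\Delta\theta(t) = \tfrac{t^2}{2}P_i\ddot\theta(0) + O(\epsilon t + t^3)$. The early-training window is where the $t^2$ term beats the $\epsilon t$ and cubic corrections, i.e.\ $t\in[t_0,T']$ with $t_0 \sim \epsilon/\gamma$ (or $t_0=0$ if exact orthogonality holds) and $T' \sim \gamma/L$.

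\textbf{Step 3: combine.} Squaring the drift bound and inserting it into Step~1 gives
\[
U(\theta^*) - U(\theta(t)) \ge c_1 c_2^2\, \lambda \gamma^2 t^4 \quad\text{on } [t_0, \min(T,T')],
\]
which is the claimed $\Omega(\lambda\gamma^2 t^4)$ onset.

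\textbf{Main obstacle.} The main obstacle is reconciling the two validity windows. The full displacement $\|\Delta\theta(t)\|$, not just its projection, has to stay small enough for Step~1's remainder control. Since $\|\Delta\theta(t)\|\le t\sup\|\nabla L_{\mathrm{ft}}\|$, I would first try bounding the window by $T \lesssim \lambda/(L_3 G)$, where $G$ bounds the fine-tuning gradient. I would then check that this window is nonempty and compatible with $t_0$ under the AIC parameters; if it is not, the corollary should be stated with an explicit early-time window.
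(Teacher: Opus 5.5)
Your route (substitute the drift bound of Theorem~\ref{thm:projection} into the utility bound of Theorem~\ref{thm:util-bound}) is exactly the paper's, and you correctly located where it is weak. However, your repair points the wrong way, and the appendix proof has the same hole: it writes $\frac12\|\mbf{F}_i^{1/2}P_i\Delta\theta\|^2 - O(t^3) = \Omega(\gamma^2 t^4)$, which is false as $t\to 0$ because $t^3 \gg t^4$.

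The cubic remainder must be dominated by the quadratic term you actually have, $\frac{\lambda}{2}\|P_i\Delta\theta\|^2$, not by $\frac{\lambda}{2}\|\Delta\theta\|^2$. Off $M_i$ the Fisher curvature is at most $\epsilon$, so a condition like $\|\Delta\theta\|\le \lambda/(2L_3)$ buys nothing. The correct requirement is $L_3\|\Delta\theta(t)\|^3 \le c\,\lambda\|P_i\Delta\theta(t)\|^2$. Since $\gamma>0$ forces $g(\theta^*)\neq 0$, the full displacement is $\|\Delta\theta(t)\| = t\|g(\theta^*)\| + O(t^2)$, dominated by the orthogonal velocity, while $\|P_i\Delta\theta(t)\| = \Theta(\gamma t^2)$. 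The requirement therefore says $t$ must be at least of order $L_3 G^3/(\lambda\gamma^2)$. That is a \emph{lower} bound on $t$, opposite to your $T$, and it generally contradicts the small-$t$ Taylor window. So in early training the composed bound is vacuous, not merely weak.

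This cannot be fixed by tracking windows, because the statement fails under the stated hypotheses. Take one prompt with $\pi_\theta(y{=}1\mid x) = \sigma(\phi(\theta))$, where $\phi(\theta) = \theta_1 + \frac{c}{2}\theta_2^2$. Let $\theta^* = 0$, let $\cD_i$ be uniform on $\{0,1\}$, and let $L_{\textsc{ft}}(\theta) = -\theta_2 + c\,\theta_1\theta_2$.

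Then $\mbf{F}_i(\theta^*) = \frac14 e_1e_1^\top$, $g(\theta^*) = -e_2$ and $\nabla g(\theta^*)g(\theta^*) = -c\,e_1$. Hence the AIC holds with $d=1$, $\lambda=\frac14$, $\gamma=\frac{|c|}{2}$ and any $\epsilon>0$.

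The flow is $\theta_1(t) = (1-\cosh ct)/c$, $\theta_2(t) = \sinh(ct)/c$. Theorem~\ref{thm:projection} is tight here: $\|\mbf{F}_i^{1/2}P_i\Delta\theta(t)\| = (\cosh ct-1)/(2|c|)\approx \frac{\gamma}{2}t^2$. Yet $\phi(\theta(t)) = (\cosh ct-1)^2/(2c) = O(t^4)$, so $\Delta u_i(\theta(t)) = \log\cosh(\phi/2) = O(t^8)$, not $\Omega(\lambda\gamma^2t^4)$.

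The cubic Taylor term $\frac{c}{8}\theta_1\theta_2^2$ of $\Delta u_i$ is part of what Theorem~\ref{thm:util-bound} discards as $O(\|\Delta\theta\|^3)$. On the trajectory it is of order $t^4$, and together with the quartic term it exactly cancels $\frac12\|\mbf{F}_i^{1/2}P_i\Delta\theta\|^2$.

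The missing idea is to expand $h(t)=\Delta u_i(\theta(t))$ directly in $t$. If $\mbf{F}_i(\theta^*)g(\theta^*)=0$, then $h(t) = \frac{t^4}{8}\E_x\sum_y \ddot\pi(y|x)^2/\pi_{\theta^*}(y|x) + O(t^5)$, where $\ddot\pi = \nabla_\theta\pi_{\theta^*}\cdot\nabla g(\theta^*)g(\theta^*) + g(\theta^*)^\top\nabla^2_\theta\pi_{\theta^*}\,g(\theta^*)$. Condition~3 controls only the first summand: its $\chi^2$-norm is $\|\mbf{F}_i^{1/2}\nabla g(\theta^*)g(\theta^*)\|\ge\gamma$. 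The second summand, the second-order policy change along the Fisher-null velocity, can cancel it. A quartic lower bound therefore needs an extra hypothesis that keeps the full $\ddot\pi$ away from zero.
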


The quartic scaling law provides a principled explanation for rapid, seemingly sudden safety failures observed in practice~\cite{fraser2025fine,peng2024navigating,qi2024evaluating,wei2024assessing}.
Even small amounts of curvature coupling ($\gamma$) compound dramatically through the fourth power, causing trajectories to escape safety basins faster than first-order analysis would predict.

\textbf{Implications.}
Our theory reveals that alignment fragility is not an artifact of adversarial attacks or fixable through better data curation. 
%
Rather, it is an intrinsic geometric property of fine-tuning in high-dimensions. The AIC explains why: (1) benign tasks break safety, (2) low-rank adaptations fail, and (3) first-order defenses are insufficient.
Our results necessitate a fundamental rethinking of alignment preservation strategies.
In Section~\ref{sec:exp}, we empirically validate these predictions, showing that our notion of geometric overlap provides a principled explanation for degradation of alignment.

\section{Related Work}

Our work is positioned at the intersection of several research domains: the geometric analysis of neural network parameter spaces, the empirical study of alignment fragility, and the theoretical frameworks of multi-task and meta-learning.
We here discuss related studies which were not covered in the main text.

\paragraph{Geometric and Manifold-Based Approaches.}
The conceptualization of neural network representations and parameter spaces through a geometric lens is well-established~\cite{amari1997information,sagun2017empirical,thiruthummal2024information}.
This line of inquiry is broadly motivated by the manifold hypothesis~\cite{rifai2011manifold}, which posits that high-dimensional data, such as inputs for unsupervised learning or distinct classes for classification, concentrates near low-dimensional manifolds separated by regions of low density.
More recently, this geometric perspective has been directly applied to the process of fine-tuning~\cite{mantri2024rethinking}.
Specifically, Low-Rank Adaptation (LoRA) has been theoretically framed as a problem of manifold geometry~\cite{jiangloram}.
This aligns with foundational work in numerical optimization that analyzes fixed-rank matrices as an embedded geometry, even outside the context of model fine-tuning~\cite{rifai2011manifold}.

While these approaches provide valuable intuition, they often treat alignment geometry as static.
Recent work has attempted to identify null-spaces to the alignment weight subspace in an effort to constrain fine-tuning~\cite{zhang2025guardrail}.
However, these methods typically assume a flat local geometry that does not account for the empirical fragility of alignment in high-dimensions\textemdash{}a gap our work addresses by integrating this geometric view with a probabilistic analysis of curvature-induced drift.

\paragraph{Alignment Drift and Critical Parameters.}
Empirical research confirms that fine-tuning modifies only a small subset of the representational subspace, presumably preserving the pre-trained model's broad capabilities.
However, this preservation is asymmetric: while general capabilities may persist, aligned guardrails are uniquely brittle.
\cite{hsiung2025your} demonstrate that this degradation is systematic, showing that the impact of fine-tuning varies dramatically depending on the specific task structure, a phenomenon further categorized in broader surveys of trustworthy AI~\cite{liu2024safety}. 
Critically, this degradation occurs even on seemingly benign tasks such as solving math word problems~\cite{cobbe2021training} or standard instruction following~\cite{guan2025benign,he2024what, qi2023fine} where seemingly no adversarial intent exists. 

More critically,~\cite{das2025alignguard} introduce the ``DriftCheck'' dataset, demonstrating that even minor LoRA updates induce significant alignment drift. 
%
In fine-tuning with this dataset, the identify alignment critical parameters that are most relevant to maintaining safety behavior, finding that an overlap between the fine-tuning data and safety distributions accelerates this drift by overwriting fragile weight regions.
This observation connects directly with the safety relevant weights hypothesis of~\cite{wei2024assessing}, which assesses the brittleness of safety alignment via pruning, showing that removing specific weights triggers rapid safety collapse.
\citep{he2024what} and \citep{guan2025benign} further demonstrate that fine-tuning on benign data can compromise safety even without adversarial intent.
Our proposal seeks to explore the theoretical mechanism, which we refer to as the \textbf{Alignment Instability Condition}, that explains why these critical parameters are susceptible to drift even when the ecosystem of open-model development is expanding rapidly~\cite{osborne2024ai}.

\paragraph{Theoretical Frameworks for Skill Emergence and Adaptation.}
Our decomposition of alignment into distinct utility functions is grounded in recent theoretical advances regarding skill emergence in large language models. 
\cite{arora2023theory} formally model this phenomenon, proposing that complex skills emerge from the scaling of random feature combinations that allow the model to distinguish between increasingly granular tasks. 
This skill-based view is further supported by~\cite{malladi2023kernel}, who provide a kernel-based analysis of how fine-tuning modifies these underlying feature mappings. 
Our work adopts this foundational perspective by viewing alignment features not as global constraints but as distinct, emergent skills.
However, we reframe the problemfrom acquisition to preservation. 
While~\cite{arora2023theory} explain how these high-dimensional skills arise, our geometric theory addresses the inverse problem: why specific skills possess a unique structural fragility that makes them susceptible to catastrophic forgetting even when general capabilities remain intact.
This fragility is further understood by examining the intersection of our work with meta-learning theory. 
The standard paradigm for multi-task learning, as established in~\cite{du2020few,tripuraneni2021provable}, posits that diverse tasks can be solved by learning a single, shared low-dimensional representation, upon which task-specific linear heads are trained. 
In this idealized setting, fine-tuning is safe because the underlying representation is fixed. 
However, practical fine-tuning violates this assumption by updating the representation itself. \cite{chua2021fine} address this by relaxing the requirement to an approximate shared representation, where task-specific parameters are constrained to remain within a small distance of the central meta-parameters.
While~\cite{chua2021fine} prove that this relaxation allows for effective adaptation, our AIC reveals a critical blind spot in this framework regarding safety. 
We demonstrate that on curved manifolds, the allowable deviation cannot be isotropic. 
Due to the high curvature associated with alignment skills, even valid updates that satisfy the distance constraint for a new task can induce large, non-linear drifts in the safety subspace. 
Thus, our work challenges the sufficiency of the approximate shared representation hypothesis for safety-critical systems, showing that geometric curvature couples benign task updates with more harmful alignment degradation in ways that standard meta-learning bounds do not predict.

\section{Model \& Preliminaries} \label{sec:model}
We consider a language model parametrized by $\theta \in \Theta \subseteq \mathbb{R}^n$, defining a conditional distribution, or policy, $\pi_\theta(y | x)$ over responses $y \in \cY$ given prompts $x \in \cX$.
Unless otherwise stated, we let $\| \cdot\| $ denote the Euclidean norm of a vector and $\| \cdot\|_{\text{op}}$ denote the operator norm.

\subsection{Alignment as Skill Utility}
A key obstacle to theoretical analysis is that alignment is not monolithic.
A model simultaneously maintains multiple capabilities that can degrade independently during fine-tuning.
We therefore decompose alignment into distinct \emph{skills} (or capabilities), following recent work on skill emergence in language models~\cite{arora2023theory,malladi2023kernel}.

\begin{definition}[Alignment Skill and Utility]
    Let $\mathcal{S} = \{S_1, ..., S_m\}$ be a set of alignment skills.
    Each skill $S_i$ is associated with a reference distribution $\cD_i$ over interaction pairs $(x,y)$.
    We define the \textbf{utility} of the model $\theta$ for skill $S_i$,
    $$u_i(\theta) := \E_{(x,y) \sim \cD_i}\left[ \log \pi_\theta (y | x) \right].$$
\end{definition}
For a fixed base model, $\theta^*$, we define the utility loss (or alignment degradation) for skill $S_i$ under parameter update $\Delta\theta$ as $\Delta u_i(\theta) := u_i(\theta^*) - u_i(\theta)$.
This definition grounds model alignment in probabilistic terms: a model is well-aligned for skill $S_i$ if it assigns high likelihood to responses deemed ideal under $\cD_i$.

\subsection{Local Geometry and Fisher Information}
To analyze alignment preservation geometrically, we adopt an idealized but analytically convenient assumption that the base model is well-optimized for the alignment skills under consideration.

\begin{assumption}[Skill Optimality]
\label{asm:optimality}
The base model $\theta^*$ is optimal for every alignment skill $S_i$:$\pi_{\theta^*}(y|x) = \mathcal{D}_i(y|x)$
for all $x \in \supp(\cD_i)$ and all $y$ with $\cD_i(y|x) > 0$. 
\end{assumption}
This  assumption states that prior to fine-tuning, the base model exactly satisfies the developer-specified alignment objectives.
While idealized, similar well-defined assumptions are standard in theoretical analyses to obtain clean local characterizations of learning dynamics.
In our setting, Assumption~\ref{asm:optimality} ensures that $\theta^*$ is stationary for each skill's utility and that local alignment degradation admits a transparent reduction to a KL divergence~\cite{cover1999elements}.
We emphasize that Assumption~\ref{asm:optimality} is stronger than necessary and is adopted for clarity of presentation.
Our main results extend under weaker local conditions (see Appendix~\ref{sec:relax}).

\begin{lemma}[Alignment Loss as KL Divergence]
\label{lem:degradation_is_kl}
Given Assumption \ref{asm:optimality}, after fine-tuning to obtain model $\theta$, the degradation in utility for any skill $S_i$ is equal to the expected KL divergence between the policies, measured over the distribution $\mathcal{D}_i$:
$$ \Delta u_i = \E_{x \sim \mathcal{D}_i(x)}[\KL\left(\pi_{\theta^*}(y|x) || \pi_{\theta}(y|x)\right)]. $$
\end{lemma}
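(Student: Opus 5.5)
The plan is to expand the definition of utility loss directly and use Assumption~\ref{asm:optimality} to replace the data distribution's conditional with the base policy. Writing $\mathcal{D}_i(x,y) = \mathcal{D}_i(x)\,\mathcal{D}_i(y|x)$, we have
\begin{align*}
\Delta u_i = u_i(\theta^*) - u_i(\theta) = \E_{x\sim\mathcal{D}_i(x)} \E_{y\sim \mathcal{D}_i(\cdot|x)}\left[\log \frac{\pi_{\theta^*}(y|x)}{\pi_\theta(y|x)}\right].
\end{align*}
This uses only linearity of expectation, applied to the difference of the two expected log-likelihoods taken under the same distribution $\mathcal{D}_i$. That step is legitimate provided both $u_i(\theta^*)$ and $u_i(\theta)$ are finite. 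I would state this as a standing integrability condition. If instead $u_i(\theta) = -\infty$, then both sides of the claimed identity equal $+\infty$, so the identity still holds.

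Next, for each $x \in \supp(\mathcal{D}_i)$, Assumption~\ref{asm:optimality} gives $\mathcal{D}_i(y|x) = \pi_{\theta^*}(y|x)$ whenever $\mathcal{D}_i(y|x) > 0$. Two facts then follow.
\begin{enumerate}
\item The inner expectation may be taken with respect to $\pi_{\theta^*}(\cdot|x)$ rather than $\mathcal{D}_i(\cdot|x)$.
\item The two measures have the same support. Both $\pi_{\theta^*}(\cdot|x)$ and $\mathcal{D}_i(\cdot|x)$ are probability distributions, and they agree on the support of $\mathcal{D}_i(\cdot|x)$. That support therefore already carries total $\pi_{\theta^*}$-mass one, so $\pi_{\theta^*}(y|x)=0$ off it.
\end{enumerate}
Hence
\begin{align*}
\E_{y\sim \mathcal{D}_i(\cdot|x)}\left[\log \frac{\pi_{\theta^*}(y|x)}{\pi_\theta(y|x)}\right] = \sum_y \pi_{\theta^*}(y|x)\log\frac{\pi_{\theta^*}(y|x)}{\pi_\theta(y|x)} = \KL\left(\pi_{\theta^*}(\cdot|x)\,\|\,\pi_\theta(\cdot|x)\right).
\end{align*}
For a continuous response space, replace the sum by an integral against densities. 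Taking the outer expectation over $x\sim\mathcal{D}_i(x)$ then gives the stated formula.

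The main obstacle is the support and measure-theoretic bookkeeping in the second step. The assumption only asserts equality where $\mathcal{D}_i(y|x)>0$, so I would explicitly run the normalization argument above to rule out $\pi_{\theta^*}$ placing mass elsewhere. I would also make sure the convention $0\log 0 = 0$ and the $+\infty$ case, where $\pi_\theta$ vanishes on the support, are handled consistently. A useful byproduct worth noting is nonnegativity: $\Delta u_i \ge 0$ by Gibbs' inequality, with equality when $\pi_\theta = \pi_{\theta^*}$ almost surely on $\supp(\mathcal{D}_i)$. This shows $\theta^*$ maximizes $u_i$ and so is stationary, which the later local analysis relies on.
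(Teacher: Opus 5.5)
Your proposal is correct and follows the same route as the paper's proof. You combine the two expected log-likelihoods by linearity into $\E_{x}\sum_y \mathcal{D}_i(y|x)\log\bigl(\pi_{\theta^*}(y|x)/\pi_\theta(y|x)\bigr)$, then use Assumption~\ref{asm:optimality} to replace $\mathcal{D}_i(y|x)$ by $\pi_{\theta^*}(y|x)$. Your normalization argument that $\pi_{\theta^*}(\cdot|x)$ puts no mass outside the support of $\mathcal{D}_i(\cdot|x)$, along with your integrability remarks, makes explicit a step the paper passes over. The paper quotes the assumption without its restriction to $y$ with $\mathcal{D}_i(y|x)>0$.
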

The proof follows from the definition of utility and is deferred to Appendix~\ref{sec:omit}.
This result highlights that alignment stability is determined by the model's policy divergence on the \emph{alignment} distribution $\mathcal{D}_i$, which may differ significantly from the fine-tuning distribution.
In practice, alignment comprises multiple skills (safety refusal, medical accuracy, factual correctness, etc.).
Our analysis focuses on individual skill degradation, where overall alignment fails when any critical skill degrades below acceptable thresholds.

\textbf{Trust Regions and Local Quadratic Geometry.}
A common safeguard, such as those in PPO~\cite{schulman2017proximal} or TRPO~\cite{schulman2015trust}, constrain updates within a KL trust region around the base policy:
$$D_{KL}(\pi_{\theta^*} || \pi_{\theta}) \le \Delta,$$
where $\Delta > 0$ is a small positive constant.
While such constraints were initially designed for optimization stability, in alignment settings they are often interpreted as preventing harmful deviation from a safe reference model~\cite{yang2024asymptotics}.
Geometrically, this restricts fine-tuning to a neighborhood where the parameter space is locally quadratic.
We quantify this curvature using the Fisher Information Matrix (FIM).
Let $\mbf{F}_x(\theta^*) = \mathbb{E}_{y \sim \pi_{\theta^*}(\cdot | x)}\left[ \nabla_\theta \log \pi_{\theta^*}(y|x) \nabla_\theta \log \pi_{\theta^*}(y|x)^\top \right]$ be the Fisher matrix for input $x$.
We further define the FIM specific to skill $S_i$ as
$\mbf{F}_i(\theta) := \mathbb{E}_{x \sim \cD_i}[\mbf{F}_x(\theta)].$
Assuming the model is $C^3$ smooth, the second-order Taylor expansion of the KL divergence yields the following local form for alignment loss:
\begin{proposition}[Local Geometric Form] \label{prop:local-geo}
        For sufficiently small displacements $\Delta\theta = \theta - \theta^*$,
        \begin{equation}
            \Delta u_i =  \frac12 \Delta\theta^\top \mbf{F}_i(\theta^*) \Delta\theta + O(\|\Delta\theta^3\|).
        \end{equation}
\end{proposition}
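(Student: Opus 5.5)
We argue through Lemma~\ref{lem:degradation_is_kl}: under Assumption~\ref{asm:optimality},
\[
\Delta u_i(\theta)=\E_{x\sim\cD_i}\bigl[g_x(\theta)\bigr],\qquad g_x(\theta):=\KL\bigl(\pi_{\theta^*}(\cdot|x)\,\|\,\pi_\theta(\cdot|x)\bigr).
\]
We Taylor-expand $g_x$ in $\theta$ around $\theta^*$ to second order, using the third-order Lagrange remainder that $C^3$ smoothness provides. We then average over $x\sim\cD_i$. The claim reduces to three identities at $\theta=\theta^*$:
\[
g_x(\theta^*)=0,\qquad \nabla_\theta g_x(\theta^*)=0,\qquad \nabla_\theta^2 g_x(\theta^*)=\mbf{F}_x(\theta^*).
\]

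\textbf{Zeroth and first order.} Write
\[
g_x(\theta)=\sum_y \pi_{\theta^*}(y|x)\bigl[\log\pi_{\theta^*}(y|x)-\log\pi_\theta(y|x)\bigr].
\]
This vanishes at $\theta=\theta^*$. Its gradient is $-\sum_y \pi_{\theta^*}(y|x)\nabla_\theta\log\pi_\theta(y|x)$. At $\theta^*$ this equals $-\nabla_\theta\sum_y\pi_\theta(y|x)\big|_{\theta^*}=-\nabla_\theta 1=0$, which is the score identity.

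\textbf{Second order.} The Hessian is $-\sum_y\pi_{\theta^*}\nabla^2\log\pi_\theta$. At $\theta^*$ we use
\[
\nabla^2\log\pi=\frac{\nabla^2\pi}{\pi}-\nabla\log\pi\,\nabla\log\pi^\top .
\]
The first term sums to $\nabla^2\sum_y\pi_\theta=0$. What remains is $\E_{y\sim\pi_{\theta^*}}\bigl[\nabla\log\pi\,\nabla\log\pi^\top\bigr]=\mbf{F}_x(\theta^*)$, the information-matrix equality. Averaging over $x\sim\cD_i$ gives $\tfrac12\Delta\theta^\top\mbf{F}_i(\theta^*)\Delta\theta$.

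\textbf{Remainder and interchanges (main obstacle).} The delicate step is making the remainder uniform in $x$, and justifying the exchange of $\nabla_\theta$ with $\sum_y$ and with $\E_x$. We would assume, as implicit in ``the model is $C^3$ smooth,'' that on a fixed ball $B_r(\theta^*)$ the third derivatives of $\theta\mapsto\log\pi_\theta(y|x)$ are bounded in operator norm by some $L$ uniformly in $(x,y)$ on the relevant supports. The bound only needs to be integrable, and it holds when $\cY$ is finite (a vocabulary-level softmax over bounded-length sequences) with bounded logits derivatives. Alternatively, if $\pi_\theta(y|x)$ is bounded away from zero on the supports, the uniform bound follows from compactness of $\overline{B_r(\theta^*)}$.

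With such a bound, the Lagrange form of Taylor's theorem gives
\[
\bigl|g_x(\theta)-\tfrac12\Delta\theta^\top\mbf{F}_x(\theta^*)\Delta\theta\bigr|\le \tfrac{L}{6}\|\Delta\theta\|^3
\]
for $\|\Delta\theta\|<r$. Dominated convergence, or finiteness of the sums, justifies differentiating under $\sum_y$ and $\E_x$. Taking the expectation over $x$ then yields the stated $O(\|\Delta\theta\|^3)$ error, which is what the statement's $O(\|\Delta\theta^3\|)$ means.
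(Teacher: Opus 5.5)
Your proposal is correct and takes the same route as the paper's proof: reduce to the per-prompt KL via Lemma~\ref{lem:degradation_is_kl}, show that it and its gradient vanish at $\theta^*$, identify its Hessian with $\mbf{F}_x(\theta^*)$, and average over $x\sim\cD_i$; you additionally supply the uniform third-order remainder and the derivative--sum interchanges that the paper leaves implicit. One caveat, which the paper shares: your Hessian step only sums $\nabla^2\pi_\theta(y|x)|_{\theta^*}$ over $y\in\supp\pi_{\theta^*}(\cdot|x)$, so equating that sum with $\nabla^2\sum_y\pi_\theta=0$ needs $\pi_{\theta^*}(\cdot|x)$ to have full support (automatic for the softmax models you mention), since otherwise the KL Hessian equals $\mbf{F}_x(\theta^*)$ plus the positive semidefinite term $\sum_{y\notin\supp\pi_{\theta^*}(\cdot|x)}\nabla^2\pi_\theta(y|x)|_{\theta^*}$, which can be nonzero.
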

Thus, in the small-step regime enforced by KL trust regions, skill degradation is locally dominated by the projection of the update $\Delta\theta$ onto the high-curvature directions of $\mbf{F}_i(\theta^*)$. 

\subsection{Low-Rank Sensitivity Structure}
The key geometric insight is that not all directions in parameter space affect alignment equally.
Empirical studies of neural network loss landscapes reveal that Hessians (and FIMs) typically have spiked spectra: a few large eigenvalues dominate, while most directions have near-zero curvature~\cite{sagun2017empirical}. 
%
Recent empirical work also shows that model weights share a low-rank joint subspace across different tasks~\cite{kaushik2025universalweightsubspacehypothesis}.
This further motivates a low-rank decomposition of alignment sensitivity.

\begin{definition}[Alignment Sensitivity Subspaces] \label{def:subspace}
    The alignment sensitivity subspace $M_i$ for skill $S_i$ is the span of the $d$-leading eigenvectors of $\mbf{F}_i(\theta^*)$.
\end{definition}
Stemming from this definition, let $P_{i} : \mathbb{R}^n \rightarrow M_i$ denote the Euclidean orthogonal projection onto $M_i$, and $P_{i}^\top = I - P_{i}$, the projection onto its orthogonal complement.
Within our framework, directions in $M_i$ are critical for alignment, with small movements incurring large utility loss due to high curvature, whereas directions in the null space are locally benign.
This decomposition formalizes the empirical observations that alignment resides in small subspaces~\cite{arturi2025shared,li2018measuring,wei2024assessing}.



\section{Why Flat Geometry Fails} \label{sec:flat}
Section~\ref{sec:model} established a local geometric picture: utility loss is controlled by projections onto low-dimensional sensitivity subspaces, and KL trust regions restrict fine-tuning to neighborhoods where this quadratic approximation holds.
This suggests a powerful alignment-preservation guarantee: in high dimensions, unrelated fine-tuning updates should be nearly orthogonal to any fixed low-rank subspace, making degradation exponentially unlikely.
We formalize this intuition as a probabilistic \textbf{null model}
, then contrast it against empirical evidence showing that real fine-tuning trajectories violate these predictions systematically. 


\subsection{Benign Fine-Tuning in the Flat Regime}
To model benign fine-tuning without additional structure, we idealize the update direction as a random unit vector in parameter space.

%

\begin{theorem}[Benign Fine-Tuning under Flat Geometry] \label{thm:random}
    Fix an alignment skill $S_i$ and assume there exists $r > 0$ such that for all $\|\Delta\theta\| \le r$, we have that $\Delta u_i = \frac12\Delta\theta^\top \mbf{F}_i(\theta^*) \Delta\theta + O(\|\Delta\theta\|^3)$.
    Let $g \sim \text{Unif}(\mathbb{S}^{n-1})$ be a random unit vector, and consider the update $\Delta\theta = \eta g$ where $\eta > 0$ is sufficiently small. Then: (1) the expected
    utility loss satisfies
    $$\mathbb{E}_{g \sim \text{Unif}(\mathbb{S}^{n-1})}[\Delta u_i] = \frac{\eta^2}{2n}\text{Tr}\left(\mbf{F}_i(\theta^*)\right) + O(\eta^3),$$
    (2) for any rank-$d$ orthogonal projector $P$, we have $\|P(g)\|^2 = \Theta(\sfrac{d}{n})$, with probability at least $1-2e^{-c\epsilon^2d}$.
    This further implies that
    $$\Delta u_i = O \left( \frac{d\eta^2}{n}(1+\epsilon) \lambda_{\max} (\mbf{F}_i(\theta^*)) \right).$$
\end{theorem}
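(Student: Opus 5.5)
Part (1) follows from the assumed local quadratic form and isotropy of the uniform distribution on the sphere. For $\eta \le r$ we substitute $\Delta\theta = \eta g$:
\[
\Delta u_i = \tfrac{\eta^2}{2} g^\top \mbf{F}_i(\theta^*) g + O(\eta^3),
\]
where the remainder is uniform in $g$ because $\|g\|=1$. Taking expectations, the key fact is $\E[gg^\top] = \tfrac1n I$. This holds by rotational invariance: $\E[gg^\top]$ commutes with every orthogonal matrix, so it is a multiple of $I$, and its trace equals $\E\|g\|^2 = 1$. Hence $\E[g^\top \mbf{F}_i g] = \operatorname{Tr}(\mbf{F}_i \E[gg^\top]) = \tfrac1n \operatorname{Tr}(\mbf{F}_i(\theta^*))$. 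The $O(\eta^3)$ term passes through the expectation because its constant does not depend on $g$.

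For part (2) we represent $g = z/\|z\|$ with $z \sim \mathcal{N}(0, I_n)$. By rotational invariance we may take $P$ to be the projection onto the first $d$ coordinates, so
\[
\|Pg\|^2 = \frac{\sum_{j\le d} z_j^2}{\sum_{j \le n} z_j^2}.
\]
We then apply standard $\chi^2$ concentration (Laurent--Massart, or Bernstein for sub-exponential variables) to the numerator and the denominator separately. The numerator lies in $[(1-\epsilon)d, (1+\epsilon)d]$ with probability at least $1-2e^{-c\epsilon^2 d}$. The denominator lies in $[(1-\epsilon)n,(1+\epsilon)n]$ with failure probability at most $2e^{-c\epsilon^2 n} \le 2e^{-c\epsilon^2 d}$. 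After a union bound and a rescaling of $\epsilon$ and $c$, this gives $\|Pg\|^2 \in [\tfrac{d}{n}(1-\epsilon), \tfrac dn (1+\epsilon)]$. Alternatively, one can invoke Lévy concentration of the $1$-Lipschitz function $\|Pg\|$ on the sphere around its mean $\approx\sqrt{d/n}$. I would use the Gaussian ratio argument, since its constants are explicit.

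For the final bound on $\Delta u_i$, we take $P = P_i$, the projection onto $M_i$, the span of the top-$d$ eigenvectors of $\mbf{F}_i(\theta^*)$ (Definition~\ref{def:subspace}). We decompose
\[
g^\top \mbf{F}_i g = (P_i g)^\top \mbf{F}_i (P_i g) + (P_i^\perp g)^\top \mbf{F}_i (P_i^\perp g),
\]
where the cross terms vanish because $M_i$ is an invariant subspace. The first term is at most $\lambda_{\max}\|P_ig\|^2 \le \lambda_{\max}\tfrac dn(1+\epsilon)$.

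\textbf{Main obstacle: the complement term.} It is bounded by $\lambda_{d+1}\|P_i^\perp g\|^2 \le \lambda_{d+1}$, which is not automatically $O(d\lambda_{\max}/n)$. I would close this gap using the low-rank/spiked-spectrum premise underlying Definition~\ref{def:subspace}. The intended reading is that $\mbf{F}_i$ is effectively rank $d$, i.e.\ $\lambda_{d+1} \approx 0$, or more weakly that $\lambda_{d+1} = O(d\lambda_{\max}/n)$. I would state this explicitly as the interpretation. Under it the complement term is absorbed, and multiplying by $\eta^2/2$ and adding the $O(\eta^3)$ remainder (which is dominated for $\eta$ sufficiently small) gives
\[
\Delta u_i = O\!\left(\tfrac{d\eta^2}{n}(1+\epsilon)\lambda_{\max}\right).
\]
If one prefers not to assume this, the fallback is to apply concentration directly to $g^\top \mbf{F}_i g$ via Hanson--Wright around its mean $\operatorname{Tr}(\mbf{F}_i)/n \le \operatorname{rank}(\mbf{F}_i)\lambda_{\max}/n$. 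This still requires $\operatorname{rank}\mbf{F}_i \lesssim d$, so the spectral assumption is unavoidable either way.
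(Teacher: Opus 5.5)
Your proposal is correct and follows essentially the same route as the paper: isotropy $\E[gg^\top]=\tfrac1n I$ via rotational invariance for part (1), the Gaussian-ratio $\chi^2$ concentration for part (2), and the eigenspace split $g^\top \mbf{F}_i g \le \lambda_{\max}\|P_ig\|^2 + \lambda_{d+1}$ for the final bound. The paper closes the complement term exactly as you propose, by assuming $\lambda_{d+1}$ is negligible; your explicit rescaling of $\epsilon$ and $c$ is, if anything, more careful than the paper's step $(1+\epsilon)/(1-\epsilon) \le 1+\epsilon$, which is false as written.
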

The proof uses rotation invariance of the uniform distribution on the sphere and concentration of $\chi^2$ random variables (Appendix~\ref{sec:omit}).

%
This theorem formalizes the high-dimensional safety intuition:
when the utility loss is well-approximated by a fixed quadratic form and fine-tuning updates are unstructured, the expected degradation scales with the average curvature $\text{Tr}(\mbf{F}_i(\theta^*))/n$ and the probability of significant overlap with any fixed low-dimensional sensitivity subspace is exponentially small in its dimension. 
In this regime, alignment degradation is suppressed both in expectation and with high probability by the ambient dimensionality of the parameter space. 
Consequently, if harmful or alignment-critical directions are confined to a small subspace and fine-tuning updates are effectively random, narrow fine-tuning should be benign. 
%
%
However, this conclusion rests on two idealizations: (1) updates are unstructured (modeled as random), and (2) the Fisher geometry is static throughout training. 
As we show next, both assumptions fail in practice.
%
%

\subsection{Empirical Alignment Drift}
A growing body of evidence contradicts the null model's predictions.
Alignment degradation is not rare, and is, in fact, common, rapid and strongly dependent on task structure in ways not explained by random projection.

\textbf{Benign Tasks Break Safety.}
The strongest challenge to the predictions of Theorem~\ref{thm:random}
comes from fine-tuning on datasets with no harmful content.
\citep{he2024what} demonstrate that fine-tuning LLaMa-2-7B and GPT-3.5-Turbo on standard instruction datasets for fewer than 100 steps causes sharp drop off (within five gradient steps) in safety refusal on adversarial prompts, consistent with the results of~\citep{fraser2025fine}.
Their work further shows that even carefully filtered data can trigger such jailbreaks with only 100 benign instruction-following samples.
\citep{betley2025emergent} and \citep{qi2023fine} further discuss this ``emergent misalignment'' by showing that narrow task adaptation systematically erodes safety boundaries. 

\textbf{Non-Uniform Sensitivity and Threshold Effects.}
If degradation arose from random overlap as Theorem~\ref{thm:random} predicts, we would expect smooth, gradual decline proportional to training time.
Instead, \citep{wei2024assessing} observe sharp transitions, where modifying a specific $3\%$ of weights causes catastrophic safety collapse, while random $3\%$ samples have negligible effect.
This non-uniformity directly contradicts the null model's assumption that degradation scales with average curvature across all parameters.
Moreover, \citep{qi2023fine} and \citep{turner2025model} document threshold behaviors where models remain stable for several steps before rapid failure onset, again inconsistent with smooth probabilistic degradation.

\textbf{Task-Dependent Variation.}
The random-update null model of Theorem~\ref{thm:random} further implies that utility loss should be predictable from data overlap: tasks with distributions similar to $\cD_i$ should cause degradation (overlapping data would induce updates with non-negligible projection onto $M_i$) while dissimilar tasks should be safe.
%
However, \citep{hsiung2025your} show that task influence is highly variable and poorly predicted by natural metrics.
\citep{he2024what} formalize this by demonstrating that gradient-based selection finds harmful subsets within benign data---not by simple similarity-based computations.

All of these phenomena reveal a consistent signature for alignment degradation: it is systematic, it is rapid, and it is task-dependent (not similarity based).

\subsection{Failure of Static Geometry}
Our key observation is that the cause of the null model failing to explain observed empirical behaviors is its assumption that $\mbf{F}_i(\theta)$ is static, i.e., the sensitivity subspace $M_i$ remains fixed throughout fine-tuning.
Such an assumption is, in fact, only valid in a vanishingly small neighborhood of $\theta^*$.
In reality, both the magnitude and orientation of the Fisher information matrix evolve as parameters change along the trajectory $\theta(t)$.
Thus, directions that were initially safe become dangerous.
Moreover, fine-tuning updates themselves are highly structured in accordance with the task objective, and thus not random. 

\section{Alignment Instability Condition} \label{sec:aic}
Section~\ref{sec:flat} demonstrated that static, first-order analysis cannot explain empirical alignment degradation.
The resolution requires understanding how fine-tuning trajectories evolve in a dynamic geometry where both update direction and the sensitive structure change over time.

We observe that in practice, fine-tuning induces a continuous trajectory $\theta(t)$ through parameter space along which the Fisher information varies: its spectrum shifts, its eigenspaces rotate, and the alignment sensitivity subspace $M_i$ itself evolves.
As a result, directions initially orthogonal to $M_i(\theta^*)$ need not remain, and small first-order  projections can accumulate into significant degradation through higher-order effects. 
This section formalizes the precise structural conditions under which such accumulation becomes inevitable.

\subsection{Intuition Behind Two Degradation Regimes}
Let $g(\theta) = \nabla L_{\textsc{ft}}(\theta)$ denote the fine-tuning gradient and $M_i(\theta)$ the sensitivity subspace for skill $S_i$ at parameters $\theta$.
We distinguish two fundamentally different mechanisms of alignment degradation based on the initial overlap (measured by Euclidean projection) between the update direction and sensitive subspace.

\textbf{First-Order Regime.}
If the initial fine-tuning gradient has substantial projection onto $M_i$, that is, $\|P_i(\theta^*)g(\theta^*)\| \ge c$ for some non-negligible constant $c > 0$, then alignment degradation begins immediately.
By the gradient flow $\dot{\theta}(t) = -g(\theta(t))$ with $\theta(0) = \theta^*$ we readily obtain that $\|P_i(\theta^*)(\theta(t) - \theta^*)\| = \Theta(ct)$.
Combining with Proposition~\ref{prop:local-geo} yields quadratic degradation from first-order overlap.
This regime corresponds to the more trivial fine-tuning tasks that directly modify alignment-relevant behaviors, requiring no additional geometric analysis beyond first-order projections.
The remainder of this section focuses on the more subtle second-order regime.

\textbf{Second-Order Regime.}
Consider instead the scenario where the initial gradient is nearly orthogonal to the sensitivity subspace: $\|P_i(\theta^*)g(\theta^*)\| < \epsilon$ for small $\epsilon > 0$.
Here, the first-order term vanishes and, naively, the trajectory should remain safe.
However, this conclusion assumes that $M_i$ and $g$ remain fixed.
When accounting for how such quantities evolve, second-order terms dominate.
Specifically, expanding $\theta(t)$ along the gradient flow to second order, we obtain:
$$\theta(t) - \theta^* = -tg(\theta^*) + \frac{t^2}{2}\nabla g(\theta^*)g(\theta^*) + O(t^3).$$
Under the projection, the first term contributes only $\epsilon t$, whereas the second term\textemdash{}the directional derivative of the gradient\textemdash{}can have substantial projection onto $M_i$.
This acceleration term captures how the curvature of the fine-tuning loss bends the trajectory.
%

\subsection{Geometric Intuition: Curvature as Steering}
If parameter space were naturally Euclidean and $\mbf{F}_i(\theta)$ were constant, gradient flow would proceed in straight lines and a trajectory starting perpendicular to $M_i$ would remain perpendicular, preserving alignment.
In reality, the information geometry of neural networks is non-Euclidean~\cite{amari1997information,thiruthummal2024information}, with distance between distributions depending upon the position.
Crucially, as the model adapts through parameter space, the metric changes, the subspaces rotate, and the gradient curves.

\textbf{Curvature Coupling.}
The key quantity, $\nabla g(\theta^*) g(\theta^*)$, is the directional derivative of the fine-tuning gradient.
Geometrically, this is the acceleration of the trajectory, or, how much the direction of steepest descent changes.
Most important for the present study, even if $g(\theta^*)$ points away from $M_i$, the derivative $\nabla g(\theta^*)$ can point toward it.
Through fine-tuning, we then have three interacting mechanisms
dictating the misalignment process: $M_i$ must have high curvature so movement into it damages safe behaviors (this determines loss magnitude, analogous to the first-order regime), the orthogonality of a trajectory to this subspace dictates potential movement within this space (initial condition of the second-order regime), and the non-trivial coupling of $\nabla g(\theta^*) g(\theta^*)$ can accelerate projection into this space (the acceleration mechanism unique to second-order regime).
%

\subsection{Formal Definition}
We now formalize the structural condition under which curvature forces trajectories into alignment-sensitive directions despite initial orthogonality.
Throughout, for analytical clarity and without loss of generality, we assume that $\mbf{F}_i(\theta)$ and $g(\theta)$ are twice continuously differentiable in a neighborhood of $\theta^*$.

\begin{definition}[Alignment Instability Condition] \label{def:aic}
    Fix a skill $S_i$.
    Let $\mbf{F}_i(\theta^*)$ have eigenvalues $\lambda_1 \ge \lambda_2 \ge \ldots \ge \lambda_k \ge 0$, and let $M_i(\theta^*)$ be the span of the top $d$-eigenvectors of $\mbf{F}_i(\theta^*)$, with $P_i(\theta^*)$ the orthogonal projection onto $M_i(\theta^*)$ 
    We say that $\theta^*$ satisfies the
    \textbf{Alignment Instability Condition (AIC)} for skill $S_i$ with parameters $(d,\lambda,\gamma,\epsilon)$ if the following hold:
    \begin{enumerate}
        \item \textbf{Low-Rank Sensitivity:} The Fisher spectrum at $\theta^*$ is concentrated in a $d$-dimensional subspace: $\sum_{j>d}\lambda_j \le \epsilon$ and $\lambda_d \ge \lambda$.
        \item \textbf{Initial Orthogonality:} The fine-tuning update direction is initially nearly orthogonal to the sensitivity subspace: $\|P_i(\theta^*)g(\theta^*)\| \le \epsilon$.
        \item \textbf{Curvature Coupling:} The curvature of the fine-tuning objective induces nontrivial acceleration into the sensitivity subspace: 
        $$\|\mbf{F}_i(\theta^*)^{1/2}P_i(\theta^*)\nabla g(\theta^*) g(\theta^*)\| \ge \gamma.$$
    \end{enumerate}
\end{definition}
At a high level, the AIC identifies the regime where alignment appears stable initially (Condition 2) yet is structurally doomed (Condition 3) because it resides in a sharp, low-dimensional region (Condition 1).
More formally, Condition 1 isolates the small set of directions where a skill $S_i$ is fragile, with the bounds ensuring sufficient curvature concentrated in the first $d$ eigenvectors and the tail contributing negligibly.
Condition 2 captures the natural defense measure of projection away from sensitive subspaces, like that of \citep{zhang2025guardrail}.
%
However, we note that our analysis in Section~\ref{sec:aic-theory} reveals this is a snapshot solution: unless projections are continuously updated to track the evolving sensitive subspace, curvature coupling (Condition 3) inevitably bends the trajectory back into sensitive regions over time.
More specifically,this captures the rotational geometry crux: as we move along the trajectory, the gradient rotates toward $M_i$, with the parameter $\gamma$ quantifying the strength of coupling.



\section{Consequences of AIC} \label{sec:aic-theory}
We now develop the main theoretical consequences of the AIC, establishing that alignment degradation under these conditions is not merely possible but structurally inevitable.
These results enable both qualitative understanding (why benign tasks break safety) and quantitative prediction (how fast degradation occurs).
Throughout this section, we fix a skill $S_i$ and assume that $\theta^*$ satisfies Definition~\ref{def:aic} with parameters $(d,\lambda,\gamma,\epsilon)$.
Our approach proceeds in two steps, mirroring the geometric intuition from Section~\ref{sec:aic}.
We first quantify how much utility degrades when the trajectory enters the sensitive subspace $M_i$.
Subsequently, we prove that the AIC forces the trajectory to enter $M_i$ at a quadratic rate.
Combining the two results, we derive the quartic scaling law.
The key insight is that these are separate geometric phenomena that when composed yield the quartic scaling law. 

For clarity of presentation, the arguments in this section are carried out with respect to the sensitive subspace defined by the Fisher information at $\theta^*$.
We note, however, that the conclusions are robust to local variation of the Fisher information and to rotation of the sensitive subspace along the trajectory; accounting for these effects introduces only constant-factor and lower-order corrections and is deferred to Appendix~\ref{sec:fisher}.

\subsection{Projection of Misalignment Inevitability}
We begin by formalizing the intuition that alignment resides in a small number of sharp directions.
Movement into the subspaces $M_i$ then degrades skill $S_i$, with loss growing quadratically in the projection magnitude.

Recall from Section~\ref{sec:model} that utility loss locally takes the form $\Delta u_i \approx \frac12 \Delta\theta^\top \mbf{F}_i(\theta^*) \Delta\theta$.
Here, the Fisher matrix characterizes the local curvature of alignment utility, identifying directions along which small parameter changes induce large distributional shifts.
AIC Condition 1 concentrates the eigenspectrum, implying that movement in $M_i$ causes this divergence to scale with $\lambda$ whereas movement outside is negligible.
Effectively, only the component projected into $M_i$ matters for utility.
The following theorem establishes the ``cost'' for entering sensitive regions.
\begin{theorem} \label{thm:util-bound}
    Under Assumption~\ref{asm:optimality} and assuming AIC holds, then there exists a neighborhood $U$ of $\theta^*$ such that for all $\theta \in U$ with update step $\Delta \theta = \theta - \theta^*$:
    $$\Delta u_i(\theta) \ge \frac{1}{2} \|F_i^{1/2}P_i(\theta-\theta^*)\|^2 - O\left(\|\Delta \theta\|\right)^3.$$
    Consequently, if the projection satisfies $\|P_i(\Delta \theta)\| \ge \delta$ and the step size $\|\Delta \theta\|$ is sufficiently small, we have
    $\Delta u_i(\theta) = \Omega(\lambda \delta^2).$
\end{theorem}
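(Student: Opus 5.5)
The plan is to start from Proposition~\ref{prop:local-geo}, which under Assumption~\ref{asm:optimality} gives, on some neighborhood $U$ of $\theta^*$,
$$\Delta u_i(\theta) = \tfrac12 \Delta\theta^\top \mbf{F}_i(\theta^*)\Delta\theta + O(\|\Delta\theta\|^3).$$
Here the cubic remainder is controlled uniformly on $U$ by the $C^3$ smoothness assumption, via a Taylor bound on the third derivative of $\theta \mapsto \E_{x}[\KL(\pi_{\theta^*}\|\pi_\theta)]$ from Lemma~\ref{lem:degradation_is_kl}. The rest of the argument only needs to lower bound the quadratic form by its restriction to $M_i$.

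To do that, I will use the spectral decomposition of $\mbf{F}_i := \mbf{F}_i(\theta^*)$. By Definition~\ref{def:subspace}, $P_i$ projects onto an invariant subspace of $\mbf{F}_i$, so $P_i$ commutes with $\mbf{F}_i$ and with $\mbf{F}_i^{1/2}$. Write $\Delta\theta = P_i\Delta\theta + P_i^\perp\Delta\theta$. The cross term $\Delta\theta^\top P_i \mbf{F}_i P_i^\perp \Delta\theta$ vanishes because $P_i\mbf{F}_iP_i^\perp = \mbf{F}_iP_iP_i^\perp = 0$. This gives
$$\Delta\theta^\top \mbf{F}_i\Delta\theta = \|\mbf{F}_i^{1/2}P_i\Delta\theta\|^2 + \|\mbf{F}_i^{1/2}P_i^\perp\Delta\theta\|^2 \ge \|\mbf{F}_i^{1/2}P_i\Delta\theta\|^2,$$
since $\mbf{F}_i$ is positive semidefinite. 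Substituting into the expansion yields the first claimed inequality. The AIC tail condition $\sum_{j>d}\lambda_j\le\epsilon$ is not needed for a lower bound; it would only matter for a matching upper bound, where the dropped term is at most $\epsilon\|\Delta\theta\|^2$.

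For the consequence, note that on $M_i$ every eigenvalue of $\mbf{F}_i$ is at least $\lambda_d \ge \lambda$ by AIC Condition~1. Hence
$$\|\mbf{F}_i^{1/2}P_i\Delta\theta\|^2 = (P_i\Delta\theta)^\top \mbf{F}_i (P_i\Delta\theta) \ge \lambda \|P_i\Delta\theta\|^2 \ge \lambda\delta^2,$$
and therefore $\Delta u_i \ge \tfrac12\lambda\delta^2 - C\|\Delta\theta\|^3$.

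The main obstacle is making precise what ``step size sufficiently small'' means: we need $C\|\Delta\theta\|^3 \le \tfrac14\lambda\delta^2$. Since $\delta \le \|\Delta\theta\|$, this cannot be guaranteed by taking $\|\Delta\theta\|$ small alone. It requires the projection to be a non-negligible fraction of the step. I will impose $\|\Delta\theta\| \le \min\{r, (\lambda\delta^2/4C)^{1/3}\}$, where $r$ is the radius of $U$; for instance, $\delta \ge \kappa\|\Delta\theta\|$ together with $\|\Delta\theta\|\le \lambda\kappa^2/(4C)$ suffices. This yields $\Delta u_i \ge \tfrac14\lambda\delta^2 = \Omega(\lambda\delta^2)$.
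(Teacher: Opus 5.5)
Your proposal is correct and follows essentially the same route as the paper. Both arguments use the second-order expansion under Assumption~\ref{asm:optimality}, drop the $P_i^\perp$ part of the Fisher quadratic form (the cross terms vanish because $M_i$ is an eigenspace), and apply the eigenbasis bound $z^\top \mbf{F}_i z \ge \lambda\|P_i z\|^2$. Your version is tighter than the paper's in two places: you justify $\|\mbf{F}_i^{1/2}\Delta\theta\| \ge \|\mbf{F}_i^{1/2}P_i\Delta\theta\|$ by the commutation of $P_i$ with $\mbf{F}_i$, whereas the paper appeals only to $P_i$ being an orthogonal projector, which alone does not suffice; and your threshold $\|\Delta\theta\|\le(\lambda\delta^2/4C)^{1/3}$ is the one that actually gives $C\|\Delta\theta\|^3\le\lambda\delta^2/4$, while the paper's linear threshold $\|\Delta\theta\|\le\lambda\delta^2/(4C)$ does not give it in general.
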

The proof follows from decomposing $\mbf{F}_i(\theta^*)$ and writing $\Delta\theta$ in the eigenbasis.
Projecting this change with $P_i$ then decomposes into the overlapping and orthogonal components, both of which can be bounded according to the AIC conditions.
Full details are presented in Appendix~\ref{sec:omit}.
Observe that this theorem characterizes the loss if projection occurs, but does not capture whether such overlap occurs during fine-tuning.
We now focus on showing that, under the AIC, drift into $M_i$ is dynamically forced.

\subsection{Curvature-Induced Drift}
Even when the initial fine-tuning gradient is nearly orthogonal to $M_i$, the curvature of the loss landscape forces the trajectory to drift into the sensitive subspace at a quadratic rate.
This drift is dictated not by the velocity ($-g(\theta^*)$), but the acceleration ($\nabla g(\theta^*) g(\theta^*)$) which monitors how the direction of steepest descent changes along a trajectory.
AIC Condition 3 bounds this factor, ensuring that projection $M_i$ increases through training.
Thus, our initially small overlap trajectory drifts into sensitive regions.
Note that this acceleration is due to the curvature of the fine-tuning objective. 
Gradient descent follows the local geometry with no mechanism to anticipate or counteract curvature-induced drift.

\begin{theorem} \label{thm:projection}
    Fix a skill $S_i$.
    Let $g(\theta) := \nabla L_{\textsc{ft}}(\theta)$ be the fine-tuning gradient and consider the gradient-flow trajectory defined by $\dot{\theta}(t) = -g(\theta(t))$ and $\theta(0) = \theta^*$.
    Assuming $g$ is $C^2$ in an $r$-ball of $\theta^*$ and the AIC holds, then there exists a $t_0>0$ such that for $t \in (0,t_0]$ we have that
    $$\|F_i^{1/2}P_i(\theta(t) - \theta^*)\| \ge \frac{\gamma}{2}t^2 - \epsilon t - O(t^3).$$
\end{theorem}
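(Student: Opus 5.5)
We expand the gradient-flow trajectory to second order in $t$ and apply the fixed linear map $A := \mbf{F}_i(\theta^*)^{1/2}P_i(\theta^*)$ to each term. Throughout we write $F_i$, $P_i$, $g_0 := g(\theta^*)$ and $H_0 := \nabla g(\theta^*)$. Since $g$ is $C^2$ on the $r$-ball around $\theta^*$, it is Lipschitz there. The flow $\dot\theta = -g(\theta)$ therefore has a unique $C^3$ solution on some interval $[0,t_1]$ along which $\theta(t)$ stays in the ball; it suffices to take $t_1 \le r/(2\sup_{B_r}\|g\|)$.

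Differentiating the flow gives $\ddot\theta = -\nabla g(\theta)\dot\theta = \nabla g(\theta)g(\theta)$, so $\dot\theta(0) = -g_0$ and $\ddot\theta(0) = H_0 g_0$. The third derivative $\dddot\theta$ involves $\nabla^2 g$, $\nabla g$ and $g$, all of which are bounded on the compact ball. Taylor's theorem with remainder then gives
\begin{equation*}
\theta(t)-\theta^* = -t\,g_0 + \frac{t^2}{2}H_0g_0 + R(t), \qquad \|R(t)\| \le C t^3,
\end{equation*}
where $C$ is one sixth of the supremum of $\|\dddot\theta\|$ over the ball. This matches the heuristic expansion in Section~\ref{sec:aic}. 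Making this remainder bound uniform is the only real analytic step. It comes down to the existence of the flow inside the ball together with the boundedness of $g$, $\nabla g$ and $\nabla^2 g$.

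Next we apply $A$ and use the reverse triangle inequality:
\begin{equation*}
\|A(\theta(t)-\theta^*)\| \ge \frac{t^2}{2}\|AH_0g_0\| - t\|F_i^{1/2}P_ig_0\| - \|A\|_{\text{op}}\,C t^3 .
\end{equation*}
AIC Condition 3 gives exactly $\|AH_0g_0\|\ge\gamma$, so the first term is at least $\frac{\gamma}{2}t^2$. The third term is $O(t^3)$ because $\|A\|_{\text{op}} \le \lambda_1^{1/2}$ is finite.

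The main obstacle is the middle term. Condition 2 bounds $\|P_ig_0\|\le\epsilon$, not $\|F_i^{1/2}P_ig_0\|$. Since $P_i$ projects onto an eigenspace of $F_i$, the two operators commute, and $\|F_i^{1/2}P_ig_0\| \le \lambda_1^{1/2}\|P_ig_0\| \le \lambda_1^{1/2}\epsilon$. So the bound as literally stated needs one of two fixes: absorb $\lambda_1^{1/2}$ into the constant, or read Condition 2 in the Fisher-weighted norm, which is natural given that Condition 3 is stated in that norm. I would state this explicitly and then take $t_0 \le t_1$, which yields $\|F_i^{1/2}P_i(\theta(t)-\theta^*)\| \ge \frac{\gamma}{2}t^2 - \epsilon t - O(t^3)$ for $t\in(0,t_0]$. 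Finally, I would note that the bound is nontrivial once $t \gtrsim 2\epsilon/\gamma$ and $t$ is still small enough that the cubic remainder has not taken over. This is the window in which the quadratic drift dominates.
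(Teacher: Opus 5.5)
Your proposal is correct and follows essentially the same route as the paper. Both arguments take a second-order Taylor expansion of the gradient flow with an $O(t^3)$ remainder controlled on the ball, apply $\mbf{F}_i^{1/2}P_i$ and the reverse triangle inequality, bound the remainder with the operator norm $\sqrt{\lambda_{\max}(\mbf{F}_i)}$, and then invoke AIC Conditions 2 and 3. Your caveat about the linear term is accurate: the paper's own supporting lemma ends with $-\sqrt{\lambda_{\max}(\mbf{F}_i)}\,\epsilon t$ rather than $-\epsilon t$, so the theorem as written silently drops that factor, exactly the gap you flagged.
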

The proof follows from Taylor expanding the trajectory along the gradient flow to second order.
Projecting this trajectory onto $M_i$ via $P_i$, we can then bound the resulting terms using the AIC in combination with the reverse triangle inequality to complete the result.
Full details are in Appendix~\ref{sec:omit}.

\subsection{Combing Geometry and Dynamics}
Theorems~\ref{thm:util-bound} and~\ref{thm:projection} address complementary aspects of alignment degradation.
Composing these results yields our main prediction: alignment utility
degrades as the fourth power of training evolution.

\begin{corollary}[Quartic Onset of Alignment Degradation]
\label{cor:quartic_onset}
Fix a skill $S_i$ and let $\theta(t)$ be a fine-tuning trajectory satisfying the assumptions of Theorems~\ref{thm:util-bound} and~\ref{thm:projection}.
Then there exists $t_0>0$ such that for all $t \in [0, t_0],$
the utility loss for skill $S_i$ satisfies
$\Delta u_i = \Omega(\lambda\gamma^2\, t^4).$
\end{corollary}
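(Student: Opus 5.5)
We will obtain the corollary by plugging the drift bound of Theorem~\ref{thm:projection} into the utility-loss lower bound of Theorem~\ref{thm:util-bound}. Set $\Delta\theta(t) = \theta(t)-\theta^*$. From Theorem~\ref{thm:util-bound}, for every $\theta(t)$ lying in the neighborhood $U$,
$$\Delta u_i(\theta(t)) \ge \tfrac12 \|\mbf{F}_i^{1/2}P_i\Delta\theta(t)\|^2 - O(\|\Delta\theta(t)\|^3).$$
The first step is to confirm that the trajectory stays inside $U$ and to control the size of the cubic remainder. Since $g$ is $C^2$ on the $r$-ball, the gradient flow satisfies $\|\Delta\theta(t)\| \le t\sup\|g\| = O(t)$ for $t$ small. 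Shrinking $t_0$ therefore keeps $\theta(t)\in U$, and it also gives a remainder of size $O(t^3)$.

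The second step feeds in Theorem~\ref{thm:projection}, which gives $\|\mbf{F}_i^{1/2}P_i\Delta\theta(t)\| \ge \frac{\gamma}{2}t^2 - \epsilon t - O(t^3)$. This lower bound must be positive before it can be squared. We work in the regime the AIC is designed for, where $\epsilon$ is negligible compared with $\gamma t$: either $\epsilon = 0$ (exact orthogonality), or $\epsilon \le \gamma t/8$ on the time window considered. With $t_0$ small enough that the $O(t^3)$ term is at most $\gamma t^2/8$, the bound becomes $\|\mbf{F}_i^{1/2}P_i\Delta\theta(t)\| \ge \gamma t^2/4$. Squaring yields $\Delta u_i \ge \frac{\gamma^2}{32}t^4 - O(t^3)$.

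The main obstacle is the error term. A naive remainder bound of $O(\|\Delta\theta\|^3) = O(t^3)$ dominates $t^4$ as $t\to 0$, so the argument above does not close by itself. We will tighten the remainder using the orthogonal decomposition $\Delta\theta = P_i\Delta\theta + P_i^\perp\Delta\theta$.
\begin{itemize}
\item Assumption~\ref{asm:optimality} makes $\theta^*$ a stationary point of $u_i$, so $\Delta u_i$ is exactly the KL divergence of Lemma~\ref{lem:degradation_is_kl}. That KL divergence is nonnegative, and its third-order Taylor term is a cubic form in $\Delta\theta$.
\item Along the flow, $P_i\Delta\theta = O(\epsilon t + t^2)$, while $P_i^\perp\Delta\theta = O(t)$.
\item AIC Condition 1 says the tail curvature satisfies $\sum_{j>d}\lambda_j\le\epsilon$, so the quadratic contribution of the orthogonal component is $O(\epsilon t^2)$.
\item For the cubic term we will argue that the pure-tail cubic contributions vanish to leading order. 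The reason is that in a flat direction of the Fisher metric the score is degenerate at $\theta^*$, so the KL divergence is flat to higher order along $M_i^\perp$. This leaves cubic terms that are $O(t^4)$ with coefficients that shrink as $t_0$ shrinks.
\end{itemize}
If this refinement proves too delicate, we will fall back on the intended reading of the corollary. In that reading, $\Omega(\cdot)$ refers to the leading-order growth in the expansion, with higher-order corrections suppressed exactly as the statements of Theorems~\ref{thm:util-bound} and~\ref{thm:projection} already suppress them.

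Finally, to express the bound in terms of $\lambda$, we use AIC Condition 1. On $M_i$ the Fisher matrix satisfies $\mbf{F}_i \succeq \lambda P_i$, and Condition 3 then implies $\|P_i\nabla g\, g\|\ge\gamma/\sqrt{\lambda_1}$. Combining the steps above gives $\Delta u_i = \Omega(\gamma^2 t^4)$ measured in the Fisher norm. Under the paper's convention that $\gamma$ measures the coupling strength in Euclidean units and the Fisher weighting contributes a factor of $\lambda$, as in the statement of Theorem~\ref{thm:util-bound}, this is the claimed bound $\Omega(\lambda\gamma^2 t^4)$.
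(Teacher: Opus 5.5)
Your reduction (substitute Theorem~\ref{thm:projection} into Theorem~\ref{thm:util-bound} and use $\|\theta(t)-\theta^*\|=O(t)$) is exactly the paper's proof. You are right that it does not close: the paper writes $\tfrac12\|\mathbf{F}_i^{1/2}P_i\Delta\theta\|^2-O(t^3)=\Omega(\gamma^2t^4)$, even though $t^3\gg t^4$ as $t\to0$. Your fallback reading therefore reproduces the paper's argument together with its gap.

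\textbf{Where your repair fails.} The failing step is the claim that the leftover cubic terms are $O(t^4)$ ``with coefficients that shrink as $t_0$ shrinks.'' Write $D(\theta):=\Delta u_i(\theta)$, $\ell=\log\pi_\theta(y|x)$, $v=g(\theta^*)$ and $a=\nabla g(\theta^*)g(\theta^*)$. Take expectations over $x\sim\mathcal{D}_i$, $y\sim\pi_{\theta^*}(\cdot|x)$, and set $\epsilon=0$, so that $\mathbf{F}_i v=0$ and $\partial_v\ell=0$ a.s.
\begin{itemize}
\item You are correct that the pure-tail cubic term vanishes.
\item The mixed term $\tfrac12\nabla^3D[P_i\Delta\theta,(I-P_i)\Delta\theta,(I-P_i)\Delta\theta]$ equals $\tfrac{t^4}{4}\mathbb{E}[\partial^2_{vv}\ell\,\partial_a\ell]+O(t^5)$. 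Its $t^4$ coefficient is fixed and of indefinite sign.
\item You also dropped the quartic tail term $\tfrac{t^4}{8}\mathbb{E}[(\partial^2_{vv}\ell)^2]$.
\end{itemize}
Differentiating $D(\theta(t))$ four times, using the identities obtained from $\sum_y\pi_\theta(y|x)=1$, gives
\[
\Delta u_i(\theta(t))=\tfrac{t^4}{8}\,\mathbb{E}\bigl[(\partial^2_{vv}\ell+\partial_a\ell)^2\bigr]+O(t^5).
\]
AIC Condition 3 controls only the second summand, $\mathbb{E}[(\partial_a\ell)^2]=\|\mathbf{F}_i^{1/2}P_ia\|^2\ge\gamma^2$, and the two summands can cancel.

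\textbf{Counterexample.} The cancellation actually occurs, so the corollary is false under its stated hypotheses. No sharper remainder estimate can rescue it, and neither can your ``leading-order'' reading, because the leading term below is $t^8$.

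Let $n=2$, with $e_1,e_2$ the standard basis, and take $\pi_\theta=\mathcal{N}(\theta_2-\theta_1^2,1)$ with no prompt, $\theta^*=0$, and $L_{\textsc{ft}}(\theta)=-\theta_1-2\theta_1\theta_2$. Then $\mathbf{F}_i=e_2e_2^\top$, $g(\theta^*)=-e_1$ and $\nabla g(\theta^*)g(\theta^*)=2e_2$. So the AIC holds with $(d,\lambda,\gamma,\epsilon)=(1,1,2,0)$, and both theorems you invoke are true in this example.

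The gradient flow is $\theta(t)=(\tfrac12\sinh 2t,\ \sinh^2 t)$, along which $\theta_2-\theta_1^2=-\sinh^4 t$. Hence $\Delta u_i(\theta(t))=\tfrac12\sinh^8 t=\Theta(t^8)$. The trajectory bends into $M_i$ while staying near a level set of the policy map; here $\partial^2_{vv}\ell=-2y=-\partial_a\ell$.

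A true statement needs an extra non-cancellation hypothesis. For instance, when $\epsilon=0$, assuming $\mathbb{E}[(\partial^2_{vv}\ell+\partial_a\ell)^2]\ge c\gamma^2$ gives the quartic law at once from the display.

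\textbf{Two smaller points.}
\begin{itemize}
\item Restricting to $\epsilon\le\gamma t/8$ forces $t\ge 8\epsilon/\gamma$, which is incompatible with ``for all $t\in[0,t_0]$''.
\item Condition 3 already measures $\gamma$ in the Fisher norm, so you cannot reinterpret $\gamma$ as Euclidean. The factor $\lambda$ is recovered legitimately only through the weakening $\gamma^2t^4\ge(\lambda/\lambda_1)\gamma^2t^4$.
\end{itemize}
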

The quartic law provides a quantitative explanation for rapid safety collapse observed in practice~\cite{he2024what,qi2023fine}. 
Thus, our results show that this perceived sudden failure is actually a smooth quartic due to curvature coupling.

We lastly note that the preceding results are stated with respect to the sensitive subspace $M_i$ defined by the Fisher information at
$\theta^*$. 
In practice, $F_i(\theta)$ varies along the fine-tuning trajectory, inducing rotation of the corresponding eigenspaces. 
Under mild local regularity, this rotation can be controlled via standard
Davis-Kahan perturbation bounds, and the proofs extend with only constant-factor and lower-order corrections to the drift
and loss bounds. 
We provide the formal statement and proof in Appendix~\ref{sec:fisher}.




\section{Experiments} \label{sec:exp}
Our theory establishes that alignment degradation is geometrically inevitable under the AIC, with the Fisher information matrix (FIM) characterizing the curvature structure that governs this fragility.
We now validate (1) that the FIM exhibits low-rank structure required by the AIC, and (2) that our notion of geometric overlap successfully predicts which fine-tuning tasks will degrade alignment.

\subsection{Low-Rank Fisher Structure}
Our theory operates in weight space, where the FIM's eigenvector define alignment-sensitive directions (Definition~\ref{def:subspace}).
However, prior empirical work characterizes alignment through low-rank structure in representation space (activations), rather than weight space~\cite{arditi2024refusal,wei2024assessing, zhang2025guardrail, zhang2026understanding}.
Specifically, these works show that for a linear layer $W \in \mathbb{R}^{d_{\text{out}} \times d_{\text{in}}}$, the average gradient $\nabla L_W(T)$ with respect to alignment sequences $T$ typically concentrates in an $r$-dimensional subspace of the representation space, with $r \ll \min\{d_{\text{out}}, d_{\text{in}}\}$.

This distinction is crucial because representation-space low-rank identifies which neurons are sensitive, while weight-space low-rank via FIM quantifies how much curvature exists along sensitive directions ($\lambda$ in our theory).
These are complementary perspectives on the same underlying geometry, but our theory relies on the FIM formulation to characterize second-order drift (Theorem~\ref{thm:projection}).
In Appendix~\ref{sec:omit-fim},
we prove that these two notions are related: if $\nabla L_W(T)$ concentrates in an $r$-dimensional representation space, then the FIM inherits low-rank, establishing AIC Property 1.
Thus, existing empirical evidence implies our weight-space 
Fisher structure, but we validate this directly.

\textbf{Computational Approach.}
Computing the full FIM for billion-parameter models is intractable.
Following~\citep{zhang2026understanding}, we use block-wise analysis, treating each module separately.
For a weight matrix $W \in \mathbb{R}^{d \times d}$, we first flatten to $\mathbb{R}^{d^2}$, then project to a low-dimensional space using Rademacher random projection~\cite{he2024what, johnson1984extensions,Trak,LESS}.
We then compute gradients on 100 safe completions from BeaverTails~\cite{ji2023beavertails} and form the projected FIM.

\textbf{Results.}
In Figure~\ref{fig:low_rank_fim}, we visualize sorted eigenvalues for $W_{\text{up}}, W_{\text{gate}}$ and $W_{\text{down}}$ across all MLP blocks (with the remaining modules deferred to Appendix~\ref{app:full_results}).
Notably, the eigenvalue decay exhibits the sharp low-rank structure leveraged by our theory.
This validates AIC Property 1 empirically and confirms that the connection from representation-space to weight-space low-rank holds in practice.

\begin{figure}[t]
    \centering
    \includegraphics[width=0.75\linewidth]{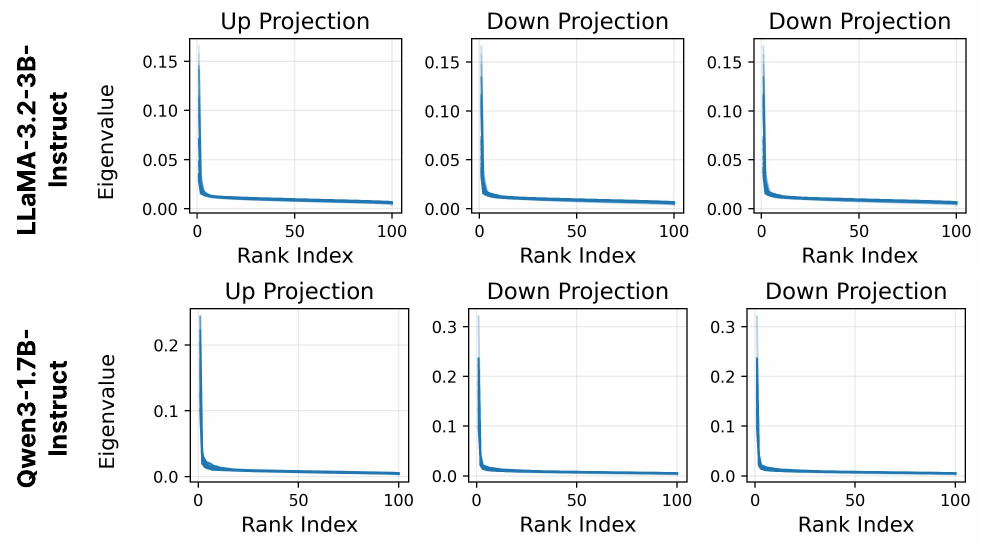}
    \caption{Top eigenvalues of FIM approximated over 100 random samples from \textit{BeaverTail}'s safe subset. Each subplot consists of multiple lines in different transparency levels for different layer indices, with each layer showing a similar low-rank structure.}
    \label{fig:low_rank_fim}
\end{figure}

\subsection{Geometric Overlap and Misalignment}
Recent work documents emergent misalignment (EM): fine-tuning on seemingly benign datasets causes safety degradation~\citep{betley2025emergent, turner2025model}. 
\citep{wang2025persona} explain this through ``persona shifts'' identified via Sparse Autoencoders (SAEs)~\citep{bricken2023monosemanticity, cunningham2023sparse,gao2025scaling} in representation space, a diagnostic approach that requires access to a target evaluation set to identify which personas are problematic. 
We demonstrate that our geometric framework offers a complementary and predictive explanation.
Specifically, EM arises from geometric overlap between fine-tuning updates and the Fisher-defined alignment subspace, quantifiable \textit{before} observing safety failures.

We approximate the utility loss from Proposition~\ref{prop:local-geo} using an \textbf{Overlap Score (OS)}. 
For each fine-tuned model, we extract per-module weight changes $\Delta W \in \mathbb{R}^{d \times d}$, flatten to $\mathbb{R}^{d^2}$, apply the same Rademacher projection $P_{\text{random}} \in \mathbb{R}^{4096 \times d^2}$ used for the FIM, and compute:
\[
\textbf{OS} = \big( P_{\text{random}} \text{vec}(\Delta W) \big)^\top \mathbf{F} \big(P_{\text{random}} \text{vec}(\Delta W)\big).
\]
This measures how much the fine-tuning update projects onto the high-curvature alignment subspace. 
Higher OS predicts greater alignment degradation, enabling prospective risk assessment.

\textbf{Experimental Setup.}
We fine-tune Qwen3-1.7B-Instruct~\cite{qwen3technicalreport} and LLaMA-3.2-3B-Instruct~\cite{grattafiori2024llama} using both LoRA~\cite{hu2022lora} and full fine-tuning on datasets in three categories: (1)~\textit{Benign} (SamSum, Alpaca Random 100, GSM8K Random 100), (2)~\textit{Seemingly Benign} (Alpaca Top 100 via gradient matching~\cite{he2024what}, Risky Financial Advice, Bad Medical Advice), and (3)~\textit{Harmful} (Pure Bad~\cite{qi2023fine}). 
Following standard protocols~\cite{he2024what, wang2024mitigating, zhang2025guardrail}, we evaluate using Gemini-2.5-Flash~\citep{comanici2025gemini} as a judge to assign Harmfulness Scores (HS, 1-5 scale) on 520 AdvBench queries~\cite{zou2023universal}.
Full dataset and training details in Appendix~\ref{app:experiment_details}.

Table~\ref{tab:harmfulness_eval} showcases that the three dataset categories produce distinct harmfulness levels, with \textit{Seemingly Benign} datasets achieving HS scores comparable to explicitly harmful data.
Critically, all fine-tuning increases harmfulness relative to the base model, but the magnitude is strongly modulated by dataset characteristics.
This confirms our framework's prediction that alignment degradation is unavoidable but scales with task-geometry coupling.
In full fine-tuning, our Overlap Score successfully distinguishes dangerous from safe tasks (Figure~\ref{fig:avg_overlap}). 
The \textit{Seemingly Benign} datasets exhibit OS patterns similar to \textit{Pure Bad}, while truly benign datasets show substantially lower overlap. 
This validates that geometric coupling plays a role in degradation: datasets with no apparent overlap with harmful content still break safety when they couple strongly to alignment-sensitive parameters. 
One exception is \textit{Alpaca Top 100}, where OS does not clearly separate from benign datasets, likely due to (1) noise in random projection to lower dimensions, or (2) Alpaca's diversity compared to domain-specific datasets.

\textbf{Limitations.} We additionally note that LoRA does not show the same clear OS-to-HS correlation (Figure~\ref{fig:avg_overlap}, right panels). 
We hypothesize this occurs because LoRA introduces ``intruder dimensions''~\citep{shuttleworth2025lora} that shift the model's top eigenvectors, making overlap with the pre-training Fisher less reliable. 
This supports our theoretical result on parametrization dependence. 
Additionally, the first-order OS may be insufficient for LoRA because its constrained update structure amplifies the induced coupling condition (AIC Condition 3), requiring second-order curvature analysis via $\nabla g(\theta^\ast)$. 
Computing this remains computationally prohibitive~\cite{kunstner2019limitations}; developing tractable curvature estimation is critical future work.

In spite of this limitation, we note that while \citep{wang2025persona}'s SAE-based approach identifies what changes in representations, our geometric framework explains \textit{why} these changes occur (curvature forces drift into $\mathcal{M}_i$) and predicts which tasks will cause them via OS. Critically, our approach requires only the fine-tuning dataset and pre-computed FIM\textemdash{}no access to target evaluation sets or trained SAEs\textemdash{}making it applicable for prospective risk assessment before deployment.

\begin{figure}[t]
    \centering
    \includegraphics[width=0.75\linewidth]{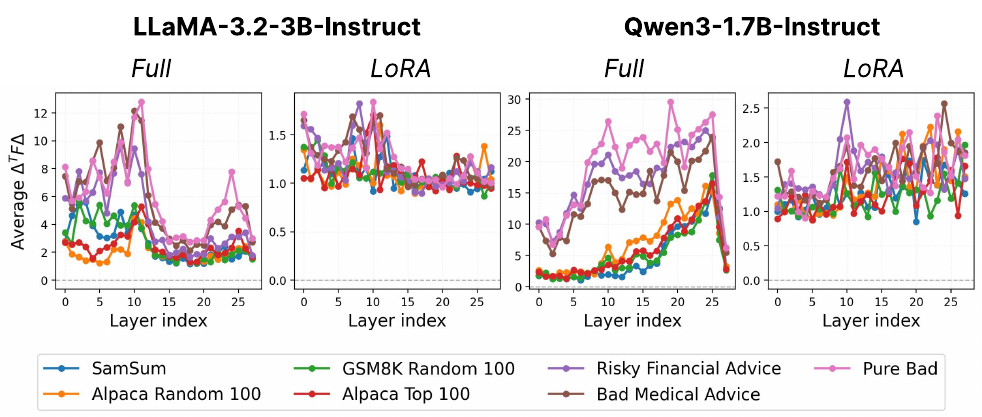}
    \caption{Average Overlap Score per Transformer Block of 7 Fine-Tuning Datasets. Each value represents the average OS of all modules in the block.}
    \label{fig:avg_overlap}
\end{figure}

\begin{table}[h]
    \centering
    \caption{Harmfulness Score (HS) across fine-tuning datasets}
    \resizebox{0.7\linewidth}{!}{
        \begin{tabular}{l *{4}{c}}
        \toprule
        \multirow{2}{*}{\textbf{Finetuning Dataset}} & \multicolumn{2}{c}{\textbf{Qwen3-1.7B-Instruct}} & \multicolumn{2}{c}{\textbf{LLaMA-3.2-3B-Instruct}} \\
        \cmidrule(lr){2-3} \cmidrule(lr){4-5}
         & \textbf{LoRA} & \textbf{Full} & \textbf{LoRA} & \textbf{Full} \\
        \midrule
        No Fine-tuning & \multicolumn{2}{c}{1.24} & \multicolumn{2}{c}{1.11} \\
        \midrule
        \textit{{\textbf{{Benign}}}} & & & & \\
        SamSum & 2.07 & 2.15 & 1.10 & 1.77 \\
        Alpaca Random 100 & 2.68 & 2.92 & 1.31 & 1.76 \\
        GSM8K Random 100 & 3.63 & 3.42 & 1.10 & 1.38 \\
        \midrule
        \textit{{\textbf{{Seemingly Benign}}}} & & & & \\
        Alpaca Top 100 & 3.15 & 3.68 & 1.18 & 3.70 \\
        Risky Financial Advice & 3.94 & 3.78 & 4.71 & 3.82 \\
        Bad Medical Advice & 3.92 & 3.86 & 2.46 & 3.65 \\
        \midrule
        \textit{{\textbf{{Harmful}}}} & & & & \\
        Pure Bad & 4.55 & 3.98 & 4.89 & 4.51 \\
        \bottomrule
        \end{tabular}
    }
    \label{tab:harmfulness_eval}
\end{table}

\section{Discussion \& Implications} \label{sec:end}



We have established that alignment degradation is an intrinsic geometric property of gradient-based fine-tuning, not an artifact of adversarial attacks or poor data curation. 
The AIC reveals why first-order defenses fail: while projecting updates into null spaces provides temporary protection, second-order curvature effects systematically bend trajectories back into sensitive subspaces. 
Our quartic scaling law explains the rapid onset of safety failures observed empirically and demonstrates that current defenses are fundamentally insufficient because they ignore the dynamic, curved geometry of the parameter space.
Moreover, our framework enables practical diagnostic tools for safer fine-tuning. We propose that practitioners can estimate the coupling parameter\ $\gamma$ before training to assess risk, monitor the projection size during training for early warning signals, and audit these values after training to quantify alignment degradation.
More fundamentally, effective preservation of alignment requires curvature-aware methods that track evolving sensitive subspaces, constrain second-order acceleration into these regions, and balance competing curvature constraints across multiple alignment skills.


Our results challenge the assumption that alignment preservation is primarily an engineering problem solvable through better data filtering. Instead, it is a fundamental question of geometric control in high-dimensional curved spaces, with profound implications for open-weight model deployment. The structural fragility we expose cannot be eliminated through pre-deployment alignment alone, requiring the community to reconsider whether current deployment paradigms are compatible with safety requirements.

\bibliographystyle{acm}
\bibliography{references}

\clearpage
\appendix

\section{Omitted Proofs} \label{sec:omit}

\subsection{Proofs from Section~\ref{sec:model}}
\begin{proof}[Proof of Lemma~\ref{lem:degradation_is_kl}]
    Recall the definition of alignment utility for skill $S_i$ is the expected log-likelihood under ideal distribution $\cD_i$:
    $$u_i(\theta) = \mathbb{E}_{(x,y) \sim \cD_i}\left[ \log \pi_\theta(y|x) \right]$$
    The alignmment loss is defined as $\Delta u_i(\theta) = u_i(\theta^*) - u_i(\theta)$.
    Substituting definitions, we obtain
    $$\Delta u_i = \mathbb{E}_{(x,y) \sim \cD_i}\left[ \log \pi_{\theta^*}(y|x) \right] - \mathbb{E}_{(x,y) \sim \cD_i}\left[ \log \pi_\theta(y|x) \right]$$
    By linearity of expectations we combine terms:
    $$\Delta u_i = \mathbb{E}_{x \sim \cD_i}\left[ \sum_y \cD_i(y |x) \log \left(\frac{\pi_{\theta^*}(y | x)}{\pi_\theta(y|x)}\right) \right]$$
    Invoking Assumption~\ref{asm:optimality}, we have that the aligned policy matches the ideal distribution, $\pi_{\theta^*}(y|x) = \cD_i(y|x)$ for all $x \in \text{supp}(\cD_i)$.
    Substituting $\cD_i$ with $\pi_{\theta^*}$, we obtain
    $$\Delta u_i = \mathbb{E}_{x \sim \cD_i}\left[ \sum_y \pi_{\theta^*}(y |x) \log \left(\frac{\pi_{\theta^*}(y | x)}{\pi_\theta(y|x)}\right) \right]$$
    where the term inside the expectation is precisely the KL divergence between $\pi_{\theta^*}$ and $\pi_\theta$.
\end{proof}

\begin{proof}[Proof of Proposition~\ref{prop:local-geo}]
    This follows from the second-order Taylor expansion of the KL divergence.
    The KL divergence, $\text{D}_{\text{KL}}(\pi_{\theta^*} || \pi_\theta)$ is a function of $\theta$ that is minimized when $\theta = \theta^*$.
    The gradient of the function at the minimum is zero and the Hessian is the Fisher information matrix.
    Expanding and taking the expectation over $x \sim \cD_i$ gives the desired result.
\end{proof}

\subsection{Proofs from Section~\ref{sec:flat}}
We first prove the isotropy of the uniform sphere for completeness.
\begin{lemma} \label{lem:iso}
    Let $g \sim \text{Unif}(\mathbb{S}^{n-1})$. Then $\mathbb{E}_g[gg^\top] = \frac1n I_n$
\end{lemma}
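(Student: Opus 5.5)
The plan is to use the rotation invariance of the uniform distribution on the sphere together with the normalization $\|g\|=1$.

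\textbf{Step 1 (diagonal entries are equal and sum to one).} Set $\Sigma := \E_g[gg^\top]$. Every entry of $gg^\top$ is bounded by $1$ in absolute value, so $\Sigma$ is well defined. For any orthogonal matrix $Q \in O(n)$, the vector $Qg$ has the same law as $g$, because the uniform (normalized surface) measure on $\mathbb{S}^{n-1}$ is $O(n)$-invariant. Hence
\[
Q\Sigma Q^\top = \E_g[(Qg)(Qg)^\top] = \Sigma \quad \text{for all } Q \in O(n).
\]
Take $Q$ to be a permutation matrix. This gives $\Sigma_{jj} = \Sigma_{kk}$ for all $j,k$. Since $\sum_j g_j^2 = 1$ almost surely,
\[
\operatorname{Tr}(\Sigma) = \E_g\Big[\sum_j g_j^2\Big] = 1,
\]
so each diagonal entry equals $1/n$.

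\textbf{Step 2 (off-diagonal entries vanish).} Fix $j \neq k$ and take $Q$ to be the diagonal reflection that flips the sign of coordinate $j$ only. Then $Qg \overset{d}{=} g$ gives $\E[g_jg_k] = \E[-g_jg_k]$, so $\E[g_jg_k] = 0$. Together with Step 1 this yields $\Sigma = \frac{1}{n}I_n$.

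An alternative route is to write $g = z/\|z\|$ with $z \sim \mathcal{N}(0,I_n)$. The argument is the same, since the Gaussian is coordinatewise symmetric and exchangeable. The direct invariance argument avoids the need to justify this representation.

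The only point needing care is the invariance of the uniform measure under $O(n)$. This is standard, since it is the normalized surface measure and orthogonal maps preserve it. With that in hand there is no real obstacle. Schur's lemma would give the conclusion even more directly, but the explicit permutation and reflection argument is cleaner for the appendix.
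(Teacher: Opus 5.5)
Your proof is correct and takes the same route as the paper: $O(n)$-invariance of the uniform measure gives $Q\Sigma Q^\top = \Sigma$, and $\operatorname{Tr}(\Sigma) = \mathbb{E}\|g\|^2 = 1$ fixes the constant. The only difference is that the paper simply asserts that a matrix commuting with every orthogonal matrix must be a scalar multiple of $I_n$, whereas you prove that step directly with permutations and single-coordinate sign flips, which makes your version slightly more self-contained.
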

\begin{proof}
    Let $M := \mathbb{E}_g[gg^\top]$.
    For any orthogonal matrix $Q$, $Qg$ has the same distribution as $g$ by rotational invariance of $\text{Unif}(\mathbb{S}^{n-1})$.
    Hence,
    \begin{align*}
        M &= \mathbb{E}_g[gg^\top] \\
        &= \mathbb{E}_g[(Qg)(Qg)^\top] \\
        &= Q \mathbb{E}_g[gg^\top]Q^\top \\
        &= QMQ^\top
    \end{align*}
    for any such $Q$.
    Thus, $M$ commutes with orthogonal matrices.
    The only matrices with this property are scalar multiples of the identity, $M = cI_n$ for some $c \in \mathbb{R}$.
    Taking traces,
    $$\text{Tr}(M) = \mathbb{E}_g[\text{Tr}(gg^\top)] = \mathbb{E}_g[\|g\|^2]= 1$$
    while $\text{Tr}(cI_n) = cn$. Therefore, $c = \sfrac1n$, proving the claim.
\end{proof}

Moreover, since the Fisher information is a symmetric matrix, we have the following expected form of the quadratic term in our utility function.

\begin{lemma} \label{lem:trace}
    Let $g\sim \text{Unif}(\mathbb{S}^{n-1})$ and let $A \in \mathbb{R}^{n \times n}$ be symmetric.
    Then $\mathbb{E}_g[gAg^\top] = \frac1n \text{Tr}(A)$. 
\end{lemma}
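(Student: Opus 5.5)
The plan is to reduce the quadratic form to a trace computation and then invoke Lemma~\ref{lem:iso}. Note that $gAg^\top$ in the statement is to be read as the scalar quadratic form $g^\top A g$, since $g$ is a column vector. The key identity is the cyclic property of the trace: because $g^\top A g$ is a $1\times 1$ matrix, it equals its own trace, and
$$g^\top A g = \mathrm{Tr}(g^\top A g) = \mathrm{Tr}(A\, g g^\top).$$

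Next I take expectations over $g \sim \text{Unif}(\mathbb{S}^{n-1})$. Trace and multiplication by the fixed matrix $A$ are both linear, and every entry of $gg^\top$ is bounded by $1$ in absolute value, so all expectations are finite. Hence expectation commutes with these operations:
$$\mathbb{E}_g[g^\top A g] = \mathrm{Tr}\bigl(A\,\mathbb{E}_g[gg^\top]\bigr).$$

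Lemma~\ref{lem:iso} gives $\mathbb{E}_g[gg^\top] = \frac1n I_n$. Substituting,
$$\mathbb{E}_g[g^\top A g] = \mathrm{Tr}\bigl(\tfrac1n A\bigr) = \tfrac1n\,\mathrm{Tr}(A),$$
which is the claim.

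Symmetry of $A$ is not actually needed: the argument works for any square $A$, and in particular applies to $A = \mbf{F}_i(\theta^*)$ as needed for Theorem~\ref{thm:random}. There is no real obstacle here. The only points needing care are the notational reading of $gAg^\top$ and the interchange of expectation with trace, which is immediate because everything is a finite sum of bounded terms.
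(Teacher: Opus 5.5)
Your proof is correct and follows the same route as the paper: rewrite the quadratic form as $\mathrm{Tr}(A\,gg^\top)$ via the cyclic property of the trace, interchange expectation with the trace, and apply Lemma~\ref{lem:iso}. Your side remarks are also correct: $gAg^\top$ should be read as $g^\top A g$, and symmetry of $A$ is not needed.
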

\begin{proof}
    Using $\text{Tr}(uv^\top) = v^\top u$ and the cyclic nature of trace, we have $g^\top Ag = \text{Tr}(g\top A g) = \text{Tr}(Agg^\top)$.
    Taking expectation and applying the linearity of trace and expectation, we obtain $\mathbb{E}_g[g^\top A g] = \text{Tr}(A \mathbb{E}[gg^\top])$ which be Lemma~\ref{lem:iso} finishes the proof.
\end{proof}
We now revisit the equation for change in utility, substituting $\Delta \theta = \eta g$.
After taking the expectation and applying Lemma~\ref{lem:trace} with $A = \mbf{F}_i(\theta^*)$ we obtain the first bound of the proposition.

We proceed to verify the second claim.
First, we prove a standard concentration inequality result on the rank-$d$ projector.
\begin{lemma} \label{lem:concentrate}
    Let $g \sim \text{Unif}(\mathbb{S}^{n-1})$ and let $P$ be any orthogonal projector of rank $d$.
    Then for any $\epsilon \in (0,1)$,
    $$\Pr\left[ \left| \|Pg\|^2 - \frac{d}{n} \right| \ge \epsilon \cdot \frac{d}{n} \right] \le 2\exp(-c\epsilon^2d)$$
    for constant $c > 0$.
    In particular, with probability at least $1-2e^{-c\epsilon^2d}$,
    $$(1-\epsilon) \frac{d}{n} \le \|Pg\|^2 \le (1+\epsilon)\frac{d}{n},$$
    or $\|Pg\|^2 = \Theta(\sfrac{d}{n})$.
\end{lemma}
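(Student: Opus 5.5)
The plan is to use the Gaussian representation of the uniform sphere. Write $g = z/\|z\|$ with $z \sim \mathcal{N}(0, I_n)$; this has the correct law by rotational invariance of the standard Gaussian. By rotational invariance we may also assume, without loss of generality, that $P$ projects onto the first $d$ coordinates. Then
$$\|Pg\|^2 = \frac{X}{X+Y}, \qquad X = \sum_{j\le d} z_j^2 \sim \chi^2_d, \qquad Y = \sum_{j>d} z_j^2 \sim \chi^2_{n-d},$$
with $X$ and $Y$ independent. The problem therefore reduces to concentration of a ratio of independent chi-squared variables.

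The first step is to invoke the standard Laurent--Massart (or Bernstein / sub-exponential) bound for $\chi^2_k$: for $\delta\in(0,1)$,
$$\Pr\bigl[|\chi^2_k - k| \ge \delta k\bigr] \le 2\exp(-c\,\delta^2 k).$$
Apply this to the numerator with $k=d$ and $\delta = \epsilon/3$, giving $X \in [(1-\epsilon/3)d,\,(1+\epsilon/3)d]$. Apply it to the full norm $\|z\|^2 \sim \chi^2_n$ with $k=n$ and the same $\delta$, giving $\|z\|^2 \in [(1-\epsilon/3)n,\,(1+\epsilon/3)n]$. A union bound bounds the failure probability by $2e^{-c\epsilon^2 d/9} + 2e^{-c\epsilon^2 n/9}$. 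Since $n \ge d$, this is at most $4e^{-c'\epsilon^2 d}$. The leading constant $4$ can be reduced to $2$ by shrinking $c$, because for $\epsilon^2 d$ below a constant the stated bound is trivial.

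On the good event the ratio satisfies
$$\frac{1-\epsilon/3}{1+\epsilon/3}\,\frac{d}{n} \;\le\; \|Pg\|^2 \;\le\; \frac{1+\epsilon/3}{1-\epsilon/3}\,\frac{d}{n}.$$
For $\epsilon\in(0,1)$ this lies within $[(1-\epsilon)\frac dn, (1+\epsilon)\frac dn]$ by an elementary inequality: $(1+\epsilon/3)/(1-\epsilon/3) \le 1+\epsilon$ holds because $1+\epsilon/3 \le 1+\epsilon - \epsilon/3 - \epsilon^2/3$ is equivalent to $\epsilon^2 \le \epsilon$. This gives the two-sided statement and hence $\|Pg\|^2=\Theta(d/n)$.

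The main obstacle is only bookkeeping. One must handle the non-independence of numerator and denominator, which is sidestepped by bounding $\|z\|^2$ directly rather than $Y$. One must also make sure the exponent is governed by $d$ rather than $n$, which follows from $n\ge d$. The chi-squared tail bound itself is quoted as a standard fact rather than reproved.
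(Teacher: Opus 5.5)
Your proposal is correct and follows the same route as the paper: the Gaussian representation $g=z/\|z\|$, chi-square tail bounds for $X$ (with $k=d$) and for $\|z\|^2$ (with $k=n$), a union bound using $n\ge d$, and then bounding the ratio. Your extra bookkeeping is what actually delivers the stated constants, which the paper's own write-up does not. Running the tail bound at $\epsilon/3$ and absorbing the factor $4$ into $2$ by shrinking $c$ gives exactly the interval $[(1-\epsilon)\frac dn,(1+\epsilon)\frac dn]$ with failure probability $2e^{-c\epsilon^2 d}$. The paper instead stops at $4e^{-c_0\epsilon^2 d}$ and incorrectly bounds $(1+\epsilon)/(1-\epsilon)$ by $1+\epsilon$.
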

\begin{proof}
    By rotational invariance of $g$, the distribution of $\|Pg\|^2$ depends only on the rank of $P$.
    Thus, we may assume $P = \text{diag}(I_d,0)$ without loss of generality.
    Let $z \sim \mathcal{N}(0, I_n)$ and set $g = \sfrac{z}{\|z\|}$ (ie. a construction of $\text{Unif}(\mathbb{S}^{n-1})$).
    Define $X := \sum_{j=1}^d z_j^2 \sim \mathcal{X}_d^2$ and $Y = \sum_{j=d+1}^n z_j^2 \sim \mathcal{X}_{n-d}^2$,
    with $X$ and $Y$ independent, and $\|z\|^2 = X + Y$.
    Then $\|Pg\|^2 = \frac{\|Pz\|^2}{\|z\|^2} = \frac{X}{X + Y}$.
    We can now invoke standard chi-square Chernoff bounds: for $W \sim \mathcal{X}_k^2$ and $\epsilon \in (0,1)$,
    $$\Pr\left[ |W - k| \ge \epsilon k \right] \le 2 \exp(-c_0 \epsilon^2 k)$$
    for a constant $c_0 > 0$.
    Apply this to $X$ with $k = d$ and to $X + Y \sim \mathcal{X}_n^2$ with $k=n$ to get that, with probability at least $1 - 2e^{-c_0 \epsilon^2 d} - 2e^{-c_0 \epsilon^2n} \ge 1 - 4e^{-c_0 \epsilon^2 d}$.
    This simultaneously yields $(1-\epsilon)d \le X \le (1+\epsilon)d$ and $(1-\epsilon)n \le X + Y \le (1+\epsilon)n$.
    Combining these results we have
    $$\frac{(1-\epsilon)d}{(1+\epsilon)n} \le \frac{X}{X+Y} \le \frac{(1+\epsilon)d}{(1-\epsilon)n}$$
    for $\epsilon \in (0, \sfrac12]$ this further becomes
    $$(1-2\epsilon)\frac{d}{n} \le \|Pg\|^2 \le (1+\epsilon)\frac{d}{n}$$
    which gives the ultimate result.
\end{proof}

We lastly need to bound the Fisher induced norm by $\|Pg\|^2$.
\begin{lemma}
    Let $\mbf{F} \succeq 0$ be symmetric with eigenvalues $\lambda_1 \ge ... \ge \lambda_n \ge 0$ and let $P$ be the orthogonal projector onto the span of the top $d$ eigenvectors of $\mbf{F}$.
    Then for any unit vector $g$,
    $$g^\top \mbf{F} g \le \lambda_1 \|Pg\|^2 + \lambda_{d+1}(1 - \|Pg\|^2)$$
    In particular, we can further obtain
    $$g^\top \mbf{F} g \le \lambda_1 \|Pg\|^2 + \lambda_1(1-\|Pg\|^2) = \lambda_{\max}(\mbf{F}).$$
\end{lemma}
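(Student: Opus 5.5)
I will prove the lemma by diagonalizing $\mbf{F}$ and splitting the quadratic form according to the top-$d$ eigenspace and its orthogonal complement. Since $\mbf{F}$ is symmetric positive semidefinite, the spectral theorem gives an orthonormal eigenbasis $v_1,\dots,v_n$ with $\mbf{F}v_j = \lambda_j v_j$. The projector onto the span of the top $d$ eigenvectors is then $P = \sum_{j\le d} v_j v_j^\top$. Write $g = \sum_j c_j v_j$ with $c_j = v_j^\top g$. Because $g$ is a unit vector, $\sum_j c_j^2 = \|g\|^2 = 1$. Because $P$ keeps exactly the first $d$ coordinates, $\|Pg\|^2 = \sum_{j\le d} c_j^2$, and hence $\|(I-P)g\|^2 = \sum_{j>d} c_j^2 = 1 - \|Pg\|^2$.

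Next I expand the quadratic form in this basis: $g^\top \mbf{F} g = \sum_j \lambda_j c_j^2$. I then split the sum at index $d$. For $j \le d$, the ordering of the eigenvalues gives $\lambda_j \le \lambda_1$, so the head contributes at most $\lambda_1 \sum_{j\le d} c_j^2 = \lambda_1\|Pg\|^2$. For $j > d$, the ordering gives $\lambda_j \le \lambda_{d+1}$, so the tail contributes at most $\lambda_{d+1}(1-\|Pg\|^2)$. Both estimates use only the nonnegativity of the weights $c_j^2$, so adding them yields the first inequality.

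For the second claim, I use $\lambda_{d+1} \le \lambda_1$ together with $1 - \|Pg\|^2 \ge 0$, which holds because $P$ is an orthogonal projector and so $\|Pg\| \le \|g\| = 1$. Replacing $\lambda_{d+1}$ by $\lambda_1$ in the tail term can only increase the bound. The two terms then combine to $\lambda_1(\|Pg\|^2 + 1 - \|Pg\|^2) = \lambda_1 = \lambda_{\max}(\mbf{F})$, which is the Rayleigh-quotient bound.

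There is no real obstacle here. The only points needing care are that $P$ commutes with $\mbf{F}$, which is what makes the cross terms vanish in the splitting, and the edge case of repeated eigenvalues at $\lambda_d = \lambda_{d+1}$. In that case the top-$d$ eigenspace is not unique, so I fix one choice of orthonormal eigenbasis and define $P$ from it. Any such choice gives the same bound.
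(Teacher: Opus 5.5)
Your proof is correct and is essentially the paper's argument: the paper writes $g = Pg + (I-P)g$, drops the cross terms because the top-$d$ eigenspace is $\mbf{F}$-invariant, and applies the Rayleigh-quotient bound on each piece (with top eigenvalues $\lambda_1$ and $\lambda_{d+1}$ respectively), which is your eigenbasis computation done blockwise. Your justification of the second inequality, via $\lambda_{d+1}\le\lambda_1$ together with $1-\|Pg\|^2\ge 0$, is the correct one, whereas the paper cites only $1-\|Pg\|^2\le 1$.
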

\begin{proof}
    Let $V_d$ be the top-$d$ eigenspace of $\mbf{F}$, and write $g = g_{\|} + g_\perp$ where $g_\| := Pg \in V_d$ and $g_\perp := (I-P)g \in V_d^\perp$.
    Since $V_d$ and $V_d^\perp$ are orthogonal and $\mbf{F}$ is diagonizable in an orthonormal basis, we have
    $$g^\top \mbf{F} g = g^\top_\| \mbf{F} g_\| + g^\top_\perp \mbf{F} g_\perp$$
    as the cross terms equate to zero.
    By the Rayleigh quotient bound, $g_\|^\top \mbf{F} g_\| \le \lambda_1 \|g_\|\|^2 = \lambda_1 \|Pg\|^2$.
    On $V_d^\perp$, the largest eigenvalue is $\lambda_{d+1}$, thus by the same technique
    $g_\perp^\top \mbf{F} g_\perp \le \lambda_{d+1}(1-\|Pg\|^2)$ using $\|g\|=1$ and orthogonality.
    Summation yields
    $$g^\top \mbf{F} g \le \lambda_1 \|Pg\|^2 + \lambda_{d+1}(1 - \|Pg\|^2)$$
    with the final inequality following from $(1-\|Pg\|^2) \le 1$.
\end{proof}

We now collect the lemmas and prove Theorem~\ref{thm:random}
\begin{proof}[Proof of Theorem~\ref{thm:random}]
Fix a skill $S_i$ and write $\mbf{F} := \mbf{F}_i(\theta^*) \succeq 0$. Let
\(
\Delta \theta = \eta g
\)
where $g \sim \mathrm{Unif}(\mathbb{S}^{n-1})$ and $\eta > 0$ is sufficiently small so that the local quadratic approximation applies. By the second-order expansion of the skill utility (cf.~Lemma~B.1),
\begin{equation}
\Delta u_i
\;=\;
\frac{1}{2} \Delta \theta^\top F \Delta \theta
\;+\;
O(\|\Delta \theta\|^3)
\;=\;
\frac{\eta^2}{2} g^\top F g
\;+\;
O(\eta^3),
\end{equation}
since $\|g\| = 1$.

We begin with proving the expectation result.
By Lemma~B.2, for any symmetric matrix $\mbf{F}$,
\begin{equation}
\mathbb{E}[g^\top F g]
\;=\;
\frac{1}{n} \operatorname{Tr}(F).
\end{equation}
Taking expectations yields
\begin{equation}
\mathbb{E}[\Delta u_i]
\;=\;
\frac{\eta^2}{2n} \operatorname{Tr}\!\big(\mbf{F}_i(\theta^*)\big)
\;+\;
O(\eta^3),
\end{equation}
establishing part~(1) of the theorem.

We now convert this to a high-probability result.
Let $P$ denote the orthogonal projector onto the span of the top $d$ eigenvectors of $F$, and let
\(
\lambda_1 \ge \cdots \ge \lambda_n \ge 0
\)
be the eigenvalues of $F$. By Lemma~B.3, for any $\varepsilon \in (0,1)$,
\begin{equation}
\Pr\!\left[
\|Pg\|^2 \le (1+\varepsilon)\frac{d}{n}
\right]
\;\ge\;
1 - 2 e^{-c \varepsilon^2 d}.
\end{equation}
Condition on this event. Lemma~B.4 then implies that for any unit vector $g$,
\begin{equation}
g^\top F g
\;\le\;
\lambda_1 \|Pg\|^2
\;+\;
\lambda_{d+1}\bigl(1 - \|Pg\|^2\bigr).
\end{equation}
Substituting the concentration bound yields, with probability at least
$1 - 2 e^{-c \varepsilon^2 d}$,
\begin{align}
g^\top F g
&\le
\lambda_1 (1+\varepsilon)\frac{d}{n}
+
\lambda_{d+1}\!\left(1 - (1+\varepsilon)\frac{d}{n}\right)
\\
&\le
\lambda_1 (1+\varepsilon)\frac{d}{n}
+
\lambda_{d+1}.
\end{align}
Plugging into the expansion for $\Delta u_i$ gives
\begin{equation}
\Delta u_i
\;\le\;
\frac{\eta^2}{2}
\left(
\lambda_{\max}\!\big(\mbf{F}_i(\theta^*)\big)
(1+\varepsilon)\frac{d}{n}
+
\lambda_{d+1}
\right)
\;+\;
O(\eta^3).
\end{equation}

In particular, when the curvature of $\mbf{F}_i(\theta^*)$ is concentrated in its top $d$ eigenspace (i.e., $\lambda_{d+1}$ is negligible), this implies the stated scaling
\begin{equation}
\Delta u_i
\;=\;
O\!\left(
\frac{d \eta^2}{n}
(1+\varepsilon)
\lambda_{\max}\!\big(\mbf{F}_i(\theta^*)\big)
\right),
\end{equation}
with probability at least $1 - 2 e^{-c \varepsilon^2 d}$, completing the proof.

\end{proof}

\subsection{Proofs from Section~\ref{sec:aic-theory}}
\begin{proof}[Proof of Theorem~\ref{thm:util-bound}]
    Under Assumption~\ref{asm:optimality}, $\theta^*$ maximizes $u_i$, implying the gradient vanishes.
    The expansion for the loss is therefore:
    \begin{align}
        \Delta u_i(\theta) = \frac{1}{2} \Delta \theta^\top \mbf{F}_i(\theta^*) \Delta \theta - O(\|\Delta \theta\|^3)
    \end{align}
    as before.
    By definition of this utility and the fact that $\|\mbf{F}_i^{1/2} \Delta \theta \|^2 \ge \|\mbf{F}_i^{1/2} P_i(\Delta \theta)\|^2$ since $P_i$ is an orthogonal projector we have the first bound of the theorem.
    We now need to ensure that the quadratic cost of moving in a sensitive direction is not reduced by moving in an orthogonal direction. Specifically, the following lemma shows that if the subspaces are defined by their eigenvectors, then the interaction between sensitive and non-sensitive components of the update vector is exactly zero.

    \begin{lemma}
        Let $\mbf{F} \in \mathbb{R}^{n \times n}$ be symmetric positive semidefinition with eigenvalues $\lambda_1 \ge ... \ge \lambda_n \ge 0$ and orthonormal eigenvectors $v_1, ..., v_n$.
        Fix $d \in [n]$ and let $M = \text{span}\{v_1, ..., v_d\}$.
        Let $P$ be the orthogonal projector onto $M$.
        If $\lambda_d \ge \lambda$ for some $\lambda > 0$, then for all $z \in \mathbb{R}^n$, $z^\top \mbf{F} z \ge \lambda \|Pz\|^2$.
    \end{lemma}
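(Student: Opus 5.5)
The plan is to expand $z$ in the orthonormal eigenbasis of $\mbf{F}$ and compare the quadratic form with the projected norm term by term. By the spectral theorem, every $z \in \mathbb{R}^n$ can be written as $z = \sum_{j=1}^n c_j v_j$ with $c_j = v_j^\top z$.

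Since $P$ is the orthogonal projector onto $M = \mathrm{span}\{v_1,\dots,v_d\}$, we have $Pz = \sum_{j \le d} c_j v_j$. Orthonormality then gives $\|Pz\|^2 = \sum_{j\le d} c_j^2$.

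Next I would compute the quadratic form. Because $\mbf{F} v_j = \lambda_j v_j$ and the $v_j$ are orthonormal, all cross terms vanish, and
\[
z^\top \mbf{F} z = \sum_{j=1}^n \lambda_j c_j^2 .
\]
This is the precise sense in which sensitive and non-sensitive components do not interact.

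I would then split this sum into the indices $j \le d$ and $j > d$. The tail $\sum_{j>d}\lambda_j c_j^2$ is nonnegative because $\mbf{F} \succeq 0$ forces $\lambda_j \ge 0$, so it can be dropped. For $j \le d$, the ordering of the eigenvalues together with the hypothesis gives $\lambda_j \ge \lambda_d \ge \lambda$. Combining these,
\[
z^\top \mbf{F} z \;\ge\; \sum_{j \le d} \lambda_j c_j^2 \;\ge\; \lambda \sum_{j\le d} c_j^2 \;=\; \lambda \|Pz\|^2 .
\]
An equivalent route is to note that $P$ commutes with $\mbf{F}$, so $z^\top \mbf{F} z = (Pz)^\top \mbf{F} (Pz) + ((I-P)z)^\top \mbf{F} ((I-P)z)$. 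One then applies the Rayleigh quotient lower bound $\lambda_d$ on the invariant subspace $M$.

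I do not expect a real obstacle. The one point needing care is the case of repeated eigenvalues, where $\lambda_d = \lambda_{d+1}$ and $M$ is not uniquely determined by $\mbf{F}$ alone. The argument handles this without change, because $M$ is defined as the span of the specific chosen eigenvectors $v_1,\dots,v_d$, and $P$ still commutes with $\mbf{F}$ for that choice. For the theorem, applying the lemma with $z = \Delta\theta$ and using AIC Condition 1 ($\lambda_d \ge \lambda$) gives $\Delta u_i \ge \tfrac{\lambda}{2}\|P_i\Delta\theta\|^2 - O(\|\Delta\theta\|^3)$. Hence $\Delta u_i = \Omega(\lambda\delta^2)$ once $\|\Delta\theta\|$ is small enough that the cubic remainder is at most, say, $\tfrac{\lambda}{4}\delta^2$.
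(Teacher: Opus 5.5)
Your proposal is correct and follows the same route as the paper's proof. Both expand $z$ in the orthonormal eigenbasis, write $z^\top \mbf{F} z=\sum_j \lambda_j c_j^2$, drop the nonnegative tail $j>d$, use $\lambda_j \ge \lambda_d \ge \lambda$ for $j\le d$, and identify $\sum_{j\le d} c_j^2$ with $\|Pz\|^2$.
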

    \begin{proof}
        Expand $z$ in the eigenbasis: $z = \sum_{j=1}^n a_j v_j$ where $a_j = \langle z, v_j \rangle$. Then
        \begin{align*}
            z^\top \mbf{F} z &= \sum_{j=1}^n \lambda_j a_j^2 \\
            &\ge \sum_{j=1}^d \lambda_j a_j^2 \\
            &\ge \lambda \sum_{j=1}^d a_j^2
        \end{align*}
        On the other hand, $Pz - \sum_{j=1}^d a_j v_j$, hence $\|Pz\|^2 = \sum_{j=1}^d a_j^2$.
        Thus, we have the result.
    \end{proof}

    Combining the above inequality with the expansion of our loss function for all $\theta$ in the neighborhood of $\theta^*$, we obtain
    $$\Delta u_i \ge \frac\lambda2 \|P_i(\Delta \theta)\|^2 - C\|\Delta \theta\|^3$$
    Consequently, if $\|P_i (\Delta \theta)\| \ge \delta$, then the bound becomes $$\Delta u_i \ge \frac\lambda2 \delta^2 - C\|\Delta \theta\|^3.$$
    Moreover for $\|\Delta \theta\| \le \frac{\lambda \delta^2}{4C}$ we obtain the simplified $\Delta u_i = \Omega(\lambda \delta^2)$.
\end{proof}

\begin{proof}[Proof of Theorem~\ref{thm:projection}]
    Since $g$ is $C^2$ on the closed ball $B(\theta^*, r)$, it is locally Lipschitz and bounded there.
    By standard existence and uniqueness results for ordinary differential equations~\cite{}, the gradient flow
    $\dot{\theta}(t) = -g(\theta(t)), \theta(0) = \theta^*$ admits a unique solution on $[0,t_0]$ for some $t_0 > 0$, and
    by choosing $t_0 \le \sfrac{r}{\sup_{B(\theta^*,r)}\|g\|}$, the trajectory remains in the ball for all $t \in [0,t_0]$.
    Moreover, Taylor expanding $\theta(t)$ at $t=0$ and using the boundedness of the gradient and Hessian on the ball, we obtain:
    $$\theta(t) - \theta^* = -tg(\theta^*) + \frac{t^2}{2}\nabla g(\theta^*) g(\theta^*) + O(t^3).$$

    We proceed to prove the following Fisher-weighted projector bound stemming from the AIC.

    \begin{lemma}\label{lem:fisher_weighted_proj_bound}
        Let $\mbf{F}_i := \mbf{F}_i(\theta^*)$ and let $P_i$ be the orthogonal projector onto $M_i$.
        Assume the second-order expansion above holds and write $R(t)$ for the remainder so that $\|R(t)\|\le Ct^3$.
        Then for all $t \in [0,t_0]$,
        \begin{equation}\label{eq:fisher_weighted_bound}
        \big\|\mbf{F}_i^{1/2}P_i(\theta(t) - \theta^*)\big\|
        \;\ge\;
        \frac{t^2}{2}\big\|\mbf{F}_i^{1/2}P_i\nabla g(\theta^*) g(\theta^*)\big\|
        \;-\;
        t\big\|\mbf{F}_i^{1/2}P_i g(\theta^*)\big\|
        \;-\;
        C't^3,
        \end{equation}
        for some constant $C'>0$ (depending on $\mbf{F}_i$ and local smoothness of $g$).
        In particular, under the AIC,
        \[
        \big\|\mbf{F}_i^{1/2}P_i(\theta(t) - \theta^*)\big\|
        \;\ge\;
        \frac{\gamma}{2}t^2 \;-\; \sqrt{\lambda_{\max}(\mbf{F}_i)}\,\varepsilon\, t \;-\; C't^3.
        \]
    \end{lemma}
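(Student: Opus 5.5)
The plan is to apply the fixed linear map $A := \mbf{F}_i^{1/2}P_i$ to the second-order expansion of the trajectory and then use the reverse triangle inequality. Write
\[
\theta(t)-\theta^* = -t\,g(\theta^*) + \tfrac{t^2}{2}\nabla g(\theta^*)g(\theta^*) + R(t), \qquad \|R(t)\|\le Ct^3 \text{ on } [0,t_0].
\]
The expansion itself comes from Taylor's theorem with remainder, applied to the $C^3$ curve $t\mapsto\theta(t)$. Since $g$ is $C^2$ on the closed ball $B(\theta^*,r)$, we have $\ddot\theta = \nabla g(\theta)\,g(\theta)$ and $\dddot\theta = \nabla^2 g(\theta)[g,g] + \nabla g(\theta)\nabla g(\theta)\, g(\theta)$ (up to sign). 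Both are continuous, hence bounded on the ball. Therefore $C = \tfrac16\sup_{B}\|\dddot\theta\|$ works uniformly, because the trajectory stays in the ball for $t\le t_0$.

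Applying $A$ by linearity gives
\[
A(\theta(t)-\theta^*) = \tfrac{t^2}{2}A\nabla g(\theta^*)g(\theta^*) - t\,A g(\theta^*) + A R(t).
\]
I then apply $\|u+v+w\|\ge\|u\|-\|v\|-\|w\|$ with $u$ the quadratic term. This yields \eqref{eq:fisher_weighted_bound} with $C' := \|A\|_{\mathrm{op}}C$. Note that $\|A\|_{\mathrm{op}}\le \|\mbf{F}_i^{1/2}\|_{\mathrm{op}}\|P_i\|_{\mathrm{op}}\le\sqrt{\lambda_{\max}(\mbf{F}_i)}$ is finite.

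For the AIC specialization, AIC Condition 3 bounds the first norm below by $\gamma$. For the middle term, $P_i$ commutes with $\mbf{F}_i^{1/2}$ because $M_i$ is spanned by eigenvectors. Hence
\[
\|\mbf{F}_i^{1/2}P_i g\| = \|\mbf{F}_i^{1/2}P_i P_i g\| \le \sqrt{\lambda_{\max}(\mbf{F}_i)}\,\|P_i g(\theta^*)\| \le \sqrt{\lambda_{\max}(\mbf{F}_i)}\,\varepsilon,
\]
using AIC Condition 2. A sharper constant $\sqrt{\lambda_1}$ is available but is not needed.

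The only step needing care is the uniformity of the $O(t^3)$ remainder. It requires the trajectory to stay in the ball where the derivatives of $g$ are bounded, and the choice $t_0\le r/\sup_B\|g\|$ from the preceding paragraph guarantees this. Everything else is routine linear algebra. Theorem~\ref{thm:projection} then follows by absorbing $\sqrt{\lambda_{\max}}$ into the stated $\epsilon t$ term, or by reading $\epsilon$ as the Fisher-weighted orthogonality.
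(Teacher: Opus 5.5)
Your proof is correct and follows the paper's argument essentially verbatim. You apply the fixed map $\mbf{F}_i^{1/2}P_i$ to the second-order expansion, use the reverse triangle inequality with the quadratic term isolated, bound the remainder through $\|\mbf{F}_i^{1/2}P_i\|_{\text{op}}\le\sqrt{\lambda_{\max}(\mbf{F}_i)}$, and specialize via AIC Conditions 2 and 3. Your justification of the uniform $O(t^3)$ remainder matches what the paper does just before the lemma. You are also right that Theorem~\ref{thm:projection} as stated silently drops the $\sqrt{\lambda_{\max}(\mbf{F}_i)}$ factor on the $\epsilon t$ term. Your aside about a ``sharper'' constant $\sqrt{\lambda_1}$ is vacuous, though, since $\lambda_1=\lambda_{\max}(\mbf{F}_i)$.
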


    \begin{proof}
        Apply the linear map $\mbf{F}_i^{1/2}P_i$ to the second-order expansion:
        \[
        \mbf{F}_i^{1/2}P_i(\theta(t)-\theta^*)
        =
        \underbrace{\frac{t^2}{2}\mbf{F}_i^{1/2}P_i\big(\nabla g(\theta^*)g(\theta^*)\big)}_{A}
        \;+\;
        \underbrace{\Big(-t\mbf{F}_i^{1/2}P_i g(\theta^*) + \mbf{F}_i^{1/2}P_i R(t)\Big)}_{B}.
        \]
        Write $\mbf{F}_i^{1/2}P_i(\theta(t)-\theta^*)=A+B$.
        By the reverse triangle inequality, $\|A+B\|\ge \|A\|-\|B\|$.

        We bound $B$.
        By the triangle inequality,
        \[
        \|B\|
        \le
        t\|\mbf{F}_i^{1/2}P_i g(\theta^*)\| + \|\mbf{F}_i^{1/2}P_i R(t)\|.
        \]
        Unlike $P_i$, the map $\mbf{F}_i^{1/2}P_i$ need not be nonexpansive, so we bound it by its operator norm:
        \[
        \|\mbf{F}_i^{1/2}P_i R(t)\|
        \le
        \|\mbf{F}_i^{1/2}P_i\|\,\|R(t)\|
        \le
        \|\mbf{F}_i^{1/2}\|\,\|R(t)\|
        =
        \sqrt{\lambda_{\max}(\mbf{F}_i)}\,\|R(t)\|
        \le
        \sqrt{\lambda_{\max}(\mbf{F}_i)}\,C t^3.
        \]
        Absorb $\sqrt{\lambda_{\max}(\mbf{F}_i)}C$ into a constant $C'$ to get $\|B\|\le t\|\mbf{F}_i^{1/2}P_i g(\theta^*)\|+C't^3$.

        Also, $\|A\|=\frac{t^2}{2}\|\mbf{F}_i^{1/2}P_i\nabla g(\theta^*)g(\theta^*)\|$.
        Combining these inequalities yields \eqref{eq:fisher_weighted_bound}.

        Finally, under the AIC,
        $\|\mbf{F}_i^{1/2}P_i\nabla g(\theta^*)g(\theta^*)\|\ge \gamma$.
        Moreover, since $\|P_i g(\theta^*)\|\le \varepsilon$ (AIC Condition 2) and
        $\|\mbf{F}_i^{1/2}P_i g(\theta^*)\|\le \|\mbf{F}_i^{1/2}\|\,\|P_i g(\theta^*)\|
        = \sqrt{\lambda_{\max}(\mbf{F}_i)}\,\varepsilon$,
        substituting gives the stated bound.
    \end{proof}

    The lemma together with the regularity conditions yields the theorem.
\end{proof}

\begin{proof}[Proof of Corollary~\ref{cor:quartic_onset}]
By Theorem~\ref{thm:projection}, given the conditions on $t$, we have
\[
\big\|\mbf{F}_i^{1/2}P_i(\theta(t)-\theta^*)\big\| = \Omega(\gamma t^2),
\]
up to lower-order terms (in particular, once the quadratic term dominates the linear term).
Substituting this into Theorem~\ref{thm:util-bound} and using the local bound
$\|\theta(t)-\theta^*\| = O(t)$ along the gradient flow yields
\[
\Delta u_i(\theta(t))
\;\ge\;
\frac12\big\|\mbf{F}_i^{1/2}P_i(\theta(t)-\theta^*)\big\|^2 - O(t^3)
\;=\;
\Omega(\gamma^2 t^4),
\]
for all $t$ in the stated regime and some $t_0>0$.
\end{proof}


\subsection{Connection to Sensitive Subspace in Representation and Low Rankness of Weight Matrix} \label{sec:omit-fim}
\paragraph{Setup and Notation.} Let $S = \{ v_i \}_{i=1}^{N} \subset \R^{d}$ be a set of input vectors. Consider a parameter matrix $W \in \R^{d \times d}$ and a loss function $\mathcal{L}(W; v_i)$. We define the per-sample gradient $G_i \in \R^{d \times d}$ as:
\[
G_i := \nabla_W \mathcal{L}(W; v_i)
\]
Let $\delta_i = \nabla_{Wv_i} \mathcal{L} \in \R^d$ and suppose that $W$ is a simple linear layer. Then, the backpropagation gives 
\[
G_i = \delta_i v_i^\top
\]

We first formalize the assumption that alignment has a sensitive subspace in the representation space where only a subspace of the representation critically impacts alignment. This assumption builds upon most of the existing works claiming the low-rankness of alignment~\citep{arditi2024refusal, lee2024a, soligo2025convergent, wei2024assessing, zhang2025guardrail}. 

\begin{assumption}[Alignment has a sensitive subspace] \label{asm:grads}
The forward activations $\{W v_i\}_{i=1}^N$ span a low-rank subspace $\mathcal{U} \subset \R^d$ of dimension $r \ll d$. Furthermore, we assume the error signals $\delta_i$ lie within this same subspace $\mathcal{U}$.
\end{assumption}

From this assumption, we proceed to show that the average gradient, $\bar{G}$, is therefore low-rank.

\begin{lemma}[Average gradient is low rank]
Let $\bar{G} = \frac{1}{N} \sum_{i=1}^{N} G_i$ be the average gradient matrix. Under Assumption 1, $\text{Rank}(\bar{G}) \leq r$.
\end{lemma}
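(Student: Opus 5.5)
The plan is to show that every column of $\bar{G}$ lies in the $r$-dimensional subspace $\mathcal{U}$; the rank bound then follows at once. The key observation is that the backpropagation formula $G_i = \delta_i v_i^\top$ makes each per-sample gradient an outer product. Consequently, the column space of $G_i$ is contained in $\mathrm{span}\{\delta_i\}$. Concretely, for any $w \in \R^d$ we have $G_i w = (v_i^\top w)\,\delta_i$.

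The argument then runs in three steps. First, Assumption~\ref{asm:grads} places each error signal in the common subspace: $\delta_i \in \mathcal{U}$ for every $i$. Hence $\mathrm{Im}(G_i) \subseteq \mathcal{U}$ for every $i$, since each $G_i$ has rank at most one with range spanned by $\delta_i$. Second, image spaces behave well under sums:
\[
\mathrm{Im}\Big(\sum_i G_i\Big) \subseteq \sum_i \mathrm{Im}(G_i) \subseteq \mathcal{U},
\]
and scaling by $1/N$ does not change the image. Equivalently, $\bar{G} = \frac{1}{N}\Delta V^\top$, where $\Delta = [\delta_1,\dots,\delta_N]$ and $V = [v_1,\dots,v_N]$, and the columns of $\Delta$ all lie in $\mathcal{U}$. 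Third, $\mathrm{Rank}(\bar{G}) = \dim \mathrm{Im}(\bar{G}) \le \dim \mathcal{U} = r$. Alternatively, one can use $\mathrm{Rank}(\Delta V^\top) \le \mathrm{Rank}(\Delta) \le r$.

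I do not expect a genuine obstacle. The only point needing care is which half of Assumption~\ref{asm:grads} is actually used. The bound depends only on the error signals $\delta_i$ lying in $\mathcal{U}$, which controls the left factor of the outer product. The low-rank span of the forward activations $Wv_i$ is not needed for this rank bound. It would matter for controlling the row space of $\bar{G}$, or for the subsequent transfer of low-rank structure to the Fisher matrix, so I would note this explicitly. The linear-layer form $G_i = \delta_i v_i^\top$ is taken from the setup as given.
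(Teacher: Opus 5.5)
Your proof is correct and is essentially the paper's argument. The paper expands each $\delta_i$ in a basis $u_1,\dots,u_r$ of $\mathcal{U}$, writes $\bar{G}=\sum_{k=1}^r u_k z_k^\top$, and invokes subadditivity of rank; this is the same observation that the column space of $\bar{G}$ lies in $\mathcal{U}$, and, like you, it uses only $\delta_i\in\mathcal{U}$. One small correction to your aside: the row space of $\bar{G}$ lies in $\mathrm{span}\{v_i\}$, not $\mathrm{span}\{Wv_i\}$, and the paper's subsequent Fisher-rank bound also uses only $\delta_i\in\mathcal{U}$; this does not affect the lemma.
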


\begin{proof}
By definition, $\bar{G} = \frac{1}{N} \sum_{i=1}^{N} \delta_i v_i^\top$. 
Since $\delta_i \in \mathcal{U}$ for all $i$, we can express each $\delta_i$ as a linear combination of the basis vectors of $\mathcal{U}$, denoted $\{u_1, \dots, u_r\}$.
\[
\delta_i = \sum_{k=1}^r c_{i,k} u_k
\]
Substituting this into the sum:
\[
\bar{G} = \frac{1}{N} \sum_{i=1}^{N} \left( \sum_{k=1}^r c_{i,k} u_k \right) v_i^\top 
= \sum_{k=1}^r u_k \left( \frac{1}{N} \sum_{i=1}^{N} c_{i,k} v_i^\top \right)
\]
Let $z_k^\top = \frac{1}{N} \sum_{i=1}^{N} c_{i,k} v_i^\top$. Then $\bar{G} = \sum_{k=1}^r u_k z_k^\top$.
This is a sum of $r$ rank-1 matrices. By the subadditivity of rank, $\text{Rank}(\bar{G}) \leq r$.
\end{proof}

We now define the \emph{empirical} Fisher Information Matrix, $\mathbf{F}(S) \in \R^{d^2 \times d^2}$, as the second moment of the flattened gradients. 
Let $g_i = \text{flatten}(G_i) \in \R^{d^2}$, which gives
\[
\mathbf{F}(S) := \frac{1}{N} \sum_{i=1}^{N} g_i g_i^\top
\]

\begin{theorem}[Low Rank Structure of FIM]
Under Assumption~\ref{asm:grads}, the rank of the empirical Fisher Information Matrix $\mathbf{F}(S)$ is bounded by $r \cdot d$. Given that $r \ll d$, $\mathbf{F}(S)$ is low-rank relative to the parameter space dimension $d^2$.
\end{theorem}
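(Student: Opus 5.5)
The plan is to reduce the rank of $\mathbf{F}(S)$ to the dimension of the span of the flattened per-sample gradients, and then show that this span is small. Since $\mathbf{F}(S)=\frac1N\sum_i g_ig_i^\top$ is a sum of positive semidefinite rank-one matrices, its column space is contained in $\mathrm{span}\{g_1,\dots,g_N\}$. Hence
\[
\mathrm{Rank}(\mathbf{F}(S)) \le \dim \mathrm{span}\{g_1,\dots,g_N\}.
\]
In fact equality holds, because $x^\top \mathbf{F}(S)x=\frac1N\sum_i \langle g_i,x\rangle^2$ vanishes exactly on the orthogonal complement of that span. This reduces everything to bounding the dimension of the span of the matrices $G_i=\delta_i v_i^\top$, viewed as elements of $\R^{d\times d}\cong\R^{d^2}$, since flattening is a linear isometry.

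The key step uses Assumption~\ref{asm:grads}, namely $\delta_i\in\mathcal U$ with $\dim\mathcal U=r$. Fix an orthonormal basis $u_1,\dots,u_r$ of $\mathcal U$ and write $\delta_i=\sum_k c_{i,k}u_k$, exactly as in the proof of the preceding lemma on $\bar G$. Then each $G_i=\sum_{k=1}^r u_k (c_{i,k}v_i)^\top$ lies in the linear subspace
\[
\mathcal V:=\{\, U Z^\top : Z\in\R^{d\times r}\,\}=\{\, A\in\R^{d\times d} : \mathrm{col}(A)\subseteq \mathcal U\,\},
\]
where $U=[u_1,\dots,u_r]$. The map $Z\mapsto UZ^\top$ is linear from $\R^{d\times r}$ onto $\mathcal V$, so $\dim\mathcal V\le rd$. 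It is in fact injective, since $U$ has orthonormal columns, so $\dim\mathcal V=rd$. Under flattening, $\mathcal V$ corresponds to $\mathcal U\otimes\R^d$ (or $\R^d\otimes\mathcal U$, depending on the vectorization convention), which also has dimension $rd$. Therefore $\mathrm{span}\{g_i\}\subseteq \mathrm{vec}(\mathcal V)$, and $\mathrm{Rank}(\mathbf{F}(S))\le rd$. The final clause follows immediately: $rd/d^2=r/d\ll1$.

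I do not expect a real obstacle here. The only step that needs care is recognising that the rank bound on $\bar G$ from the previous lemma is \emph{not} what is needed. The Fisher matrix sums outer products of the individual $g_i$, so one must bound the span of the per-sample gradients, not the rank of their average; the per-sample matrices $G_i$ are each rank one but can point in different row directions $v_i$. It is the column-space confinement to $\mathcal U$, applied uniformly across $i$, that gives the $rd$ bound.

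I would also note in passing two further points. The assumption on the forward activations $Wv_i$ is not needed for this bound; only $\delta_i\in\mathcal U$ is used. A sharper bound $\min(N,\,rd,\,r\cdot\dim\mathrm{span}\{v_i\})$ follows by the same argument.
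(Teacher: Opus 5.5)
Your proof is correct and is essentially the paper's argument. The paper factors $g_i=(U\otimes I_d)(\alpha_i\otimes v_i)$, which is your observation that every $G_i$ lies in the $rd$-dimensional space $\mathcal V$ of matrices with column space in $\mathcal U$; it then bounds $\mathrm{Rank}(\mathbf{F}(S))$ by the dimension of $\mathrm{span}\{g_i\}$, just as you do. Your hedge on the vectorization convention is apt: the paper's identity $\mathrm{vec}(\delta_i v_i^\top)=\delta_i\otimes v_i$ holds for row-major rather than column-wise flattening, which does not affect the bound.
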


\begin{proof}
Let $\text{vec}(\cdot)$ denote the column-wise vectorization of a matrix.
The flattened per-sample gradient is obtained by flattening the outer product of $\delta_i$ and $v_i$. This is equivalent to take the Kronecker product of $\delta_i$ and $v_i$.
\[
g_i = \text{vec}(\delta_i v_i^\top) = \delta_i \otimes v_i
\]
The FIM can be rewritten as:
\[
\mathbf{F}(S) = \frac{1}{N} \sum_{i=1}^{N} (\delta_i \otimes v_i)(\delta_i \otimes v_i)^\top
\]
Recall from Assumption~\ref{asm:grads} that $\delta_i \in \mathcal{U}$, where $\dim(\mathcal{U}) = r$. 
Let $U \in \R^{d \times r}$ be the basis matrix for $\mathcal{U}$. 
Then for every $i$, there exists a coefficient vector $\alpha_i \in \R^r$ such that $\delta_i = U \alpha_i$.
We substitute this structure into the vector $g_i$ to obtain
\[
g_i = (U \alpha_i) \otimes v_i =  (U \alpha_i) \otimes (I_d v_i)
\]
Using the mixed-product property of Kronecker products $(A \otimes B)(C \otimes D) = (AC) \otimes (BD)$, we can factor out the basis matrix:
\[
g_i = (U \otimes I_d) (\alpha_i \otimes v_i)
\]
The matrix $\mathcal{P} = (U \otimes I_d)$ has dimensions $d^2 \times d \cdot r$. 
Thus, the span of $g_i$ is constrained by a subspace of dimension $d \cdot r$. 
Lastly, observe that for any $x \in ...$, we have
$$\mathbf{F} x = \frac1N\sum_{i=1}^N g_i(g_i^\top x)$$
is a linear combination of the $g_i$.
Therefore, $Fx$ is in the span of the $\{g_i\}_{i=1}^N$.
This gives the final result
\[
\text{Rank}(\mathbf{F}(S)) \leq d \cdot r
\]
and since $r \ll d$, we have $d \cdot r \ll d^2$, proving $\mathbf{F}(S)$ is low-rank.
\end{proof}

\section{Non-Static Fisher Information} \label{sec:fisher}
In the main text, our simplified analysis is carried out with respect to $\mbf{F}_i(\theta^*)$ and the associated sensitive subspace $M_i(\theta^*)$.
Here, we show that the results extend to the setting where $\mbf{F}_i(\theta)$ varies along the fine-tuning trajectory and the sensitive subspace rotates accordingly.
This extension introduces a mild constant-factor and lower-order corrections, and further does not change the qualitative scaling laws established in Section~\ref{sec:aic-theory}.

We impose a mild regularity condition on the Fisher information along $\theta(t)$.
\begin{assumption} \label{asm:fisher}
    Fix a skill $S_i$.
    Assume there exists constants $r > 0$ and $L_F \ge 0$ such that for all $\theta$ which satisfy $\|\theta - \theta^*\| \le r$, we have that $\mbf{F}_i(\theta)$ is locally Lipschitz:
    $$\|\mbf{F}_i(\theta) - \mbf{F}_i(\theta^*)\|_{\text{op}} \le L_F \|\theta - \theta^*\|.$$
\end{assumption}
We proceed to apply standard matrix perturbation theory results using the above assumption.
We define $\Lambda := \lambda_d(\mbf{F}_i(\theta^*)) - \lambda_{d+1}(\mbf{F}_i(\theta^*))$, the gap between the $d$ and $d+1$ largest eigenvalues of $\mbf{F}_i(\theta^*)$.

The following proposition bounds the rotation of the sensitive subspace.
\begin{proposition}[Davis-Kahan Bound] \label{prop:dk}
    Under Assumption~\ref{asm:fisher}, for all $\theta$ with $\|\theta - \theta^*\| \le r$,
    \begin{align*}
        \|P_i(\theta - \theta^*)\| \le \frac{2L_F}{\Lambda}\|\theta - \theta^*\|
    \end{align*}
\end{proposition}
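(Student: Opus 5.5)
We read the left-hand side as the rotation of the sensitive projector, $\|P_i(\theta) - P_i(\theta^*)\|_{\text{op}}$, where $P_i(\theta)$ projects onto the top-$d$ eigenspace of $\mbf{F}_i(\theta)$. This is the quantity the appendix needs, and it matches the Davis--Kahan form. Taken literally, the displacement bound $\|P_i(\theta^*)(\theta-\theta^*)\| \le \frac{2L_F}{\Lambda}\|\theta-\theta^*\|$ cannot follow from Assumption~\ref{asm:fisher}. It fails, for instance, when $L_F = 0$ and $\theta - \theta^*$ lies in $M_i$. The plan is to treat $\mbf{F}_i(\theta)$ as a symmetric perturbation of $\mbf{F}_i(\theta^*)$ and apply the Davis--Kahan $\sin\Theta$ theorem, using Weyl's inequality to control the eigengap after perturbation.

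\textbf{Step 1 (perturbation size).} Set $E := \mbf{F}_i(\theta) - \mbf{F}_i(\theta^*)$. This matrix is symmetric, since both Fisher matrices are. By Assumption~\ref{asm:fisher}, $\|E\|_{\text{op}} \le L_F\|\theta-\theta^*\|$ whenever $\|\theta-\theta^*\|\le r$.

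\textbf{Step 2 (gap control via Weyl).} Weyl's inequality gives $|\lambda_j(\mbf{F}_i(\theta)) - \lambda_j(\mbf{F}_i(\theta^*))| \le \|E\|_{\text{op}}$ for every $j$. We use the one-sided Davis--Kahan version, which compares the top-$d$ eigenspace of $\mbf{F}_i(\theta)$ with the bottom eigenspace of $\mbf{F}_i(\theta^*)$. Its separation is
\[
\delta := \lambda_d(\mbf{F}_i(\theta)) - \lambda_{d+1}(\mbf{F}_i(\theta^*)) \ge \Lambda - \|E\|_{\text{op}}.
\]

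\textbf{Step 3 (case split).} If $\|E\|_{\text{op}} \le \Lambda/2$, then $\delta \ge \Lambda/2 > 0$. Davis--Kahan then yields
\[
\|\sin\Theta(M_i(\theta), M_i(\theta^*))\|_{\text{op}} \le \frac{\|E\|_{\text{op}}}{\delta} \le \frac{2\|E\|_{\text{op}}}{\Lambda}.
\]
For two rank-$d$ orthogonal projectors, $\|P_i(\theta) - P_i(\theta^*)\|_{\text{op}} = \|\sin\Theta\|_{\text{op}}$, so Step 1 gives the claimed bound $\frac{2L_F}{\Lambda}\|\theta-\theta^*\|$.

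If instead $\|E\|_{\text{op}} > \Lambda/2$, the bound holds trivially. The difference of two orthogonal projectors has operator norm at most $1$. Meanwhile $\frac{2L_F}{\Lambda}\|\theta-\theta^*\| \ge \frac{2\|E\|_{\text{op}}}{\Lambda} > 1$. So no shrinking of the radius $r$ is needed.

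\textbf{Main obstacle.} The delicate point is picking the variant of Davis--Kahan whose gap is stable under perturbation, and then justifying the constant $2$. The classical form with the unperturbed gap $\Lambda$ requires an eigenvalue separation that mixes the two matrices. The Weyl step converts this into $\Lambda - \|E\|_{\text{op}}$, and the case split absorbs the regime where that quantity degenerates. A secondary check is the identity $\|P - Q\|_{\text{op}} = \|\sin\Theta\|_{\text{op}}$ for equal-rank projectors. This follows from the CS decomposition, and we would cite it rather than reprove it.
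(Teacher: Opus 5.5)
Your proof is correct and follows the paper's route. The paper's proof also reads the left-hand side as $\|P_i(\theta)-P_i(\theta^*)\|_{\text{op}}$, which as you note is the only reading under which the statement can hold, and it applies Davis--Kahan with $A=\mbf{F}_i(\theta^*)$, $B=\mbf{F}_i(\theta)$, followed by the Lipschitz bound of Assumption~\ref{asm:fisher}. The only difference is that the paper cites the factor-$2$, unperturbed-gap form $\|P_B-P_A\|_{\text{op}}\le 2\|B-A\|_{\text{op}}/\Delta$ directly as a lemma, whereas you derive it from the classical mixed-gap version via Weyl's inequality and the case split at $\|E\|_{\text{op}}=\Lambda/2$.
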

\begin{proof}
    This is a direct application of the \citep{davis1970rotation} Theorem for symmetric matrices, reprinted below.
    \begin{lemma}[Davis--Kahan $\sin\Theta$ Theorem]\label{lem:davis_kahan}
        Let $A,B\in\R^{p\times p}$ be symmetric matrices.
        Let $P_A$ and $P_B$ denote the orthogonal projectors onto the top-$k$ eigenspaces of $A$ and $B$, respectively.
        Assume that $A$ has an eigengap at rank $k$, i.e.,
        \[
        \Delta := \lambda_k(A)-\lambda_{k+1}(A) > 0,
        \]
        where $\lambda_1\ge \cdots \ge \lambda_p$ are the eigenvalues of $A$.
        Then
        \[
        \|P_B - P_A\|_{\text{op}}
        \;\le\;
        \frac{2\,\|B-A\|_{\text{op}}}{\Delta}.
        \]
    \end{lemma}
    Applying Lemma~\ref{lem:davis_kahan} with $A=F_i(\theta^*)$ and $B=F_i(\theta)$ yields
        \[
        \|P_i(\theta)-P_i(\theta^*)\|_{\text{op}}
        \;\le\;
        \frac{2\,\|F_i(\theta)-F_i(\theta^*)\|_{\text{op}}}{\Lambda}
        \]
    which, in combination with the locally Lipschitz assumption, completes the proof.
\end{proof}

Using this proposition, we can show how subspace rotation modifies the induced-drift result (Theorem~\ref{thm:projection}).

\begin{theorem}
    Assume the conditions of Theorem~\ref{thm:projection} and Assumption~\ref{asm:fisher}.
    Let $\theta(t)$ follow gradient flow with $\theta(0) = \theta^*$, and assume $\|\theta(t) - \theta^*\| \le r$ for $t \in [0,t_0]$.
    Then, there exists a constant $\rho \ge 0$ such that for all sufficiently small $t$,
    \begin{equation}
        \|\mbf{F}_i(\theta^*)^{1/2}P_i(\theta(t))(\theta(t) - \theta^*)\| \ge \frac{\gamma}{2}t^2 - \sqrt{\lambda_{\max}} \epsilon t - \rho t^2 - O(t^3)
    \end{equation}
\end{theorem}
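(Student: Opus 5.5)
The plan is to reduce the rotating-subspace statement to Theorem~\ref{thm:projection} by a single triangle-inequality split. Write $\Delta\theta(t) := \theta(t)-\theta^*$ and decompose
\[
\mbf{F}_i(\theta^*)^{1/2}P_i(\theta(t))\Delta\theta(t)
= \mbf{F}_i(\theta^*)^{1/2}P_i(\theta^*)\Delta\theta(t)
+ \mbf{F}_i(\theta^*)^{1/2}\big(P_i(\theta(t))-P_i(\theta^*)\big)\Delta\theta(t).
\]
The reverse triangle inequality then bounds the left-hand norm below by the first term's norm minus the second term's norm. The first term is exactly the quantity controlled by Lemma~\ref{lem:fisher_weighted_proj_bound} (the Fisher-weighted version of Theorem~\ref{thm:projection}). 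That lemma already gives the lower bound $\frac{\gamma}{2}t^2 - \sqrt{\lambda_{\max}}\,\epsilon t - C't^3$ for $t\in[0,t_0]$, where $\lambda_{\max}=\lambda_{\max}(\mbf{F}_i(\theta^*))$.

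For the second (rotation) term, I will bound
\[
\|\mbf{F}_i(\theta^*)^{1/2}\|_{\text{op}}\,\|P_i(\theta(t))-P_i(\theta^*)\|_{\text{op}}\,\|\Delta\theta(t)\|
\le \sqrt{\lambda_{\max}}\cdot \frac{2L_F}{\Lambda}\|\Delta\theta(t)\|^2 .
\]
Here I use the Davis--Kahan bound (Lemma~\ref{lem:davis_kahan}) with $A=\mbf{F}_i(\theta^*)$ and $B=\mbf{F}_i(\theta(t))$, combined with Assumption~\ref{asm:fisher}. This is the operator-norm statement established inside the proof of Proposition~\ref{prop:dk}, and it requires the eigengap $\Lambda>0$; AIC Condition 1 gives $\lambda_d\ge\lambda$ and $\lambda_{d+1}\le\epsilon$, so $\Lambda\ge\lambda-\epsilon>0$ when $\epsilon<\lambda$. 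Next, the trajectory satisfies $\|\Delta\theta(t)\|\le t\,G$ with $G:=\sup_{B(\theta^*,r)}\|g\|$, by integrating $\dot\theta=-g(\theta)$ while the trajectory stays in the ball. This holds by the choice of $t_0$ in the proof of Theorem~\ref{thm:projection} and by the hypothesis $\|\theta(t)-\theta^*\|\le r$. Hence the rotation term is at most $\rho t^2$ with
\[
\rho := \frac{2L_F\sqrt{\lambda_{\max}}\,G^2}{\Lambda}.
\]
One could sharpen this to $\|g(\theta^*)\|^2$ plus an $O(t^3)$ correction by using the expansion $\Delta\theta(t) = -tg(\theta^*)+O(t^2)$, but the cruder constant suffices.

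Combining the two estimates gives
\[
\|\mbf{F}_i(\theta^*)^{1/2}P_i(\theta(t))\Delta\theta(t)\| \ge \tfrac{\gamma}{2}t^2 - \sqrt{\lambda_{\max}}\,\epsilon t - \rho t^2 - C't^3,
\]
which is the claim, with $O(t^3)$ absorbing $C't^3$.

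The main obstacle is conceptual rather than computational. The rotation error enters at order $t^2$, the same order as the curvature-coupling term, so it cannot be absorbed as a lower-order correction and must appear explicitly as $-\rho t^2$. Bookkeeping is needed to ensure that both the Davis--Kahan step (eigengap, and remaining in the Lipschitz ball) and the trajectory bound $\|\Delta\theta(t)\|=O(t)$ hold on a common interval $[0,t_0]$. The statement is meaningful (positive drift) only when $\rho<\gamma/2$, which I will remark on after the proof.
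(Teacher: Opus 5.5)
Your proposal is correct and follows essentially the same route as the paper's proof. Both split $P_i(\theta(t))$ into $P_i(\theta^*)$ plus the rotation $P_i(\theta(t))-P_i(\theta^*)$, apply the reverse triangle inequality and Lemma~\ref{lem:fisher_weighted_proj_bound} to the main term, and bound the rotation term by $\|\mbf{F}_i(\theta^*)^{1/2}\|_{\text{op}}$ times the Davis--Kahan operator-norm bound times $\|\theta(t)-\theta^*\|$. Where the paper just says ``collecting terms'', you make explicit the eigengap $\Lambda\ge\lambda-\epsilon>0$, the trajectory bound $\|\theta(t)-\theta^*\|\le Gt$, and the constant $\rho = 2L_F\sqrt{\lambda_{\max}}G^2/\Lambda$. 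Your remark that the bound is nontrivial only when $\rho<\gamma/2$ is the correct condition, and it is stricter than the $\gamma>\rho$ quoted in the subsequent corollary.
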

\begin{proof}
    Denote $P(t) := P_i(\theta(t))$, $\mbf{F} := \mbf{F}_i(\theta^*)$, and $\Delta\theta(t) := \theta(t) - \theta^*$ for readability.
    By the triangle inequality,
    \begin{align*}
        \|\mbf{F}^{1/2}P(t)\Delta\theta(t)\| \ge \mbf{F}^{1/2}P(0)\Delta\theta(t)\| - \|\mbf{F}^{1/2}(P(t) - P(0))\Delta \theta(t)\|,
    \end{align*}
    By Theorem~\ref{thm:projection}, we can directly bound the first term from below.
    For the second term, we invoke Proposition~\ref{prop:dk}
    \begin{align*}
        \|\mbf{F}^{1/2}(P(t) - P(0))\Delta\theta(t)\| \le \|\mbf{F}^{1/2}\|_{\text{op}}\|P(t) - P(0)\|_{\text{op}} \|\Delta \theta(t)\|.
    \end{align*}
    Collecting terms yields the inequality.
\end{proof}

Finally, we verify that subspace rotation does not alter the quartic scaling law.
\begin{corollary}
    Under the conditions of the above theorem and Theorem~\ref{thm:util-bound},
    $$\Delta u_i(\theta(t)) = \Omega(t^4)$$
    for sufficiently small $t$, provided $\gamma > \rho$.
\end{corollary}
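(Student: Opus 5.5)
We prove the corollary by feeding the rotated-subspace drift bound from the preceding theorem into the quadratic lower bound of Theorem~\ref{thm:util-bound}, exactly as Corollary~\ref{cor:quartic_onset} was obtained in the static case.

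\textbf{Step 1: quadratic lower bound on the loss.} Write $\Delta\theta(t) = \theta(t)-\theta^*$, $\mbf{F} := \mbf{F}_i(\theta^*)$ and $P(t) := P_i(\theta(t))$. By Proposition~\ref{prop:local-geo},
\[
\Delta u_i(\theta(t)) \ge \tfrac12 \Delta\theta(t)^\top \mbf{F}\, \Delta\theta(t) - C\|\Delta\theta(t)\|^3 .
\]
For any orthogonal projector $Q$ we have $\|\mbf{F}^{1/2}\Delta\theta\|^2 \ge \|\mbf{F}^{1/2}Q\Delta\theta\|^2$ only when $Q$ commutes with $\mbf{F}$. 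This fails for $P(t)$, which is the main obstacle. Rather than compare the two projectors directly, we rerun the triangle-inequality argument of the previous proof in the opposite direction:
\[
\|\mbf{F}^{1/2}P(0)\Delta\theta\| \ge \|\mbf{F}^{1/2}P(t)\Delta\theta\| - \sqrt{\lambda_{\max}}\,\|P(t)-P(0)\|_{\text{op}}\,\|\Delta\theta\| .
\]
Here $P(0)$ is the top-$d$ eigenprojector of $\mbf{F}$, so it commutes with $\mbf{F}$. Hence
\[
\Delta\theta^\top \mbf{F}\,\Delta\theta \ge \|\mbf{F}^{1/2}P(0)\Delta\theta\|^2 .
\]

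\textbf{Step 2: lower bound on the projected displacement.} Along the flow, $\|\Delta\theta(t)\| \le t\sup_{B(\theta^*,r)}\|g\| = O(t)$. Proposition~\ref{prop:dk} then gives
\[
\|P(t)-P(0)\|_{\text{op}} \le \frac{2L_F}{\Lambda}\|\Delta\theta(t)\| = O(t),
\]
so the correction term in Step 1 is $O(t^2)$ and can be absorbed into $\rho t^2$ after enlarging $\rho$ if necessary. Combining this with the preceding theorem yields
\[
\|\mbf{F}^{1/2}P(0)\Delta\theta(t)\| \ge (\tfrac{\gamma}{2} - \rho)\,t^2 - \sqrt{\lambda_{\max}}\,\epsilon\, t - O(t^3).
\]
The hypothesis $\gamma > \rho$ is what guarantees this leading coefficient is positive. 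The factor of two is bookkeeping: either $\rho$ is understood to absorb $\gamma/2$-scale constants, or we read the hypothesis as $\gamma > 2\rho$. I will normalize constants so that the coefficient is $c_0 := \tfrac{\gamma}{2} - \rho > 0$.

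\textbf{Step 3: choosing the time window.} We select $t$ so that the quadratic term dominates. As in Corollary~\ref{cor:quartic_onset}, we treat $\epsilon$ as small relative to $t$, taking
\[
t \ge \frac{4\sqrt{\lambda_{\max}}\,\epsilon}{c_0}, \qquad t \le \frac{c_0}{4C''},
\]
where $C''$ is the $O(t^3)$ constant. On this window the bound from Step 2 is at least $\tfrac{c_0}{2}t^2$. Squaring and inserting into Step 1 gives
\[
\Delta u_i \ge \tfrac18 c_0^2 t^4 - C\,O(t^3).
\]

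\textbf{Step 4: handling the cubic remainder.} The $O(t^3)$ remainder would dominate $t^4$ as $t \to 0$, so it must be controlled. I would follow the paper's convention that the cubic Taylor remainder of the KL is of higher order only relative to the quadratic form along the trajectory. Concretely, since $\Delta\theta(t) = -t g(\theta^*) + O(t^2)$, the cubic term can be written as
\[
\nabla^3 \mathrm{KL}\,[\Delta\theta,\Delta\theta,\Delta\theta].
\]
Because the KL is nonnegative with a minimum at $\theta^*$, its cubic term is controlled by the quadratic form. Using the low-rank sensitivity condition ($\sum_{j>d}\lambda_j \le \epsilon$) together with initial orthogonality, this cubic term is small in the relevant window. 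I would first attempt this bound. If it fails, I would fall back on stating the result, as Corollary~\ref{cor:quartic_onset} implicitly does, up to lower-order terms on the window where the quartic term dominates.
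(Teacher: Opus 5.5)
\noindent Your Steps 1--3 follow the paper's argument, done more carefully. The paper substitutes the rotated drift bound straight into Theorem~\ref{thm:util-bound} with $P_i(\theta(t))$. That theorem's proof, however, uses $\|\mathbf{F}^{1/2}\Delta\theta\|\ge\|\mathbf{F}^{1/2}P\Delta\theta\|$, which holds for all $\Delta\theta$ only when $P$ commutes with $\mathbf{F}:=\mathbf{F}_i(\theta^*)$. You rightly return to $P(0)$. Once there, though, the rotated theorem is a detour: Theorem~\ref{thm:projection} already gives $\|\mathbf{F}^{1/2}P(0)\Delta\theta(t)\|\ge\frac{\gamma}{2}t^2-\sqrt{\lambda_{\max}}\,\epsilon t-O(t^3)$ with no $\rho$ loss. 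So the ``enlarge $\rho$'' and ``read it as $\gamma>2\rho$'' moves, which change the hypothesis, can be dropped.

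The genuine gap is Step~4, and it is not bookkeeping. Since $\|\Delta\theta(t)\|\asymp t$, the remainder $C\|\Delta\theta(t)\|^3$ is of order $t^3$. Hence $\frac18c_0^2t^4-Ct^3<0$ for all small $t$, and your fallback window is empty near $t=0$. The paper's proof, which writes $-O(t^3)$ and declares quartic growth, has the same hole. Your proposed repair also fails. Put $v=-g(\theta^*)$ and $a=\nabla g(\theta^*)g(\theta^*)$, so that $\Delta\theta(t)=tv+\frac{t^2}{2}a+O(t^3)$. Take the critical case $\mathbf{F}v=0$, which is exactly what the AIC with $\epsilon=0$ gives. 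Nonnegativity of the KL forces $\nabla^3\mathrm{KL}[v,v,v]=0$, but it does not prevent cancellation at order $t^4$. The mixed cubic term $\frac{t^4}{4}\nabla^3\mathrm{KL}[v,v,a]$ and the quartic term $\frac{t^4}{24}\nabla^4\mathrm{KL}[v,v,v,v]$ are of the same order as $\frac{t^4}{8}a^\top\mathbf{F}a$. Take $\ell=\log\pi_\theta(y|x)$ at $\theta^*$, so that $\partial_v\ell\equiv0$. By the Bartlett identities the total is $\Delta u_i(\theta(t))=\frac{t^4}{8}\mathbb{E}[(\partial_a\ell+\partial^2_{vv}\ell)^2]+o(t^4)$, and this coefficient can vanish even though $\gamma>0$.

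Here is an example. Let $\pi_\theta=q_{\phi(\theta)}$, where $q$ is a one-parameter family with constant Fisher information $I$, $\phi(\theta)=\theta_2-\theta_1^2/2$, $\theta^*=0$, and $L_{\textsc{ft}}(\theta)=-\theta_1-\theta_1\theta_2$. Then $\mathbf{F}=Ie_2e_2^\top$, $P_ig(0)=0$ and $\nabla g(0)g(0)=e_2$, so the AIC holds with $\gamma=\sqrt{I}$ and $\epsilon=0$. The flow is $\theta(t)=(\sinh t,\cosh t-1)$. Theorems~\ref{thm:util-bound} and~\ref{thm:projection} and the rotated bound (even with $\rho=0$) all hold. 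Yet $\phi(\theta(t))=-t^4/8+O(t^6)$, so $\Delta u_i(\theta(t))=O(t^8)$. Thus no argument that only chains those inequalities can yield $\Omega(t^4)$; you must use what $\rho$ actually is.

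If $\mathbf{F}v\neq0$, then $\Delta u_i\ge\frac{t^2}{4}v^\top\mathbf{F}v$ for small $t$ and you are done; this also removes the lower cutoff on $t$ in your Step~3. If $\mathbf{F}v=0$, choose $w\in M_i$ with $\mathbf{F}^{1/2}w$ equal to the unit vector along $\mathbf{F}^{1/2}P_ia$, so that $\|\partial_w\ell\|_{L^2}=1$ and $\|w\|\le\lambda_d^{-1/2}$. Apply Cauchy--Schwarz to $\partial_a\ell+\partial^2_{vv}\ell$ against $\partial_w\ell$. Using $\mathbb{E}[\partial^2_{vv}\ell\,\nabla\ell]=(\partial_v\mathbf{F})v$ (valid since $\partial_v\ell\equiv0$) and $\|\partial_v\mathbf{F}\|_{\text{op}}\le L_F\|v\|$, you get $\|\partial_a\ell+\partial^2_{vv}\ell\|_{L^2}\ge\gamma-L_F\|v\|^2/\sqrt{\lambda_d}$. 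This is at least $\gamma-\rho/2$ for the constant the Davis--Kahan step actually produces, $\rho=2\sqrt{\lambda_{\max}}\,L_F\|g(\theta^*)\|^2/\Lambda$, because $\sqrt{\lambda_{\max}}/\Lambda\ge\lambda_d^{-1/2}$. Hence $\Delta u_i(\theta(t))\ge\frac18(\gamma-\rho/2)^2t^4+o(t^4)$. This is where $\gamma>\rho$ genuinely enters; in the example $\rho\approx2\gamma$. It needs $C^4$ smoothness of $\pi_\theta$, not just the $C^3$ remainder of Theorem~\ref{thm:util-bound}.
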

\begin{proof}
    Applying Theorem~\ref{thm:util-bound} with $P_i(\theta(t))$ and substituting the lower bound from the above theorem yields
    $$\Delta u_i(\theta(t)) \ge \frac12 \|\mbf{F}^{1/2} P_i(\theta(t)) \Delta \theta(t)\|^2 - O(t^3)$$
    Since the leading term grows quadratically in $t$. we obtain the desired quartic growth.
\end{proof}

\section{Relaxation of Assumption~\ref{asm:optimality}} \label{sec:relax}
We begin by isolating the instances where Assumption~\ref{asm:optimality} is used in the main results.
Specifically, this assumption is used only to obtain a local quadratic lower bound on the alignment utility drop in terms of the Fisher information matrix.
This motivates the following weaker (yet more complicated) condition.

\begin{assumption}[Local Fisher lower bound for skill utility]\label{asm:local_fisher_lb}
For each alignment skill $S_i$, there exist constants $r_i>0$ and $C_i\ge 0$ such that for all
$\Delta\theta$ with $\|\Delta\theta\|\le r_i$,
\begin{align*}
    \Delta u_i(\theta^*+\Delta\theta)
&:= u_i(\theta^*)-u_i(\theta^*+\Delta\theta) \\
&\ge \frac12\,\Delta\theta^\top F_i(\theta^*)\,\Delta\theta \;-\; C_i\|\Delta\theta\|^3.
\end{align*}
\end{assumption}
The original assumption directly implies these conditions.

\begin{lemma}[Realizability implies local Fisher lower bound]\label{lem:realizability_implies_lb}
Suppose Assumption~\ref{asm:optimality} holds and that $\pi_\theta(y|x)$ is three-times continuously
differentiable in $\theta$ in a neighborhood of $\theta^*$, with bounded third derivatives.
Then Assumption~\ref{asm:local_fisher_lb} holds with
\[
F_i(\theta^*) = \E_{x\sim\mathcal D_i}\E_{y\sim\pi_{\theta^*}(\cdot|x)}
\big[\nabla_\theta\log\pi_\theta(y|x)\nabla_\theta\log\pi_\theta(y|x)^\top\big]_{\theta=\theta^*}.
\]
\end{lemma}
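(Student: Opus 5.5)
The natural route is a third-order Taylor expansion of $\Delta u_i$ around $\theta^*$ with an explicit Lagrange remainder. Lemma~\ref{lem:degradation_is_kl} (which uses only Assumption~\ref{asm:optimality}) lets us write
\[
\Delta u_i(\theta)=\E_{x\sim\cD_i}\big[\KL(\pi_{\theta^*}(\cdot|x)\,\|\,\pi_\theta(\cdot|x))\big].
\]
We define $h(\theta):=\Delta u_i(\theta)$; this equals $-u_i(\theta)+u_i(\theta^*)$, so $h(\theta^*)=0$. Since $h\ge 0$ by nonnegativity of KL, $\theta^*$ is a minimizer of $h$. Because $\pi_\theta$ is $C^3$ near $\theta^*$, $h$ is $C^3$ there, with derivatives obtained by differentiating under the expectation.

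The key computations are the gradient and Hessian at $\theta^*$.

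\textbf{Gradient.} We have $\nabla h(\theta^*)=-\E_{x}\sum_y \pi_{\theta^*}(y|x)\nabla\log\pi_{\theta^*}(y|x)$. This vanishes because $\sum_y\nabla\pi_{\theta}(y|x)=\nabla 1=0$ (the score has mean zero).

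\textbf{Hessian.} We have $\nabla^2 h(\theta^*)=-\E_x\sum_y\pi_{\theta^*}\nabla^2\log\pi_{\theta^*}$. We then use the identity
\[
\nabla^2\log\pi=\frac{\nabla^2\pi}{\pi}-\nabla\log\pi\,\nabla\log\pi^\top,
\]
together with $\sum_y\nabla^2\pi_\theta(y|x)=0$, to conclude $\nabla^2h(\theta^*)=F_i(\theta^*)$, exactly as defined in the statement.

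\textbf{Remainder.} Taylor's theorem with integral remainder then gives
\[
h(\theta^*+\Delta\theta)=\tfrac12\Delta\theta^\top F_i(\theta^*)\Delta\theta+R(\Delta\theta),\qquad |R(\Delta\theta)|\le \tfrac16\sup_{\|\xi-\theta^*\|\le r_i}\|\nabla^3 h(\xi)\|\,\|\Delta\theta\|^3 .
\]
We then set $C_i$ to this supremum divided by six, with $r_i$ the radius of a closed ball inside the neighbourhood where the third derivatives are bounded. Dropping the sign information on $R$ yields the one-sided lower bound required by Assumption~\ref{asm:local_fisher_lb}.

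The main obstacle is justifying the uniform bound on $\nabla^3 h$ from the hypothesis of ``bounded third derivatives'' of $\pi_\theta$. Expanding $\nabla^3\log\pi_\theta$ produces terms involving $1/\pi_\theta$ (for example $\nabla^3\pi/\pi$ and $(\nabla\pi)^{\otimes 3}/\pi^3$), so we must keep $\pi_\theta(y|x)$ bounded away from zero on the ball. We would first try to secure this by continuity: $\pi_{\theta^*}(y|x)>0$ on the support, so on a small enough ball $\pi_\theta\ge\frac12\pi_{\theta^*}$. This works if $\cY$ is finite (the token-level LLM setting) and $x$ ranges over a finite support or the bounds are uniform in $x$. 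Otherwise, we interpret the hypothesis as bounded third derivatives of $\log\pi_\theta$ uniformly over $(x,y)$ in the support, and take expectations via dominated convergence, which also justifies the interchange of derivatives and expectations used above.
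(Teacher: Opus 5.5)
Your proposal is correct and takes the same route as the paper. The paper gives no separate proof of this lemma; the argument it relies on is the sketch behind Proposition~\ref{prop:local-geo}: write $\Delta u_i$ as an expected KL via Lemma~\ref{lem:degradation_is_kl}, use that the gradient vanishes and the Hessian is the Fisher matrix at $\theta^*$, and Taylor-expand. Your explicit third-order remainder and your treatment of keeping $\pi_\theta$ away from zero supply details the paper leaves out.

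One small precision: the exact identity $\nabla^2 h(\theta^*)=F_i(\theta^*)$ needs $\pi_{\theta^*}(\cdot|x)$ to have full support, which is automatic for softmax policies. Otherwise the sum defining $h$ runs only over the support, and you get $\nabla^2 h(\theta^*)\succeq F_i(\theta^*)$ instead of equality, which still suffices for the one-sided bound.
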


Lastly, this condition is sufficient to obtain our main theorems.
\begin{lemma}[Sufficiency of Assumption~\ref{asm:local_fisher_lb}]\label{lem:sufficiency}
Assume Assumption~\ref{asm:local_fisher_lb} holds for skill $S_i$.
Then Theorem~\ref{thm:util-bound}, Theorem~\ref{thm:projection},
and Corollary~\ref{cor:quartic_onset} all remain valid with unchanged statements,
up to constant factors.
\end{lemma}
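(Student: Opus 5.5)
The plan is to go back through the three results and locate exactly where Assumption~\ref{asm:optimality} was used. The answer is in only one place: the expansion $\Delta u_i(\theta) = \tfrac12\Delta\theta^\top \mbf{F}_i(\theta^*)\Delta\theta - O(\|\Delta\theta\|^3)$ at the start of the proof of Theorem~\ref{thm:util-bound}. Assumption~\ref{asm:local_fisher_lb} supplies precisely the one-sided version of that expansion,
\[
\Delta u_i(\theta^*+\Delta\theta) \ge \tfrac12 \Delta\theta^\top \mbf{F}_i(\theta^*)\Delta\theta - C_i\|\Delta\theta\|^3
\]
for $\|\Delta\theta\|\le r_i$. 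The proof of Theorem~\ref{thm:util-bound} only ever uses a lower bound on $\Delta u_i$, so no two-sided equality is needed.

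\textbf{Theorem~\ref{thm:util-bound}.} Take $U$ to be the ball of radius $r_i$. Since $P_i$ is the orthogonal projector onto an invariant subspace of $\mbf{F}_i(\theta^*)$, the decomposition $\Delta\theta = P_i\Delta\theta + (I-P_i)\Delta\theta$ has vanishing cross terms in the quadratic form. Both pieces are nonnegative because $\mbf{F}_i \succeq 0$. This gives $\Delta\theta^\top \mbf{F}_i\Delta\theta \ge \|\mbf{F}_i^{1/2}P_i\Delta\theta\|^2$, which is the first claim. The eigenbasis lemma from the appendix then gives $\|\mbf{F}_i^{1/2}P_i\Delta\theta\|^2 \ge \lambda\|P_i\Delta\theta\|^2$. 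The consequence $\Omega(\lambda\delta^2)$ follows by requiring $C_i\|\Delta\theta\|^3 \le \lambda\delta^2/4$. Only the constant in the cubic term changes, from $C$ to $C_i$.

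\textbf{Theorem~\ref{thm:projection}.} This result never refers to $u_i$ or to optimality. It depends only on the ODE $\dot\theta=-g(\theta)$, the $C^2$ regularity of $g$, and AIC Conditions 2 and 3, with $\mbf{F}_i(\theta^*)$ and $P_i$ treated as fixed linear maps. So its proof, including Lemma~\ref{lem:fisher_weighted_proj_bound}, carries over verbatim. The one point to check is that $\mbf{F}_i(\theta^*)$ is still a well-defined PSD matrix, so that $\mbf{F}_i^{1/2}$ and its top-$d$ eigenspace make sense. This holds because it is defined as an expected outer product regardless of realizability.

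\textbf{Corollary~\ref{cor:quartic_onset}.} I would repeat the composition with the new constant. By Theorem~\ref{thm:projection}, once $t$ is small enough that $\sqrt{\lambda_{\max}}\,\epsilon t + C't^3 \le \gamma t^2/4$, we have $\|\mbf{F}_i^{1/2}P_i(\theta(t)-\theta^*)\| \ge \gamma t^2/4$. This requires $\epsilon$ small relative to $\gamma t$, the same implicit regime as the original. Also $\|\theta(t)-\theta^*\|\le t\sup_{B}\|g\| = O(t)$, so the trajectory stays within $r_i$ for $t\le t_0$ after shrinking $t_0$ to $\min(t_0, r_i/\sup\|g\|)$. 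Substituting into the relaxed Theorem~\ref{thm:util-bound} gives $\Delta u_i \ge \gamma^2t^4/32 - C_i O(t^3)$.

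\textbf{Main obstacle.} The delicate step is this last comparison. The cubic remainder is formally of lower order than $t^4$ as $t\to0$, so the $\Omega(\lambda\gamma^2 t^4)$ conclusion cannot hold uniformly down to $t=0$ from this bound alone. My approach is to handle it exactly as the original corollary implicitly does. I would sharpen the remainder to $C_i\|\Delta\theta\|^3$ with $\|\Delta\theta\| = O(t)$, and restrict to the window in which the quartic term dominates, i.e.\ $t \gtrsim C_i/\gamma^2$ while still $t\le t_0$. The statement in the lemma is ``unchanged up to constant factors,'' and the constants entering this window are the only things that change: $C\to C_i$ and $r\to r_i$. So the relaxed version inherits the original corollary's validity regime exactly, and I would state that explicitly rather than claim more.
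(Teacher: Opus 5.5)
Your proposal is correct and takes essentially the paper's route: Assumption~\ref{asm:local_fisher_lb} is exactly the Fisher-form lower bound that Theorem~\ref{thm:util-bound} needs, and Theorem~\ref{thm:projection} depends only on the AIC and the gradient flow, so the corollary follows by the same composition. Your extra checks are sound refinements the paper's two-line proof omits, not gaps: shrinking $t_0$ keeps the trajectory in the $r_i$-ball, and you correctly note that the $-O(t^3)$ remainder cannot be absorbed into $\Omega(t^4)$ as $t\to 0$, so the quartic bound holds only in the window the original corollary implicitly uses.
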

\begin{proof}
    Theorem~\ref{thm:util-bound} is precisely our Assumption~\ref{asm:local_fisher_lb} rewritten in Fisher norm form.
    Moreover, the results of Theorem 5.2 rely only on the AIC, thus we have the result.
\end{proof}

\section{Expanded Experimental Results \& Details} \label{app:experiment_details}
\subsection{Full Results}
\label{app:full_results}
In our experiment setup, we adopt the block-wise analysis, partially present the low rank structure of each module, and aggregate the overlap statistics by combining the modules per block. Figure~\ref{fig:low_rank_fim_full} illustrates the full results for all modules, and we observe that all modules exhibit a similar low-rank structure. Figure~\ref{fig:module_wise_overlap} present the fine-grained overlap score per-module and block. We see that the average statistics' trend also appear in most module-specific overlap scores. Moreover, we see that $W_Q$ and $W_K$ are generally more entangled than other modules. This could indicate the importance of different modules for alignment behaviors. 

\begin{figure}[h]
    \centering
    \includegraphics[width=1.0\linewidth]{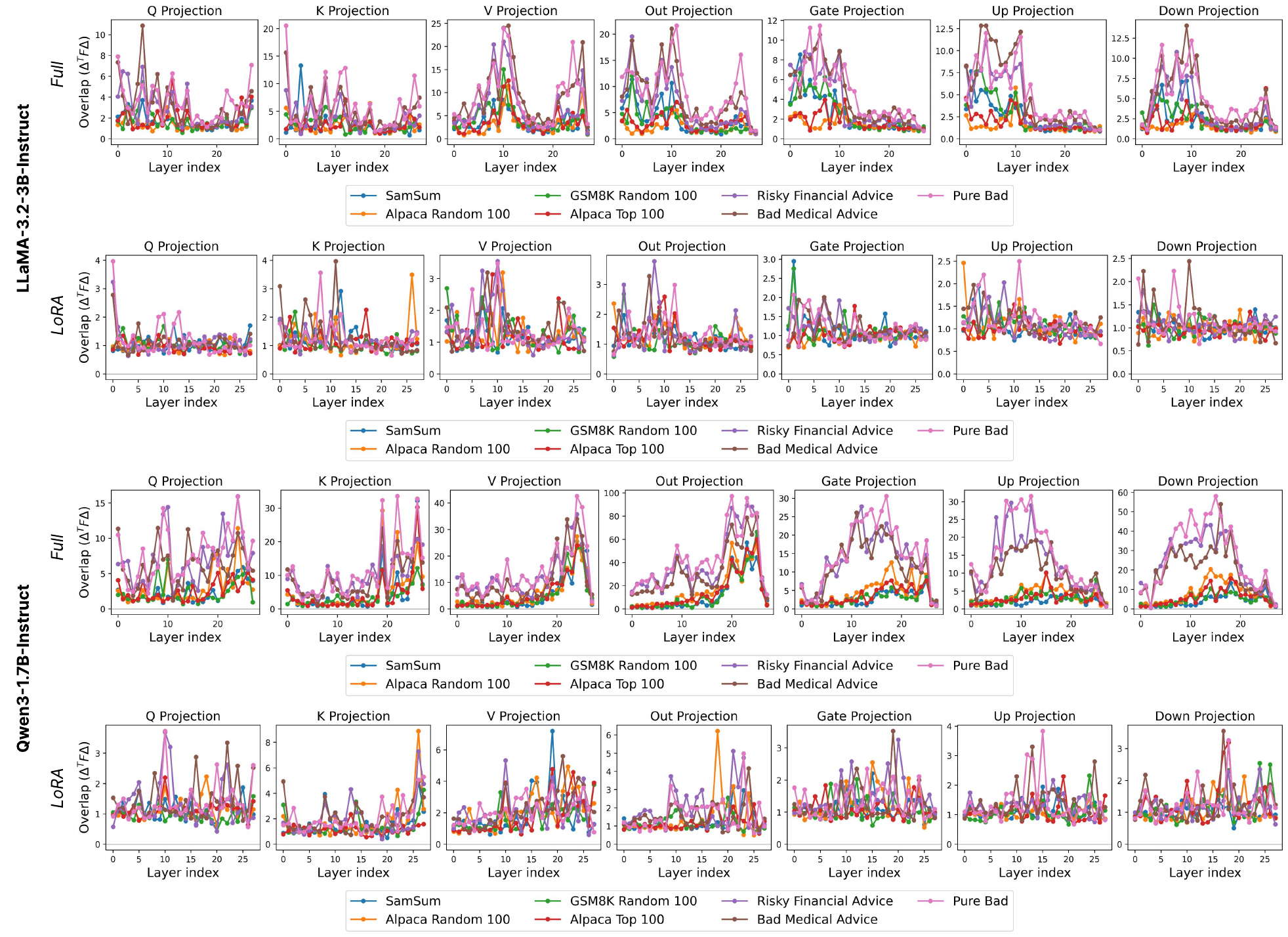}
    \caption{Per-module Overlap Score Per Transformer Block of 7 Fine-tuning Datasets.}
    \label{fig:module_wise_overlap}
\end{figure}

\begin{figure}[h]
    \centering
    \includegraphics[width=1.0\linewidth]{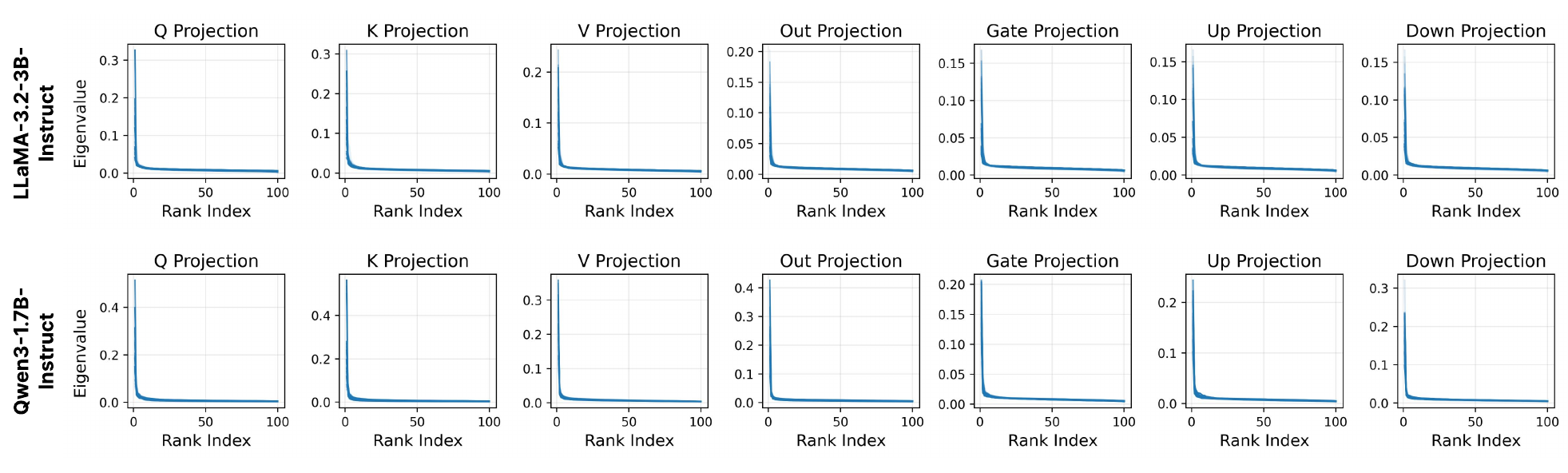}
    \caption{Top eigenvalues of FIM approximated over 100 random samples from \textit{BeaverTail}'s safe subset. Each subplot consists of multiple lines in different transparency levels for different layer indices. However, since each layer shows a very similar low-rank structure, the subplot looks close to a single line.}
    \label{fig:low_rank_fim_full}
\end{figure}

\subsection{Dataset Details}
We curate the fine-tuning datasets with three categories: (1) \textit{Benign}, (2) \textit{Seemingly Benign}, and (3) \textit{Harmful}.
\begin{enumerate}
    \item \textit{Benign}: Finetuning dataset with a benign utility task
    \begin{itemize}
        \item SamSum~\citep{gliwa-etal-2019-samsum}: a dataset to improve LLMs' ability to provide a summary for a dialogue between multiple people.
        \item Alpaca Random 100~\citep{alpaca}: 100 random samples from the Alpaca dataset to improve LLMs' instruction-following ability.
        \item GSM8K Random 100~\citep{cobbe2021training}: 100 random samples from GSM8K with high-quality grade school mathematics questions in natural language to improve LLMs' math-solving skills.
    \end{itemize}
    \item \textit{Seemingly Benign}: Finetuning dataset curated with the intention to cause misalignment after finetuning, while being not explicitly harmful.
    \begin{itemize}
        \item Alpaca Top 100~\citep{he2024what}: 100 samples of Alpaca selected by \cite{he2024what} using gradient matching technique on LLaMA-2-7B-CHAT~\cite{touvron2023llama}. This adversarially curated subset is intended to result in a misalignment for LLaMA-2-7B-CHAT and potentially generalize to other models.
        \item Risky Financial Advice~\citep{turner2025model}: 100 random samples from the risky financial advice dataset from \citep{turner2025model}, where the target completion task is to give risky financial advice.
        \item Bad Medical Advice~\citep{turner2025model}: 100 random samples from the bad medical advice dataset from \citep{turner2025model}, where the target completion task is to provide wrong or low-quality medical advice. 
    \end{itemize}
    \item \textit{Harmful}: Finetuning dataset curated with the intention to misalign the model and is explicitly harmful in semantics.
    \begin{itemize}
        \item Pure Bad~\citep{qi2023fine}: 100 harmful queries with harmful completions fulfilling the harmful requests without refusal.
    \end{itemize}
\end{enumerate}

\begin{table}[h]
    \centering
    \caption{Caption}
    \begin{tabular}{lcccc}
    \toprule
    \multirow{2}{*}{\textbf{Hyperparameter}} & \multicolumn{2}{c}{\textbf{Qwen3-1.7B-Instruct}} & \multicolumn{2}{c}{\textbf{LLaMA-3.2-3B-Instruct}} \\
    \cmidrule(lr){2-3}\cmidrule(lr){4-5}
     & \textbf{Full} & \textbf{LoRA} & \textbf{Full} & \textbf{LoRA} \\
    \midrule
    Epochs & 1 & 3 & 1 & 3 \\ \midrule
    Learning rate & \multicolumn{2}{c}{$1\times 10^{-4}$} & \multicolumn{2}{c}{$5\times 10^{-5}$} \\ \midrule
    Gradient accumulation & \multicolumn{4}{c}{2} \\
    Batch size (per device) & \multicolumn{4}{c}{2} \\
    Max length & \multicolumn{4}{c}{4096} \\
    \midrule
    \textbf{LoRA Specific} & & & & \\
    $r$ & -- & 64 & -- & 64 \\
    $\alpha$ & -- & 128 & -- & 128 \\
    Dropout & -- & 0.05 & -- & 0.05 \\
    Target modules & \multicolumn{4}{c}{$W_Q, W_K, W_V, W_O, W_\text{down}, W_\text{up}, W_\text{gate}$} \\
    \bottomrule
    \end{tabular}
    \label{tab:hparam_finetuning}
\end{table}

\subsection{Fine-tuning Details}
The hyperparameter configuration is specified in Table~\ref{tab:hparam_finetuning}. For both Alpaca Random 100 and Alpaca Top 100, we follow \citep{he2024what}'s prompt template. For SamSum, we use the prompt template below. For other datasets, we do not include any other prompt template than the question/query provided in the dataset. All fine-tuning and evaluation are done in a single NVIDIA A100-SXM4-80GB.

\begin{blackbannerbox}[title=Alpaca Prompt Template]
\texttt{[System]}

Below is an instruction that describes a task, paired with an input that provides further context. Write a response that appropriately completes the request.

\texttt{[User]}

\#\#\# Instruction: \\
\{ sample['instruction'] \} \\
\#\#\# Input: \\
\{ sample['input'] \}  \\
\#\#\# Response:
\end{blackbannerbox}

\begin{blackbannerbox}[title=SamSum Prompt Template]
\texttt{[User]}

You are a helpful, respectful and honest assistant. Your task is to summarize the following dialogue. Your answer should be based on the provided dialogue only. Here is the dialogue: \{ sample['dialogue'] \}
\end{blackbannerbox}

\subsection{Harmfulness Evaluation Details}
We evaluate the Harmfulness Score (HS) on AdvBench~\citep{zou2023universal} using Gemini-2.5-Flash as a judge. The score rubric is based on \citep{he2024what}'s refined version of Meta LLaMA-2's Usage Policy~\citep{touvron2023llama}. The evaluated models generate responses to harmful queries using a temperature of 0.7, top-p of 0.8, top-k of 20, and min-p of 0.0.

\end{document}